\documentclass{article} % For LaTeX2e
\usepackage{iclr2026_conference,times}

% Optional math commands from https://github.com/goodfeli/dlbook_notation.
%%%%% NEW MATH DEFINITIONS %%%%%

\usepackage{amsmath,amsfonts,bm}

% Mark sections of captions for referring to divisions of figures

% Highlight a newly defined term

% Figure reference, lower-case.

% Figure reference, capital. For start of sentence

% Section reference, lower-case.

% Section reference, capital.

% Reference to two sections.

% Reference to three sections.

% Reference to an equation, lower-case.
\def\eqref#1{equation~\ref{#1}}
% Reference to an equation, upper case

% A raw reference to an equation---avoid using if possible

% Reference to a chapter, lower-case.

% Reference to an equation, upper case.

% Reference to a range of chapters

% Reference to an algorithm, lower-case.

% Reference to an algorithm, upper case.

% Reference to a part, lower case

% Reference to a part, upper case

\def\1{\bm{1}}

% Random variables

% rm is already a command, just don't name any random variables m

% Random vectors

% Elements of random vectors

% Random matrices

% Elements of random matrices

% Vectors

% Elements of vectors

% Matrix

% Tensor
\DeclareMathAlphabet{\mathsfit}{\encodingdefault}{\sfdefault}{m}{sl}
\SetMathAlphabet{\mathsfit}{bold}{\encodingdefault}{\sfdefault}{bx}{n}

% Graph

% Sets

% Don't use a set called E, because this would be the same as our symbol
% for expectation.

% Entries of a matrix

% entries of a tensor
% Same font as tensor, without \bm wrapper

% The true underlying data generating distribution

% The empirical distribution defined by the training set

% The model distribution

% Stochastic autoencoder distributions

 % Laplace distribution

\newcommand{\R}{\mathbb{R}}

\newcommand{\Var}{\mathrm{Var}}

% Wolfram Mathworld says $L^2$ is for function spaces and $\ell^2$ is for vectors
% But then they seem to use $L^2$ for vectors throughout the site, and so does
% wikipedia.

 % See usage in notation.tex. Chosen to match Daphne's book.

\usepackage{hyperref}
\usepackage{url}

\usepackage{subcaption} 
\usepackage{float}

\usepackage[utf8]{inputenc}
\usepackage[T1]{fontenc}
\usepackage{hyperref}
\usepackage{url}
\usepackage{booktabs}
\usepackage[table,svgnames]{xcolor}
\usepackage{amsmath, amssymb, amsthm, bm}
\usepackage{algorithm}
\usepackage{algpseudocode}
\usepackage{ulem}

\theoremstyle{plain}
\newtheorem{theorem}{Theorem}
\newtheorem{lemma}{Lemma}

\newtheorem{corollary}{Corollary}

\theoremstyle{remark}
\newtheorem*{remark}{Remark}

\theoremstyle{definition}
\newtheorem{definition}{Definition}
\usepackage{nicefrac}
\usepackage{microtype}
\usepackage{xcolor}
\usepackage{graphicx}
\usepackage{subcaption}
\usepackage{wrapfig}
\usepackage{multirow}
\usepackage{tcolorbox}
\newcommand{\EE}{\mathbb{E}}
\renewcommand{\Var}{\mathrm{Var}}

\usepackage{makecell}
\usepackage{enumitem}
\usepackage{array}
\usepackage{colortbl}

\usepackage{pgfplots}
\pgfplotsset{compat=1.18}

% 算法包
\usepackage{algorithm}
\usepackage{algpseudocode}

% tikz 圆圈命令
\usepackage{tikz}

% 勾叉符号命令
\usepackage{pifont}

% 作者注释命令

% 表格格式控制

% \title{Towards Fast LLM Fine-tuning through Zeroth-Order Optimization with Projected Gradient Aligned and Variance-Reduced Low-Dimensional Perturbations}

\title{Towards Fast LLM Fine-tuning through Zeroth-Order Optimization with Projected Gradient-Aligned Perturbations}

\author{
  Zhendong Mi\textsuperscript{1}, 
  Qitao Tan\textsuperscript{2}, 
  Grace Li Zhang\textsuperscript{3}, 
  Zhaozhuo Xu\textsuperscript{1}, 
  \textbf{Geng Yuan}\textsuperscript{2}, 
  \textbf{Shaoyi Huang}\textsuperscript{1}\thanks{Corresponding author.} \\[4pt]
  % \textsuperscript{\dag}These authors contributed equally. \\
  \textsuperscript{1}Stevens Institute of Technology
  \textsuperscript{2}University of Georgia  
  \textsuperscript{3}Technical University of Darmstadt \quad \\
  \texttt{\small \{zmi2, zxu79, shuang59\}@stevens.edu}, 
   \texttt{\small \{qitaotan, geng.yuan\}@uga.edu}, \\
   \texttt{\small \{grace.zhang\}@tu-darmstadt.de}
}

% The \author macro works with any number of authors. There are two commands
% used to separate the names and addresses of multiple authors: \And and \AND.
%
% Using \And between authors leaves it to \LaTeX{} to determine where to break
% the lines. Using \AND forces a linebreak at that point. So, if \LaTeX{}
% puts 3 of 4 authors names on the first line, and the last on the second
% line, try using \AND instead of \And before the third author name.

\iclrfinalcopy % Uncomment for camera-ready version, but NOT for submission.
\begin{document}

\maketitle

\begin{abstract}
Fine-tuning large language models (LLMs) using zeroth-order (ZO) optimization has emerged as a promising alternative to traditional gradient-based methods due to its reduced memory footprint requirement. However, existing ZO methods suffer from high variance in gradient estimation, leading to slow convergence and suboptimal performance on large-scale models. In this work, we propose P-GAP,
a fast LLM fine-tuning approach through zeroth-order optimization with \underline{P}rojected \underline{G}radient-\underline{A}ligned \underline{P}erturbations.
% , which reduces the variance in gradient estimation of ZO updates via low-dimensional perturbations that are aligned with the gradient direction in the subspace of gradient, thereby achieving faster convergence.
Specifically, 
% P-GAP uses corrected projected gradient directions with low-dimensional perturbations to  reduce estimated gradient variance and accelerate convergence. 
% 
we first estimate a low-dimensional gradient space and then align perturbations in projected gradients' direction within the space. 
This approach enables reduced the number of perturbed parameters and decreased variance, therefore accelerated convergence for LLM fine-tuning. Experiments on LLMs show that P-GAP consistently surpasses the baselines, achieving up to 6\% increase in accuracy on classification tasks and up to 12\% higher accuracy on generation tasks, with up to about 81\% less training iterations and 70\% less GPU hours.
% Moreover, our approach achieves these gains with minimal memory overhead and remains highly effective under LoRA-based parameter-efficient tuning. 
These results demonstrate that P-GAP enables fast, scalable, and resource-efficient ZO LLM fine-tuning.
\end{abstract}

\section{Introduction}

Fine-tuning (FT) large language models (LLMs)~\citep{hu2021lora, dettmers2023qlora, gu2021efficient} for specific tasks or datasets has become a common practice in modern machine learning. However, as model size and complexity scale, fine-tuning incurs substantial memory overhead, which severely limits its scalability and makes it inaccessible to users with constrained computational resources~\citep{tan2025harmony, zhao2024second}. To alleviate this issue, parameter-efficient fine-tuning (PEFT) methods have been proposed \citep{ li2021prefix, dettmers2023qlora, zhao2024galore}, which update only a small subset of parameters while freezing the majority of the model. These approaches drastically reduce GPU memory footprint and storage cost while achieving performance comparable to full FT. However, despite their efficiency, PEFT methods still require computing and storing full gradients and intermediate activations during backpropagation, which introduces significant memory overhead \citep{malladi2023fine, liu2024sparse}.

To address the challenge, zeroth-order (ZO) optimization has emerged as a promising solution \citep{zhang2024revisiting, malladi2023fine, mi2025kerzookernelfunctioninformed}, which estimates gradients using only forward passes. By leveraging randomized perturbations to approximate gradient directions, ZO completely removes the need to store large gradient tensors and intermediate activations, which substantially reduces memory usage. This advantage makes ZO especially appealing for extremely large models where backward passes dominate GPU memory consumption. When combined with parameter-efficient strategies, ZO-based fine-tuning offers a scalable and resource-friendly framework for adapting high-capacity models under tight memory constraints while maintaining competitive performance \citep{tan2025harmony}.
Despite the advantages of zeroth-order optimization in reducing memory overhead, these benefits often come at the expense of longer computational time (e.g., GPU hours) and decreased accuracy compared to first-order approaches \citep{li2024addaxutilizingzerothordergradients, gautam2024variancereducedzerothordermethodsfinetuning}. 
% Compared to first-order approaches, ZO methods typically require more iterations and longer computational time (e.g., GPU hours) to reach comparable performance levels~\citep{tan2025harmony}. 

% To understand the underlying causes of this computational expense and develop strategies to mitigate it, existing works can be broadly categorized into two directions.
% % In the current field of ZO research, many studies have proposed potential explanations for this phenomenon. 
% One major perspective
% attributes the inefficiency to
% % is that 
% the massive parameter space of LLMs, coupled with the heterogeneous curvature and complex structure of the loss landscape ~\citep{sagun2016eigenvalues, ghorbani2019investigation, zhang2020adaptive},
% % This complexity 
% making it difficult for traditional ZO algorithms to efficiently identify optimal update directions and converge to the global optimum. 
% In extreme cases, when the loss landscape contains a high proportion of negative curvature directions, finding an optimal solution becomes nearly impossible \citep{achour2024loss}.
% The former question has been discussed in some recent research 
% 
% Recent research has addressed the challenge \citep{zhang2024revisiting, zhang2025zeroth}
% % , where the authors 
% by leveraging second-order curvature information to guide the final gradient updates in ZO methods, effectively enhancing the fine-tuning performance. 
% 
% Another line of research highlights that ZO methods, which

Existing works show that variance in the zeroth-order gradient estimation, attributing to the random perturbations, can be a factor for the longer computational time \cite{chen2024enhancing, park2025unraveling}. The larger variance in the estimation of the ZO gradient can lead to suboptimal accuracy and slower
\begin{wrapfigure}[17]{r}{0.6\textwidth}
% \vspace{-0.2in}
\centering
\includegraphics[width=0.95\linewidth]{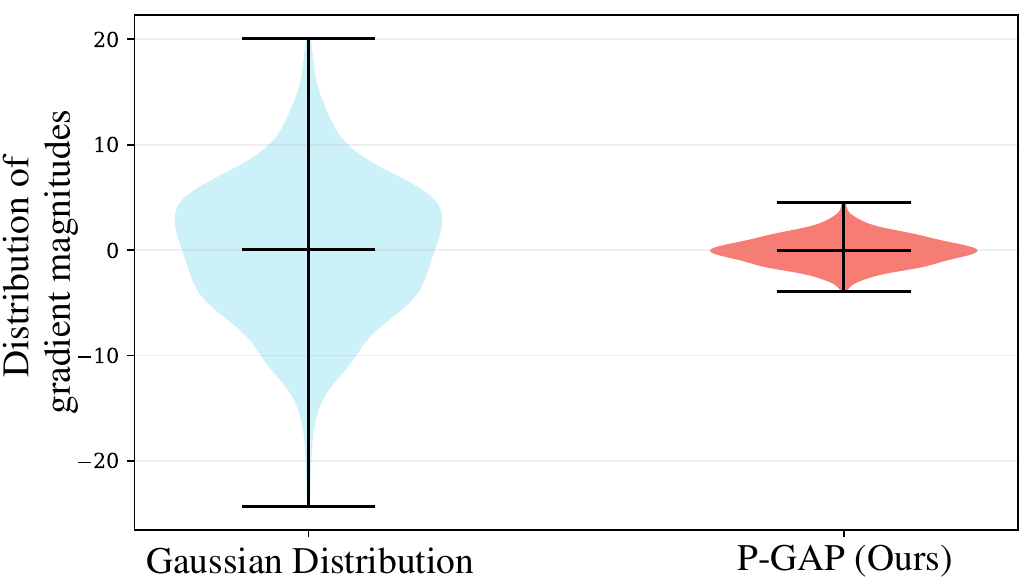}
\vspace{-0.1in}
\captionsetup{justification=raggedright,singlelinecheck=false}
\caption{Estimation of directional derivative magnitudes on the $V$ matrix from the first Transformer layer of the OPT-2.7B model, using perturbations sampled from a standard Gaussian distribution and from P-GAP}
\label{fig:distribution}
% \vspace{-0.1in}
\end{wrapfigure}
convergence rates compared to first-order methods, making ZO-based fine-tuning less stable and resource-intensive \citep{kornilov2023accelerated, zhang2024revisiting, lobanov2023accelerated}. 
Existing works in LLM fine-tuning such as \citep{ohta2020sparseperturbationsimprovedconvergence} and \citep{malladi2023fine} aim to reduce the variance via increasing the number of perturbations, which will lead to prolong training time.
%
%
%
% While approaches to reduce the computational overhead has been extensively explored from challenges posed in the first direction~\citep{zhang2024revisiting, zhang2025zeroth}, leaving limited optimization space. 
% In this work, we will revisit existing efforts and challenges in the second direction and propose our method to address the challenges.
% 
% \textcolor{blue}{Existing efforts in zeroth-order optimization for LLM fine-tuning, such as MeZO \citep{malladi2023fine} and HiZOO \citep{zhao2024second}, 

% the update step is typically based on a random Gaussian perturbation sampled with the same dimensionality as the weight matrix, causing a large variance in the estimation of gradients in ZO optimization \citep{ghadimi2013stochastic, duchi2015optimal}. 
% However, these methods do not account for whether the chosen perturbation directions align well with true gradient directions, which will cause large variance in estimation of gradients in ZO optimization.
% MeZO \citep{malladi2023fine} tends to use multiple perturbations to reduce the variance of estimations but increasing perturbations will certainly increase the time for the entire process of fine-tuning. 
% Some works about zeroth-order optimization 

Inspired by \citep{ma2025revisiting,kozak2023zeroth} which find 
% different choice of or 
anisotropic perturbations (i.e., the magnitude of perturbations is larger along certain directions and is smaller along others, rather than being uniform in all directions) can potentially help relieve the variance issue in ZO optimization theoretically, we raise the following question: 

\textbf{Q1: \textit{For LLM finetuning on larger-scale models, can we find the proper perturbation directions, thereby reducing the variance of ZO gradient estimation and finally accelerating convergence with negligible accuracy loss?}}

Inspired by \citep{wang2018stochastic, zhang2024dpzero, yue2023zeroth}, which identify that perturbing the full parameter space can further amplify the variance in gradient estimation, as the variance scales proportionally with the parameter dimension $d$, we naturally pose a research question:

\textbf{Q2: \textit{Can we further reduce variance in gradient estimation by 
% not only correcting directions but also 
decreasing the parameter space 
% the number of weight parameters 
that require perturbation-based gradient estimation?}}
% since the variance term in ZO methods grows with the dimension 
% $d$?

% and , we identify two problems in LLM finetuning, 
% mention that 

% Moreover, 
% several 
% theocratically analysis 
% \citep{ma2025revisiting,kozak2023zeroth} show that
% maybe some 

% Motivated by this, 
% our objective is to both correct the update directions and reduce the number of perturbed parameters in order to achieve lower-variance estimates.
% Therefore, it is natural for us to raise the following two questions.
% With respect to both anisotropic perturbations and the parameter dimensionality of the perturbations,
% we naturally raise the following two questions which can potentially guide us to reduce variance, therefore reducing the computational time and leading to a faster convergence of LLM fine-tuning:

% However, these studies still leave two issues to be further examined. 

To answer the two questions, we propose a fast LLM finetuing approach through zeroth-order optimization with \underline{P}rojected \underline{G}radient-\underline{A}ligned \underline{P}erturbations (P-GAP), which reduces the variance in gradient estimation of ZO updates via low-dimensional perturbations that are aligned with the gradient direction in the subspace of gradient, thereby achieving faster convergence.
% incorporates corrected projected
% gradient directions with low-dimensional perturbations for variance reduction in gradient estimations, which is designed to  in zeroth-order optimization problems, thereby achieving faster convergence. 
Figure \ref{fig:distribution} shows the magnitude of gradient estimation 
% of directional derivative magnitudes 
on the attentions matrix $V$ in the first Transformer layer of the OPT-2.7B model, based on perturbations sampled from a standard Gaussian distribution and from P-GAP. 
It can be observed that the value of estimated gradients are more stable and less dispersed for P-GAP, which indicates a smaller variance. Our contributions can be summarized as follows:

\begin{itemize}[leftmargin=*]
    \item We propose a novel ZO-based LLM fine-tuning framework, P-GAP, which estimates a low-dimensional gradient space and aligns perturbations in projected gradients' direction within the space. 
    This design can not only allow the perturbation aligned in the most informative direction but also effectively reduce the dimensionality of gradient estimation, therefore reducing variance and accelerate convergence.

    % \item We provide the convergence analysis of the proposed method. 
    
    % for displaying the variance of ZO gradient estimation has 

    \item We provide theoretical analysis on that the variance of ZO gradient estimation linearly increases with the dimension of weight matrix which need perturbations for gradient estimation in LLMs, 
    %and further theoretically show that
    and further show that
    %which also demonstrates why
    % the detailed proof of convergence and effectiveness of
    P-GAP can reduce the variance with the proposed low-dimensional gradient space estimation. Moreover, we provide the convergence analysis of P-GAP.
    % in the low-dimensional space with lower variance.
    
    % that existing zeroth-order (ZO) fine-tuning methods suffer from large gradient estimation variance, especially when applied to high-dimensional parameter spaces in large language models (LLMs), which severely slows down convergence and give the detailed instruction to prove the convergence and effectiveness for accelerated zeroth-order fine-tuning of LLMs of P-GAP.
% into the zeroth-order optimization for LLMs fine-tuning. Our method can generally reduce the total GPU hours for convergence and do not cause drop of evaluation accuracy. 
    % \item We provide the theoretical analysis on how existing zeroth-order method introduce lower-order bias in gradient estimation of LLMs fine-tuning and how our method can mitigate the issue. 
    % \item We 
    % Moreover, we show how to choose the kernel function and analyze the variance of estimation of kernel function. 
    \item We conduct extensive experiments on both encoder-only models (e.g., RoBERTa-large) and decoder-based LLMs (e.g., OPT-2.7B/6.7B and LLaMA-3-3B/8B). Results show that P-GAP achieves up to 6\% accuracy gains over the baselines, while achieving 5.2$\times$ speedup in training and more than 61 minutes less wall-clock time.
    % , under both full fine-tuning and LoRA settings.
    % for  and the evaluation accuracy can exceed both MeZO and HiZOO baselines.
\end{itemize}

\section{Preliminaries}

\textbf{Notations.} In this paper, all of the non-bold letters (including Latin letters and Greek letters) indicate the scalar such as $\delta$ and $K$. All of the lower-case letters which is bold indicate a column vector such as $\bm{u}$ and all of the upper-case bold letters such as $\bm{V}$ indicate a matrix. 
A $d$-dimensional multivariate Gaussian distribution is denoted by 
$\mathcal{N}(\bm{\mu}, \bm{\Sigma})$,
where $\bm{\mu} \in \mathbb{R}^d$ is the mean vector and 
$\bm{\Sigma} \in \mathbb{R}^{d \times d}$ is the covariance matrix. 
We use $\mathbb{E}[\cdot]$ to represent the expected value of a variable and use 
$\mathrm{Var}[\cdot]$ to represent the variance of a variable.  
$\mathrm{vec}(\bm{W})$ indicates that we flatten the matrix $\bm{W}$ by stacking its columns vertically to change it into a column vector. 
$\|\bm{x}\|_p = (\sum_{i=1}^n x_i^p)^\frac{1}{p}$ indicates the $\ell_{p}$-norm of a vector $\bm{x}$ and we use $||\bm{x}||$ to denote the $\ell_{2}$-norm of a vector $\bm{x}$.
$\|\bm{U}\|_F = \sqrt{\langle \bm{U}, \bm{U} \rangle}$ denotes the Frobenius norm of a matrix $\bm{U}$ and we will call it F-norm in the paper for simplicity.
$\mathcal{C}_L^{s,p}(\mathcal{S})$ denotes the collection of functions defined on the set 
$S$ that are 
$s$-times continuously differentiable, and whose 
$p$-th order derivatives are 
$L$-Lipschitz continuous. 
$\widehat{\nabla}$ indicates the estimation of gradient and $\nabla$ indicates the true gradient. $I$ indicates the identity matrix or vector.

% $\langle \bm{USV_t}, \bm{C} \rangle_F=\langle \bm{S}, \bm{U_tCV} \rangle_F$

% \textbf{Zeroth-order Optimization.}

\textbf{Zeroth-order Optimization for LLMs.}
% 
% \subsection{Revisiting Zeroth-order Optimization for LLMs}
% 
Consider a large language model with parameters $\bm{\theta} \in \mathbb{R}^d$ and loss function $\mathcal{L}$. 
At iteration step $t$, zeroth-order optimization estimates the gradient on a mini-batch datasets $\mathcal{B}_t$ by perturbing $\bm{\theta}_t$ along random directions. 
Specifically, if we choose to use Gaussian distribution as perturbations, then we can get $\bm{u} \sim \mathcal{N}(0,I_d)$ and $\mathcal{N}(0,I_d)$ is the standard Gaussian distribution.
Given a perturbation scale $\epsilon > 0$, the two-point gradient estimator is
\begin{equation}
\small
    \widehat{\nabla} \mathcal{L}(\bm{\theta}_t; \mathcal{B}_t) 
    = \frac{\mathcal{L}(\bm{\theta}_t + \epsilon \bm{u}; \mathcal{B}_t) - 
    \mathcal{L}(\bm{\theta}_t - \epsilon \bm{u}; \mathcal{B}_t)}{2\epsilon}\,\bm{u}
    \label{eq:zo_estimator}
\end{equation}
where $\widehat{\nabla}$ in Equation \ref{eq:zo_estimator} indicates the estimated gradients. To reduce estimator variance, one may average over $n$ independent perturbations $\{\bm{u}_i\}_{i=1}^n$:
\begin{equation}
\small
    \widehat{\nabla} \mathcal{L}(\bm{\theta}_t; \mathcal{B}_t) 
    = \frac{1}{n}\sum_{i=1}^n 
    \left[
    \frac{\mathcal{L}(\bm{\theta}_t + \epsilon \bm{u}_i; \mathcal{B}_t) - 
    \mathcal{L}(\bm{\theta}_t - \epsilon \bm{u}_i; \mathcal{B}_t)}{2\epsilon}\,\bm{u}_i
    \right]
    \label{eq:zo_estimator_multi}
\end{equation}
Finally, given the learning rate $\eta$ and estimated gradients in Equation \ref{eq:zo_estimator_multi}, the parameter update follows the standard SGD form:
\begin{equation}
\small
    \bm{\theta}_{t+1} = \bm{\theta}_t - \eta\, \widehat{\nabla} \mathcal{L}(\bm{\theta}_t; \mathcal{B}_t).
    \label{eq:zo_update}
\end{equation}
% 
%Simultaneous Perturbation Stochastic Approximation or SPSA

\section{Methodology}

In this section, we first clarify the remaining problems in existing  zeroth-order optimization frameworks and put up the motivation for our proposed method. Then, we will elaborate on our proposed \textbf{P-GAP}, which performs ZO updates with low-dimensional perturbations that are aligned with the gradient direction in the subspace of the gradient for variance reduction.
% \textbf{Motivation} 
% In current research on fine-tuning large language models (LLMs), 
Intuitively, our pipeline begins by obtaining an approximate gradient matrix, which can be expressed as the product of low-rank frame matrices and a coefficient matrix. Within this lower-dimensional space spanned by the frame matrices, Gaussian perturbations may be selected arbitrarily without restriction; however, we hope that they are constrained to be aligned with the directions defined by the gradient’s coefficient matrix (i.e. the hyperplane defined by the low-dimension gradient's coefficient matrix). After correction, the perturbation itself can also be represented as a corrected coefficient matrix, which, when multiplied with the frame matrices, yields the final perturbation in the original high-dimensional parameter space. In other words, we allow perturbations to be chosen freely within the linear subspace spanned by 
% $\bm{U}_r, \bm{V}_r$, 
 low-rank frame matrice, but enforce that they remain parallel to the hyperplane determined by the gradient’s coefficient matrix.

\subsection{Projected Gradient-Aligned Perturbation}

% Building upon the two questions raised above, 
% Aiming to answer the above questions,
% we propose 
% % a new method, 
% \textcolor{blue}{Corrected Projected Gradient Directions with Low-Dimensional Perturbations}, for accelerated zeroth-order optimization of LLM fine-tuning.

Inspired by \citep{ma2025revisiting}, we adopt the idea of projecting the sampled random perturbations onto the gradient direction. However, since the original method was designed for the vector dimension, that is, if we generate a random initial perturbation $\bm{z} \sim \mathcal{N}(0,I_d)$, 
we hope that the perturbation could satisfy the condition that:
\begin{equation}
(\nabla \mathcal{L}^T \bm{z})^2 = \delta \|\nabla \mathcal{L}\|^2
\label{eq4}
\end{equation}
% 
% The equation \ref{eq4} 
which can be simplified to:
\begin{equation}
\langle\nabla \mathcal{L}, \bm{z}\rangle = \xi \cdot \sqrt{\delta} \|\nabla \mathcal{L}\|
\label{eq5}
\end{equation}
where $\xi$ is a constant that is randomly selected from the set $\{ -1,1\}$. And $\langle \cdot, \cdot \rangle$ indicates the inner product of two vectors. However, directly generating the perturbation vector corresponding to Equation \ref{eq4} and \ref{eq5} is difficult since it requires sampling from a constrained space rather than the free full parameter space. 
% Note that 
Since Equation \ref{eq5} corresponds to a hyperplane in the vector space,
% . Therefore, 
we can randomly sample an initial perturbed vector $\bm{v}_{init}$ which can be decomposed into two components: one parallel with the hyperplane and the other orthogonal to it. We can denote them as $\bm{v}_{init\parallel}$ and $\bm{v}_{init\perp}$, respectively.
% Finally
Then, we only need to retain the parallel component $\bm{v}=\bm{v}_{init\parallel}$, which satisfies the requirement of Equation \ref{eq5}. According to \citep{ma2025revisiting}, we can 
% use the following equation to 
calculate the parallel component 
$\bm{v}$ 
% $\bm{v}=\bm{v}_{init\parallel}$ 
of the initial perturbation $\bm{v}_{init}$ as follows:
\begin{equation}
\small
\bm{v} = \bm{v}_{init} - \frac{\nabla \mathcal{L}^{T} \bm{v}_{init} - \xi \sqrt{\delta}\,\lVert \nabla \mathcal{L} \rVert}{\lVert \nabla \mathcal{L} \rVert^{2}}\, \nabla \mathcal{L}
\label{eq6}
\end{equation}
% Thus, we can successfully obtain the aligned perturbation $\bm{v}$, 
In Equation \ref{eq6} the aligned perturbation $\bm{v}$ is not only consistent with the gradient direction and but also satisfies the Gaussian distribution condition, satisfying the following requirements for the chosen perturbations to reduce the variance of ZO gradient estimation~\citep{ma2025revisiting,article,gao2022generalizinggaussiansmoothingrandom}:
\begin{itemize}
    \item \textbf{(a) Constant Magnitude}: The magnitude ($\ell_{2}$ norm) of the perturbation vector $\bm{v}$ is a fixed constant, i.e., $||\bm{v}||^2 = d\delta$ ($\delta$ is a random constant). Many traditional methods fall into this category, such as Gaussian distribution, Rademacher distribution and uniform distribution.
    \item \textbf{(b) Directional Alignment}: The square of the inner product between the perturbation vector $\bm{v}$ and the true gradient $\nabla \mathcal{L}$ is a fixed value, i.e., $(\nabla \mathcal{L}^T \bm{v})^2 = \delta \|\nabla \mathcal{L}\|^2$. This condition implies that the perturbation direction should be 'aligned' with the gradient direction.
\end{itemize}

We now extend this theory to the case of high-dimensional matrices. 
The vector norm on the right-hand side of Equation~\ref{eq5} can be naturally generalized to the matrix norm. 
In this paper, we adopt the Frobenius norm for matrices, i.e. $\|\bm{A}\|_{F} = \sqrt{\sum_{i,j} a_{ij}^{2}}$, where $a_{ij}$ is the number in the $i$-th row and $j$-th column of the matrix $\bm{A}$. We can replace the vector inner product with the Frobenius inner product for matrices without loss of generality.
For two matrices $\bm{A},\bm{B} \in \mathbb{R}^{m\times n}$, we define
\begin{equation}
\small
\langle \bm{A}, \bm{B}\rangle_F \;=\; \mathrm{Tr}(\bm{A}^\top \bm{B})
\
\label{eq7}
\end{equation}
% The associated norm is the Frobenius norm $\|A\|_F = \sqrt{\langle A,A\rangle_F}$.
where $\mathrm{Tr}(\cdot)$ in Equation \ref{eq7} means the trace of a matrix and $b_{ij}$ is the number in the $i$-th row and $j$-th column of the matrix $\bm{B}$.
Therefore, the vector hyperplane in Equation \ref{eq5} can be extended to a tensor hyperplane:
\begin{equation}
\langle\nabla \mathcal{\bm{L}}, \bm{Z}\rangle_F = \xi \cdot \sqrt{\delta} \,\|\nabla \mathcal{\bm{L}}\|_F
\label{eq8}
\end{equation}
where $\bm{Z}$ is a random perturbation satisfying Gaussian distribution.

Similarly, if we randomly generate an initial perturbation matrix 
$\bm{C}_{init} \sim \mathcal{N}(0,I_{m \times n})$ and $\bm{C}_{init}$ have equivalent dimension with gradient matrix $\nabla \mathcal{\bm{L}} \in \mathbb{R}^{m\times n}$, then the sampled initial perturbation can also be decomposed into a parallel component ($\bm{C}_{init\parallel}$) and a vertical component ($\bm{C}_{init\perp}$). Deriving from Equation~\ref{eq6}, the parallel component $\bm{C}=\bm{C}_{init\parallel}$ of $\bm{C}_{init}$
can be formulated
% rewritten in the matrix form 
as:
% \begin{equation}
% \langle\nabla \mathcal{\bm{L}}, \bm{C}_i\rangle_F = \xi \cdot \sqrt{\delta} \,\|\nabla \mathcal{\bm{L}}\|_F,
% \label{eq8}
% \end{equation}
% 
\begin{equation}
% \small
\bm{C} = \bm{C}_{init} - 
\frac{\langle \nabla \mathcal{\bm{L}}, \bm{C}_{init}\rangle_F - \xi \sqrt{\delta}\,\lVert \nabla \mathcal{L} \rVert_F}
{\lVert \nabla \mathcal{\bm{\bm{L}}} \rVert_F^{2}} \,\nabla \mathcal{\bm{L}}
\label{eq9}
\end{equation}
We only need to retain the parallel component $\bm{C}$ of the hyperplane in Equation \ref{eq8}, i.e., the one aligned with the gradient direction and the subsequent ZO perturbation update is then performed using only the parallel component.

\subsection{Low-dimensional Gradient Space Design}

\textbf{Motivation.} 
% In principle, we could follow Equation \ref{eq9} by sampling a Gaussian perturbation matrix, doing the projection and then performing ZO-based gradient estimation and parameter updates. However, in the context of large language models, the dimensionality of parameter matrices is prohibitively large. For instance, in OPT-6.7B, a single Transformer module already contains matrices of size $4096 \times 4096$ for $q$ matrix. Such high dimensionality not only makes direct computation computationally expensive, but also exacerbates the variance of zeroth-order gradient estimators, which is known to scale proportionally with the perturbation dimension (Lemma \ref{lem:zo-var-q}). This observation motivates our investigation into whether we can reduce the effective dimensionality of perturbed parameters which can simultaneously lower estimation variance and accelerate convergence.
If we directly apply Equation \ref{eq9} for perturbation alignment, there are two issues:
First, for large language models such as OPT-6.7B, the Transformer layer matrices are very large (e.g., 4096×4096), which leads to high computational cost.
Second, Equation \ref{eq9} still performs perturbation alignment in the full parameter space. However, as we have shown in the Appendix \ref{vr}, the larger the dimensionality of the perturbations, the higher the variance of the ZO gradient estimation.
This motivates us to explore whether it is possible to restrict the perturbations to a low-dimensional space and perform the perturbation alignment with gradient direction within this low-dimensional space.

% From the point above, we denote
Suppose the gradient matrix is denoted as $\bm{S}=\nabla\mathcal{L} \in \mathbb{R}^{m\times n}$, 
% Currently lots of research has demonstrated that parameter matrices in large language models exhibit pronounced low-rank structure~\citep{aghajanyan2020intrinsic,hu2021lora,karimi2021compacter}.
% Motivated by this observation, 
% one may decompose the gradient matrix
it can be decomposed in the format of the product of an orthogonal basis matrix and a coefficient matrix,
% $S$ 
using techniques such as singular value decomposition (SVD) or QR decomposition.
% , i.e., representing it as the product of an orthogonal basis matrix and a coefficient matrix. 
In this work, we adopt SVD for low-rank decomposition, then we have:
% this purpose and according to the low-rank structure of parameter matrices we can have:
% 
\begin{equation}
\bm{S} \simeq \bm{U}_r \bm{S}_r \bm{V}_r^T
\label{eq10}
\end{equation}

where $\bm{U}_r \in \mathbb{R}^{m\times r}, \bm{S}_r \in \mathbb{R}^{r\times r}, \bm{V}_r \in \mathbb{R}^{n\times r}$, $r\ll m$ and $r\ll n$. Evidently, $\bm{U}_r$ and $\bm{V}_r$ can be regarded as a pair of frames, i.e., two orthogonal bases. And $\bm{S}_r$ serves as the set of scaling factors associated with the bases, which indicates the importance of the direction of each singular vector. Hence, a natural choice is to preserve the leading $r$ directions, which captures the most significant components. 
Then, by combining Equation \ref{eq9} and Equation \ref{eq10}, we have
\begin{equation}
\small
\bm{C} = \bm{C}_{init} - 
\frac{\langle \bm{U}_r \bm{S}_r \bm{V}_r^T, \bm{C}_{init}\rangle_F - \xi \sqrt{\delta}\,\lVert \bm{U}_r \bm{S}_r \bm{V}_r^T \rVert_F}
{\lVert \bm{U}_r \bm{S}_r \bm{V}_r^T \rVert_F^{2}} \,\bm{U}_r \bm{S}_r \bm{V}_r^T.
\label{eq11}
\end{equation}

\subsection{Adapting Projected Gradient-Aligned Perturbation in Low-dimensional Gradient Space}

% \textcolor{red}{add some sentences here. Why do we need to adapt Projected Gradient-Aligned Perturbation in Low-dimensional Gradient Space?}
So far, we can conduct gradient alignment with Equation \ref{eq11}. However, generating perturbation in full parameter space will lead to large variance of ZO gradient estimation. To further reduce the variance, we propose to generate perturbations from a lower dimension space, therefore reducing the number of perturbed parameters, resulting in reduced variance.
% directly using Equation \ref{eq11} for gradient alignment cannot reduce the number of perturbed parameters.
% To XXX, we still 
% require additional SVD computation for XXX. However, XXX overhead
% To represent Equation \ref{eq11} in lower dimension space
Since the Frobenius inner product has the feature of:
% enjoys a desirable property:
% 
\begin{equation}
\langle \bm{U}_r \bm{S}_r \bm{V}_r^T, \bm{C}_{init}\rangle_F=\langle  \bm{S}_r , \bm{U}_r^T \bm{C}_{init} \bm{V}_r\rangle_F
\label{eq12}
\end{equation}

Evidently, $\bm{C}_{init}\in \mathbb{R}^{m\times n}$ has been transformed into a lower dimension perturbation $\bm{U}_r^T \bm{C}_{init} \bm{V}_r \in \mathbb{R}^{r\times r}$. 
% has transformed the perturbation  into a lower dimension perturbation.
For simplicity, we denote $\mathcal{Z}_{init}=\bm{U}_r^T \bm{C}_{init} \bm{V}_r\in \mathbb{R}^{r\times r} $.
Based on the property of
% Since 
$ \lVert \bm{U}_r \bm{S}_r \bm{V}_r^T \rVert_F = \lVert \bm{S}_r  \rVert_F$, we can simplify Equation \ref{eq11} to:
\begin{equation}
\bm{C} = \bm{C}_{init} - 
\frac{\langle \bm{S}_r , \mathcal{Z}_{init}\rangle_F - \xi \sqrt{\delta}\,\lVert  \bm{S}_r  \rVert_F}
{\lVert  \bm{S}_r  \rVert_F^{2}} \,\bm{U}_r \bm{S}_r \bm{V}_r^T
\label{eq13}
\end{equation}

Since $\bm{U}_r^T\bm{U}_r=\bm{I}_m$ and $\bm{V}_r^T\bm{V}_r=\bm{I}_n$, we perform left multiplication with $\bm{U}_r^T$ on both sides of Equation~(13), and right multiplication with $\bm{V}_r$ on both sides as well. Then we have:
\begin{equation}
\bm{U}_r^T\bm{C}\bm{V}_r = \mathcal{Z}_{init} - 
\frac{\langle \bm{S}_r , \mathcal{Z}_{init}\rangle_F - \xi \sqrt{\delta}\,\lVert  \bm{S}_r  \rVert_F}
{\lVert  \bm{S}_r  \rVert_F^{2}} \,\bm{S}_r .
\label{eq14}
\end{equation}
Similarly, we can use $\mathcal{Z} \in \mathbb{R}^{r\times r}$ to denote $\bm{U}_r^T\bm{C}\bm{V}_r$.
% In this way, 
Then, the hyperplane condition in Equation \ref{eq8} can be satisfied by the projected perturbation $\mathcal{Z}$:
\begin{equation}
\langle \bm{S}^r_\ell,\mathcal{Z}\rangle_F=\xi\sqrt{\delta}\|\bm{S}^r_\ell\|_F
\label{eq15}
\end{equation}

% Consequently, 
So far,
from the derivation,
% we can see that 
we can obtain the final component in the low-dimensional space that is parallel to the hyperlane defined by the low-dimensional gradient coefficient matrix,
% may 
only need to generate an initial Gaussian perturbation $\mathcal{Z}_{init} \sim \mathcal{N}(0,I_{r \times r})$ from a lower-dimensional space and refine it through projection from Equation \ref{eq14} to get corrected low-dimension perturbation $\mathcal{Z}$.
% thereby extracting the parallel component $\mathcal{Z}$ in the lower-dimensional setting that serves as the final perturbation direction.
Finally, 
we multiply the matrix $\mathcal{Z}$ with the frame matrix $\bm{U}_r, \bm{V}_r$ to obtain the representation of the low-dimensional perturbation in the high-dimensional space $\mathcal{Z}_f=\bm{U}_r \mathcal{Z}\bm{V}_r^T$.

In P-GAP, since the true gradient direction of the loss surface is unknown at each step in the ZO fine-tuning setting, we adopt a lazy update strategy that has been shown effective in prior works \citep{rando2024stochastic, liu2018zeroth, yu2024subzero}. The overall procedure of P-GAP is summarized in \textbf{Algorithm~1} in Appendix \ref{al}. Specifically, we first choose the update interval $k$ (window size), the number of probe perturbations $h$, the number of basis columns $r$, the projection magnitude $\delta$, and other hyperparameters. Every $k$ steps, we use lazy update strategy to estimate an approximate gradient direction using $h$ random probe perturbations and update the basis matrices $\bm{U}_r,\bm{V}_r$ and coefficient matrix $\bm{S}_r$ for each parameter $\bm{W}$. During the following $k$ iterations, we reuse the same basis and coefficient matrices to construct low-dimensional perturbation representations $\mathcal{Z}$, which are mapped back to the original parameter space to get $\mathcal{Z}_f$ for ZO updates. 

%总的来说我们方法的流程可以总结如下：首先我们选择k代表每k步更新一次基矩阵，选择探测扰动的次数h以及基矩阵的列向量个数r，投影强度\delta等等参数，每隔k步用探测扰动对每个待优化的参数更新基矩阵并存储，对于每个待优化的参数，取r乘r空间中的扰动并根据公式14进行投影，再用存储的基矩阵获得在参数原始空间的表示，最终进行扰动更新，我们的方法流程如算法1所示

\section{Experiments}
\vspace{-0.1in}

% In this section, we conduct extensive experiments to validate the effectiveness of our proposed method. We evaluate across different model architectures, including encoder–decoder models such as RoBERTa-large \citep{liu2019roberta} and large autoregressive language models such as the OPT family \citep{zhang2022opt}. The experiments span models of varying scales, for instance, RoBERTa-large (350M) \citep{liu2019roberta} and OPT-6.7B \citep{zhang2022opt}. Moreover, we consider multiple fine-tuning strategies, including full-parameter tuning as well as parameter-efficient approaches such as LoRA.

% \subsection{Experimental Setting}

\textbf{Datasets.} 
We evaluate P-GAP with both classification datasets such as SST-2, SST-5, RTE and generation tasks such as SQuAD, DROP. For RoBERTa-large, we follow prior ZO studies \citep{malladi2023fine, zhao2024second, yu2024subzero} and use $k\!=\!16$ as few-shot examples and $k\!=\!512$ as many-shot examples per class, evaluated on 1,000 test samples, for classification tasks. For autoregressive models, we use fixed splits of 1000, 500, 1000 for train, evaluation, test, respectively, and include both classification (e.g., SST2) and generation tasks (e.g., SQuAD) to assess generalization.
\vspace{-0.05in}

\textbf{Models and Baselines.}
% Our experiments spans both masked and autoregressive large language models. For the masked model, we choose to experiment on RoBERTa-large 350M~\citep{liu2019roberta} following MeZO \citep{malladi2023fine}, while for autoregressive modeling we choose to include representative model families such as OPT~\citep{zhang2022opt} and LLaMA~\citep{touvron2023llama}. The experimental coverage extends across model sizes from hundreds of millions to several billion parameters, e.g., RoBERTa-large (350M), OPT-2.7B, OPT-6.7B, as well as LLaMA-3 models with 3B and 8B parameters, thereby encompassing medium- and large-scale scenarios.
% \textcolor{red}{Reduce the length of Baselines and Implementation Details and Hyperparameter Settings.}
Our experiments span both masked and autoregressive large language models. For the masked model, we use RoBERTa-large (350M)~\citep{liu2019roberta} following MeZO~\citep{malladi2023fine}, while for autoregressive modeling we include representative families such as OPT~\citep{zhang2022opt} and LLaMA~\citep{touvron2023llama}, covering model sizes from hundreds of millions to several billions of parameters (e.g., RoBERTa-large, OPT-2.7B/6.7B, and LLaMA-3-3B/8B). 
We compare P-GAP with representative state-of-the-art zeroth-order optimization baselines, including MeZO~\citep{malladi2023fine},
% (a memory-efficient ZO method using symmetric perturbations),
HiZOO~\citep{zhao2024second},
% (which incorporates diagonal Hessian-based curvature information), 
SubZero~\citep{yu2024subzero},
% (which uses structured perturbations for variance reduction),
and Sparse-MeZO~\citep{liu2024sparse}.
% (which perturbs only a subset of parameters to reduce variance). 
For SubZero and Sparse-MeZO on OPT-13B, we adopt the results reported in \cite{yu2024subzero} due to the lack of open-sourced implementations.

\vspace{-0.05in}

\textbf{Implementation Details and Hyperparameter Settings.}
All experiments are conducted on NVIDIA A100 GPUs. To ensure a fair comparison, for key hyperparameters such as the batch size, and optimization schedule, we use the same setting as MeZO \citep{malladi2023fine}. 
% Model selection is based on validation performance, and the results reported in the main text correspond to the best-performing hyperparameter configurations. 
Our detailed hyperparameter settings such as $k$ and $\delta$ can be found in Appendix \ref{al}.
% As discussed in our analysis, the reduction in the number of perturbed parameters necessitates corresponding adjustments to both the learning rate $\eta$ and the perturbation scale $\epsilon$. The specific choices of learning rate $\eta$ and perturbation scale $\epsilon$ used in our experiments are detailed in the appendix. In our experiments, we found that the projection magnitude $\delta$ can be set relatively large at the beginning of training and then gradually reduced in the later stages. This strategy leads to better final performance and improved convergence efficiency. Therefore, in practice, we initialized the projection magnitude $\delta=2$ and gradually decayed it to 0 as the training progressed. Moreover, we set $k=100$ and $h=10$ in all of our experiments.

\vspace{-0.1in}

% \subsection{Experimental Results}

\subsection{Results on medium-sized model}
\vspace{-0.1in}

\begin{table}[htbp]
\centering
\small
\renewcommand{\arraystretch}{1.1}
\captionsetup{justification=raggedright,singlelinecheck=false}
\caption{
Experiments on RoBERTa-large 350M across different classification datasets and $k$ settings
% Performance of various gradient-based and gradient-free optimization methods across different datasets and $k$ settings on RoBERTa-large. Bold highlights best performance
}
\vspace{-0.1in}
\setlength{\tabcolsep}{4pt}
\resizebox{0.95\linewidth}{!}{
\begin{tabular}{llcccccc}
\toprule
\textbf{Task Type} & \textbf{Dataset} & \textbf{SST-2} & \textbf{SST-5} & \textbf{SNLI} & \textbf{MNLI} & \textbf{RTE} & \textbf{TREC} \\
\midrule
\multicolumn{2}{l}{Zero-shot} & 79.0 & 35.5 & 50.2 & 48.8 & 51.4 & 32.0 \\
\cmidrule(lr){1-8}
\multicolumn{8}{c}{\textbf{Gradient-free methods: $k=16$}} \\
\cmidrule(lr){1-8}
MeZO         &        & 90.5 (1.2) & 45.5 (2.0) & 66.0 (2.7) & 56.5 (2.5) & 59.4 (5.3) & 76.9 (2.7) \\
MeZO LoRA    &        & 85.8 (0.7) & 41.6 (0.8) & 64.9 (0.8) & 59.5 (1.5) & 61.7 (3.2) & 58.2 (5.6) \\
\rowcolor{cyan!7}   
P-GAP         &        & \textbf{91.4 (0.4)} & \textbf{47.3 (2.8)} & \textbf{70.4 (1.1)} & \textbf{63.3 (2.1)} & \textbf{65.7 (2.8)} & \textbf{82.8 (3.7)} \\
\rowcolor{cyan!7} 
P-GAP LoRA   &        & 86.3 (0.6) & 41.7 (1.5) & 65.2 (0.5) & 60.8 (1.9) & 61.7 (3.0) & 59.4 (2.1) \\
\cmidrule(lr){1-8}
\multicolumn{8}{c}{\textbf{Gradient-based methods: $k=16$}} \\
\cmidrule(lr){1-8}
FT           &        & 91.9 (1.8) & 47.5 (1.9) & 77.5 (2.6) & 70.2 (2.3) & 66.4 (7.2) & 85.0 (2.5) \\
FT LoRA      &        & 91.4 (1.7) & 46.7 (1.1) & 74.9 (4.3) & 67.7 (1.4) & 66.1 (3.5) & 86.1 (3.3) \\
\cmidrule(lr){1-8}
\multicolumn{8}{c}{\textbf{Gradient-free methods: $k=512$}} \\
\cmidrule(lr){1-8}
MeZO         &        & 93.3 (0.7) & 52.4 (1.2) & 83.0 (1.0) & 78.3 (0.5) & \textbf{78.6 (2.0)} & 94.3 (1.3) \\
MeZO LoRA    &        & 91.6 (0.8) & 44.8 (0.4) & 73.3 (0.6) & 66.4 (0.4) & 73.3 (1.5) & 63.8 (2.3) \\
\rowcolor{cyan!7}  
P-GAP         &        & \textbf{95.1 (0.6)} & \textbf{53.3 (1.7)} & \textbf{83.9 (2.3)} & \textbf{78.6 (0.9)} & 76.6 (1.2) & \textbf{94.8 (1.0)} \\
\rowcolor{cyan!7}  
P-GAP LoRA   &        & 92.9 (0.3) & 45.5 (0.6) & 74.1 (1.9) & 63.7 (1.2) & 74.0 (0.9) & 62.4 (2.8) \\
\cmidrule(lr){1-8}
\multicolumn{8}{c}{\textbf{Gradient-based methods: $k=512$}} \\
\cmidrule(lr){1-8}
FT           &        & 93.9 (0.7) & 55.9 (0.9) & 88.7 (0.8) & 84.4 (0.8) & 82.7 (1.4) & 97.3 (0.2) \\
FT LoRA      &        & 94.2 (0.2) & 55.7 (0.8) & 88.3 (0.5) & 86.9 (0.6) & 83.2 (1.3) & 97.0 (0.3) \\
\bottomrule
\end{tabular}}
\vspace{-0.1in}
\label{tab:robert-results}
\end{table}

\begin{table}[htbp]
\centering
\vspace{-0.0in}
\setlength{\tabcolsep}{4pt}
\renewcommand{\arraystretch}{1.1}
\captionsetup{justification=raggedright,singlelinecheck=false}
\caption{Results of fine-tuning OPT-2.7B on eight classification datasets and two generation datasets}
\resizebox{.95\linewidth}{!}{
\begin{tabular}{lccccccccccc}
\toprule
\textbf{Dataset} & \textbf{SST-2} & \textbf{RTE} & \textbf{CB} & \textbf{BoolQ} & \textbf{WSC} & \textbf{WIC} & \textbf{COPA} & \textbf{MultiRC} & \textbf{SQuAD} & \textbf{DROP} \\
\textbf{Task Type} & \multicolumn{8}{c}{\emph{classification}} & \multicolumn{2}{c}{\emph{generation}} \\
\midrule
Zero-shot   & 56.3 & 54.2 & 50.0 & 47.6 & 36.5 & 52.7 & 72.0   & 44.4 & 29.8 & 10.0 \\
FT          & 94.2 & 81.2 & 82.1 & 72.2 & 63.8 & 65.8 & 82.0   & 71.6 & 78.4 & 30.3 \\
LoRA        & 94.6 & 80.8 & 82.7 & 77.7 & 59.8 & 64.0 & 80.0   & 72.8 & 77.9 & 31.1 \\
\midrule
MeZO        & 91.2 & 63.5 & 71.4 & 67.4 & 62.5 & 59.2 & 76.0   & 59.4 & 66.8 & 19.4 \\
HiZOO       & 90.8 & 60.6 & 70.4 & \textbf{68.0} & 60.2 & 56.6 & 79.0   & 55.8 & 68.2 & 20.2 \\
\rowcolor{cyan!7} 
P-GAP        & \textbf{91.6} & \textbf{63.8} & \textbf{73.2} & 66.8 & \textbf{66.1} & \textbf{61.0} & \textbf{82.0} & \textbf{60.8} & \textbf{74.9} & \textbf{21.1} \\
\midrule
MeZO LoRA   & 91.0 & 62.8 & 67.8 & 64.8 & 65.4 & 58.2 & 79.0   & 63.4 & 63.4 & 19.2 \\
HiZOO LoRA  & 90.6 & \textbf{66.3} & \textbf{71.4} & 67.0 & 62.2 & 58.8 & 78.0   & 59.0 & 69.2 & 18.3 \\
\rowcolor{cyan!7} 
P-GAP LoRA   & \textbf{91.8} & 63.8 & \textbf{71.4} & \textbf{67.4} & \textbf{66.3} & \textbf{59.8} & \textbf{80.0} & \textbf{63.8} & \textbf{76.6} & \textbf{22.5} \\
\bottomrule
\end{tabular}}
\vspace{-0.1in}
\label{tab:opt-2.7}
\end{table}

We  conduct 
% several 
experiments 
% which incorporate some mainstream 
on
classification datasets
% such as SST-2, MNLI and RTE
to evaluate the effectiveness of P-GAP on RoBERTa-large 350M \citep{liu2019roberta} as shown in Table \ref{tab:robert-results}.
% The evaluation considers both full-parameter fine-tuning and parameter-efficient tuning via LoRA. To highlight its advantages, we compare our approach against strong zeroth-order (ZO) optimization baselines under the same experimental settings such as MeZO \citep{malladi2023fine} and HiZOO \citep{zhang2024revisiting} except the learning rate $\eta$ and perturbation scale $\epsilon$.
% \noindent\textbf{Improved Accuracy on RoBERTa-large model.}
% From the Table \ref{tab:robert-results},w
We observe that P-GAP can generally yield higher accuracy across multiple datasets. For instance, when $k=16$, P-GAP can achieve around
% surpasses the performance of MeZO by about
% absolute margins of
0.9\%, 6.8\%, and 6.3\% higher accuracy than MeZO on
% the three benchmark datasets
% of 
SST-2, RTE and MNLI, respectively. 
% Notably, in challenging cases such as RTE and SST-5, our method not only outperforms other zeroth-order approaches but even rivals, and in some cases exceeds, the accuracy achieved by full first-order fine-tuning.
\begin{table}[htbp]
\centering
\setlength{\tabcolsep}{2pt}
\renewcommand{\arraystretch}{0.9}

\begin{minipage}{0.48\textwidth}
\captionsetup{justification=raggedright, singlelinecheck=false}
\caption{Experiments on OPT-6.7B (with 1000 training samples)}
\vspace{-0.1in}
\label{tab:opt-6.7}
\resizebox{\linewidth}{!}{
\begin{tabular}{lccccc}
\toprule
\textbf{Dataset} & \textbf{SST-2} & \textbf{RTE} & \textbf{CB}  & \textbf{WSC}  & \textbf{SQuAD}  \\
\textbf{Task Type} & \multicolumn{4}{c}{\emph{classification}} & \multicolumn{1}{c}{\emph{generation}} \\
\midrule
MeZO      & 91.8 & 62.8 & 73.2 & 65.4 &  70.3 \\
HiZOO     & 90.9 & \textbf{66.3} & 71.4 & 62.1 & 71.9 \\
\rowcolor{cyan!7} P-GAP & \textbf{92.0} & 63.8 & \textbf{78.6} & \textbf{67.3} & \textbf{75.4} \\
\midrule
MeZO LoRA & 93.4 & 67.9 & 73.2 & 65.4 & 69.8 \\
HiZOO LoRA & 92.5 & 68.7 & 71.4 & 63.6 & 72.3 \\
\rowcolor{cyan!7} P-GAP LoRA & \textbf{94.0} & \textbf{72.5} & \textbf{78.6} & \textbf{66.3} & \textbf{79.2} \\
\bottomrule
\end{tabular}}
\end{minipage}
\hfill
\begin{minipage}{0.48\textwidth}
\captionsetup{justification=raggedright, singlelinecheck=false}
\caption{Experiments on OPT-13B (with 1000 training samples)}
\label{tab:opt-13}
\resizebox{\linewidth}{!}{
\begin{tabular}{lcccc}
\toprule
\textbf{Dataset} & \textbf{SST-2} & \textbf{RTE} & \textbf{WSC} & \textbf{SQuAD}  \\
\textbf{Task Type} & \multicolumn{3}{c}{\emph{classification}} & \multicolumn{1}{c}{\emph{generation}} \\
\midrule
MeZO        & 91.4 & 69.3 & 61.5 & 84.2 \\
HiZOO       & 92.1 & 66.1 & 63.5 & 81.9 \\
Sparse-MeZO & 92.3 & \textbf{76.9} & 61.1 & 77.9 \\
Subzero     & 92.1 & 74.0 & 65.4 & 84.5 \\
\rowcolor{cyan!7} P-GAP & \textbf{92.7} & 73.8 & \textbf{66.3} & \textbf{85.0} \\
\bottomrule
\end{tabular}}
\end{minipage}
\end{table}
% \vspace{-0.2in}
% \begin{wraptable}[10]{r}{0.5\textwidth}
% \vspace{-0.2in}
% \centering
% \setlength{\tabcolsep}{2pt}
% \renewcommand{\arraystretch}{0.9}
% 
% \captionsetup{justification=raggedright,singlelinecheck=false}
% \caption{Experiments on OPT-6.7B
% % for four classification datasets and one text generation dataset
% (with 1000 training samples)}
% \vspace{-0.1in}
% \resizebox{0.95 \linewidth}{!}{
% \begin{tabular}{lccccc}
% \toprule
% \textbf{Dataset} & \textbf{SST-2} & \textbf{RTE} & \textbf{CB}  & \textbf{WSC}  & \textbf{SQuAD}  \\
% \textbf{Task Type} & \multicolumn{4}{c}{\emph{classification}} & \multicolumn{1}{c}{\emph{generation}} \\
% 
% \midrule
% MeZO      & 91.8 & 62.8 & 73.2 & 65.4 &  70.3 \\
% HiZOO     & 90.9 & \textbf{66.3} & 71.4 &  62.1 & 71.9 \\
% \rowcolor{cyan!7} 
% P-GAP     & \textbf{92.0} & 63.8 & \textbf{78.6} & \textbf{67.3} & \textbf{75.4}  \\
% \midrule
% MeZO LoRA     & 93.4 & 67.9 & 73.2 & 65.4 & 69.8 \\
% HiZOO LoRA    & 92.5 & 68.7 & 71.4 & 63.6 &  72.3 \\
% \rowcolor{cyan!7} 
% P-GAP LoRA     & \textbf{94.0} & \textbf{72.5} & \textbf{78.6} & \textbf{66.3} & \textbf{79.2} \\
% \bottomrule
% \end{tabular}}
% %\vspace{-0.05in}
% 
% \label{tab:opt-6.7}
% \end{wraptable}
To further investigate its flexibility, we evaluate P-GAP within the PEFT framework, LoRA framework.
% , which represents a widely adopted parameter-efficient fine-tuning strategy. 
We observe that LoRA typically incurs a modest degradation in performance compared to full-model FT, P-GAP remains highly competitive: it can generally outperform zeroth-order baselines and maintains good performance even when the number of trainable parameters is significantly reduced. 
These results can show that our approach is effective in both full-tuning regime and
% remain robust 
% and adaptable 
% under 
PEFT scenarios such as LoRA, highlighting its robustness and practicality for medium-sized language model deployment.

\vspace{-0.05in}

\subsection{Results on large autoregressive Models}

% \textcolor{red}{Describe Table 2 only}

% Our method is evaluated on both the OPT and LLaMA model families, covering classification tasks such as RTE and SST-2 datasets as well as generation tasks such as SQuAD and DROP datasets. The study includes models of different scales such as OPT-2.7B, OPT-6.7B, OPT-13B and the 3B/8B variants of LLaMA-3 series model with the corresponding performance summarized in Table~\ref{tab:opt-2.7}, Table~\ref{tab:opt-6.7}, Table \ref{tab:opt-13} and Figure~\ref{fig:llama}. 
% % For efficiency analysis, GPU time and memory usage on the SQuAD dataset are reported separately in Table~\ref{tab:gpu_hour}.
% % A key observation is that our approach accelerates training across all model sizes. Instead of relying on costly gradient backpropagation, it reaches comparable accuracy in far fewer steps. 
% % As illustrated in Figure~\ref{fig:time}, training time can be cut by as much as 74\% relative to MeZO. 
% On OPT-2.7B, our method surpasses other zeroth-order baselines in six out of seven classification benchmarks, while still performing competitively on generation tasks. These benefits are not limited to smaller scales: when scaling up to OPT-6.7B (Table~\ref{tab:opt-6.7}), the same performance trends remain evident. For LLaMA-3 models, the improvements are also clear. On RTE datasets for instance, the accuracy increases by nearly 10\% (Figure~\ref{fig:llama}). Overall, our method demonstrates both efficiency and robustness, whether under full-parameter fine-tuning or LoRA-based adaptation.
\begin{wrapfigure}[11]{r}{0.5\textwidth}
\vspace{-0.2in}
\centering
\includegraphics[width=0.99\linewidth]{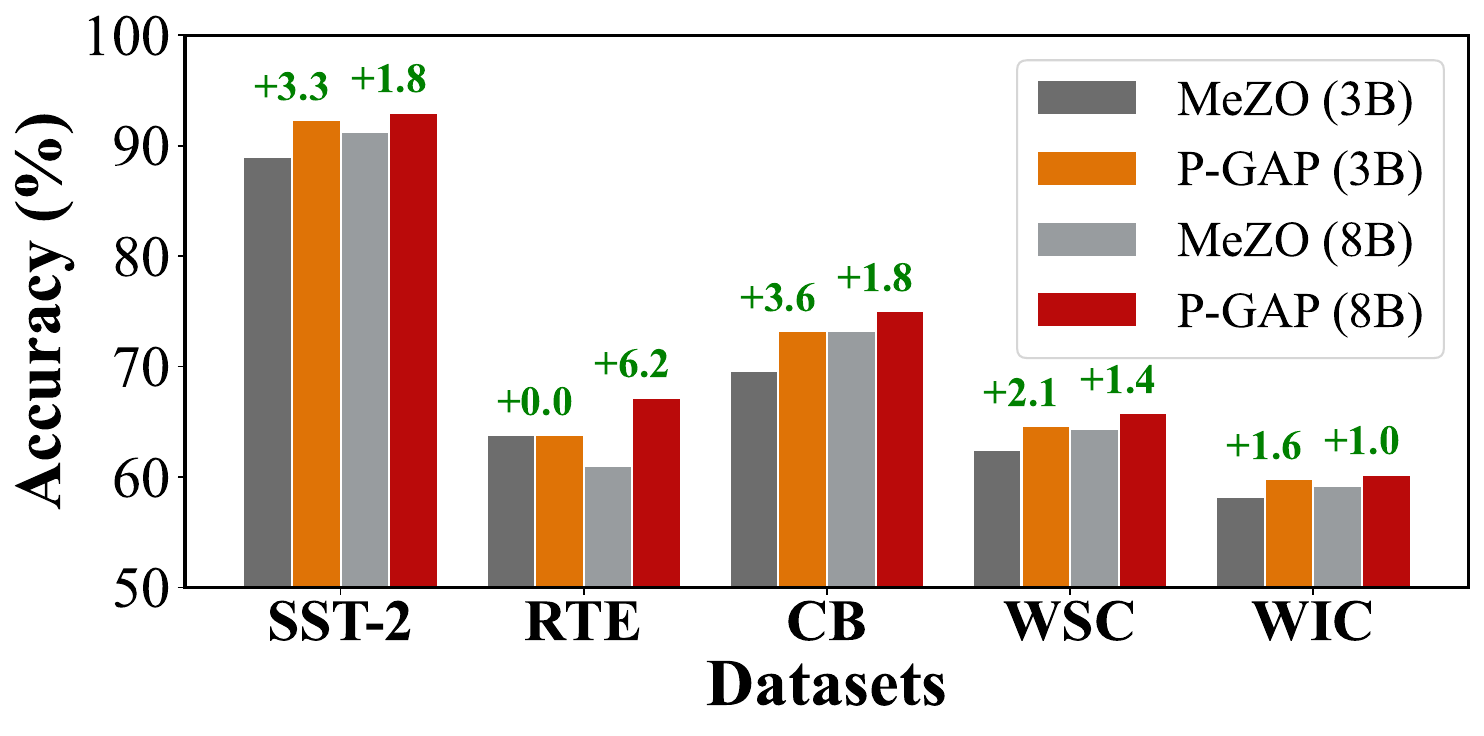}
\vspace{-0.3in}
\captionsetup{justification=raggedright,singlelinecheck=false}
\caption{Accuracy comparison of MeZO and P-GAP (Ours) on LLaMA3-3B and LLaMA3-8B}
\label{fig:llama}
\vspace{-0.3in}
\end{wrapfigure}

P-GAP is evaluated with both the OPT and LLaMA model families, on classification tasks such as RTE and SST-2 datasets, and generation tasks such as SQuAD and DROP datasets. 
% We provide the performance of P-GAP on OPT-2.7B model in 
As shown in Table \ref{tab:opt-2.7},
% The study includes models of different scales such as OPT-2.7B, OPT-6.7B, OPT-13B and the 3B/8B variants of LLaMA-3 series model with the corresponding performance summarized in \textcolor{blue}{Table~\ref{tab:opt-2.7}, Table~\ref{tab:opt-6.7}, Table~\ref{tab:opt-13} and Figure~\ref{fig:llama}.} \textcolor{red}{too many figures or tables at a time. It will be better to describe one at a time.}
on OPT-2.7B, P-GAP consistently 
outperforms MeZO and HiZOO.
% zeroth-order baselines. 
For instance, on COPA, P-GAP can achieve an accuracy of 82.0\%, which is 6\% higher than
% a 6\% improvement over 
MeZO at 76.0\% and also surpasses HiZOO 
% of 79.0\% accuracy 
with an increase about 3\%. On generation tasks, P-GAP can obtain 74.9\% accuracy on SQuAD, yielding a 12\% increase compared to
% 66.8\% with 
MeZO (66.8\%).
When combined with LoRA, our approach remains competitive and continues to outperform baselines. On SQuAD with LoRA, P-GAP reaches about 76.6\% accuracy, exceeding MeZO LoRA (63.4\%)
% ’s accuracy of 63.4\% 
by more than 13\%. 

% These results demonstrate that our method not only delivers stable improvements across diverse classification datasets such as RTE (+3.2 points vs. MeZO) and WIC (+1.8 points vs. MeZO), but also achieves double-digit gains on large-scale generation tasks, highlighting its robustness under both full-parameter fine-tuning and LoRA adaptation.

Turning to LLaMA-3 models, Figure~\ref{fig:llama} shows that P-GAP can generally boost accuracy across datasets. For example, on SST-2 datasets, P-GAP can acheive about 3.3\% increase in accuracy on LLaMA-3-3B and 1.8\% increase of accuracy on LLaMA-3-8B over MeZO baseline.

\vspace{-0.15in}

\subsection{Performance on LLMs with Various Scales}
\vspace{-0.05in}

% \textcolor{red}{Describe different scales of OPT. OPT-2.7B, OPT-6.7B, OPT-13B}

We also evaluate the performance of P-GAP on LLMs with different scales. For example, we conduct experiments 
% provide the results of P-GAP
on OPT-6.7B, OPT-13B as shown in Table \ref{tab:opt-6.7}, Table \ref{tab:opt-13}, respectively. We evaluate P-GAP with 
LLaMA-3-3B and LLaMA-3-8B as shown in
% of P-GAP in Table \ref{tab:opt-6.7}, Table \ref{tab:opt-13} and 
Figure \ref{fig:llama}.
We observe that P-GAP has consistent advantages over baselines on OPT-6.7B, OPT-13B and LLaMA-3 models.
On OPT-6.7B with the CB dataset, P-GAP achieves 78.6\% accuracy, outperforming MeZO by 5.4\% and HiZOO by 7.2\%, individually. On SQuAD, it can achieve an accuracy of 75.4\%, which is about 5.1\% higher than MeZO. When combined with LoRA, the improvements of P-GAP become even more significant: P-GAP reaches 72.5\% accuracy on RTE and 79.2\% on SQuAD, surpassing HiZOO by nearly 4\% and 7\%, respectively. For OPT-13B model, P-GAP can achieve about 66.3\% accuracy in fine-tuning WSC dataset, surpassing all of the baselines including Sparse-MeZO and Subzero.
% 
% \textcolor{red}{Describe different scales of LLaMA in the following paragraph. LLaMA-3B, LLaMA-8B.}
% 
% 

% \begin{wrapfigure}[11]{r}{0.5\textwidth}
% \vspace{-0.2in}
% \centering
% \includegraphics[width=0.99\linewidth]{figures/llama.pdf}
% \vspace{-0.3in}
% \captionsetup{justification=raggedright,singlelinecheck=false}
% \caption{Accuracy comparison of MeZO and P-GAP (Ours) on LLaMA3-3B and LLaMA3-8B}
% \label{fig:llama}
% \vspace{-0.3in}
% \end{wrapfigure}

\begin{figure}[ht]
    \centering
    \begin{subfigure}[t]{0.32\textwidth}
        \centering
        \includegraphics[width=\linewidth]{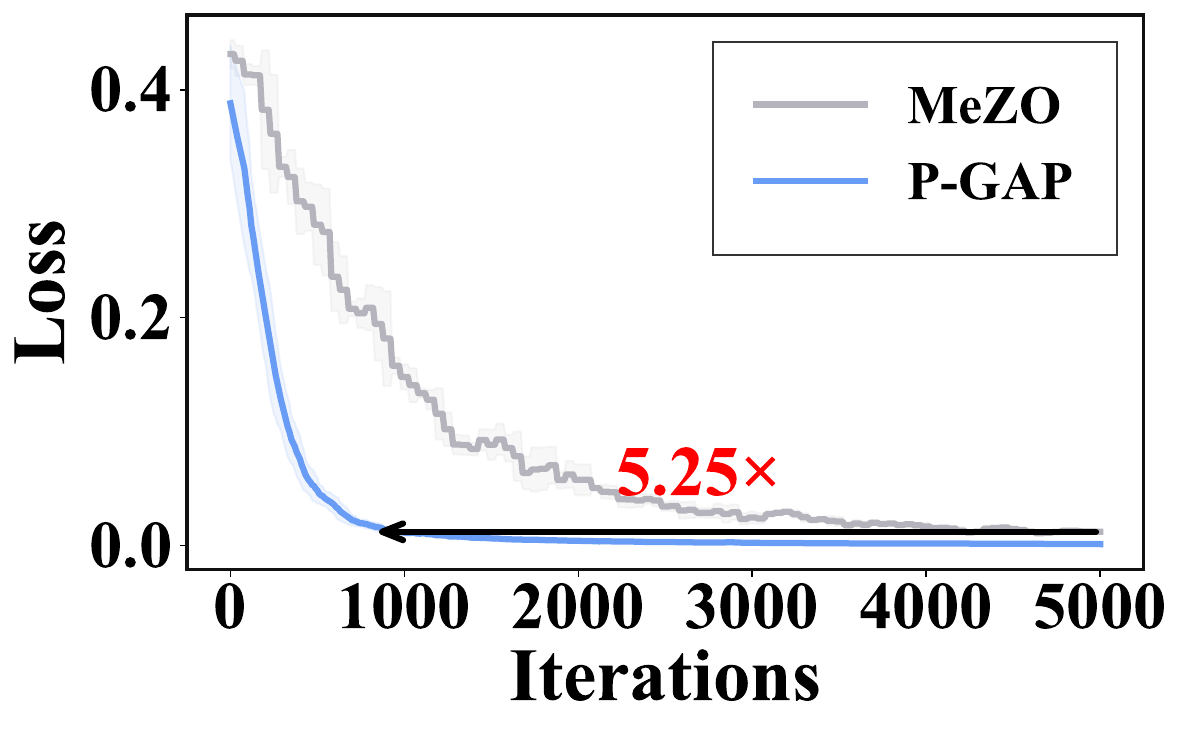}
        \caption{SST-2 Dataset}
    \end{subfigure}
    \hfill
    \begin{subfigure}[t]{0.32\textwidth}
        \centering
        \includegraphics[width=\linewidth]{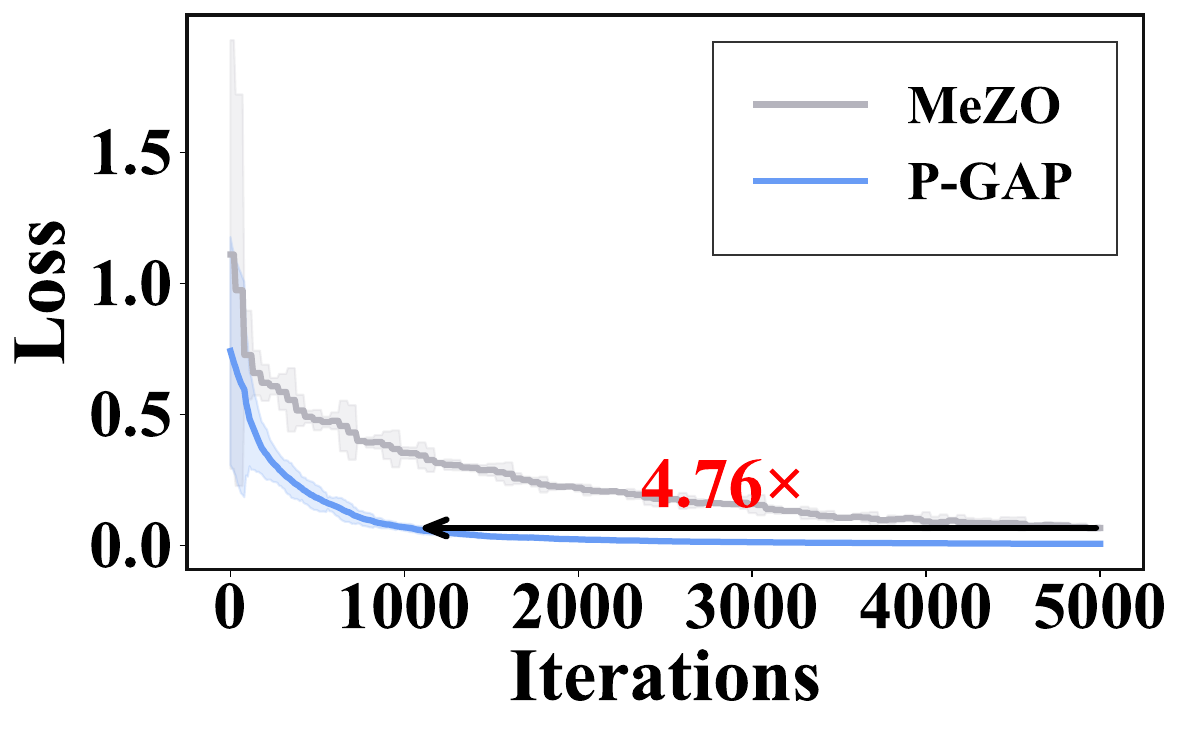}
        \caption{RTE Dataset}
    \end{subfigure}
    \hfill
    \begin{subfigure}[t]{0.32\textwidth}
        \centering
        \includegraphics[width=\linewidth]{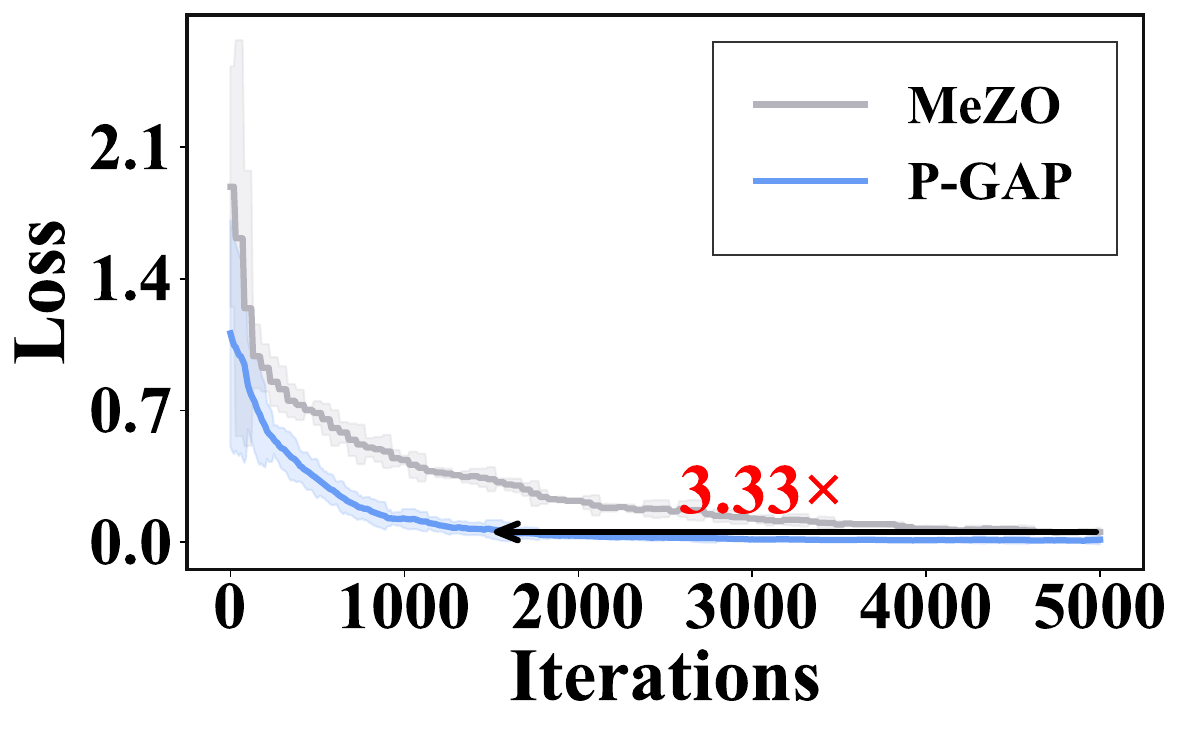}
        \caption{SNLI Dataset}
    \end{subfigure}
    \vspace{-0.05in}
    \caption{Training loss comparison with iterations of MeZO and P-GAP on RoBERTa-large}
    \vspace{-0.2in}
    \label{fig:rob_loss}
\end{figure}

\begin{figure}[H]
    \centering
    \includegraphics[width=0.99\textwidth]{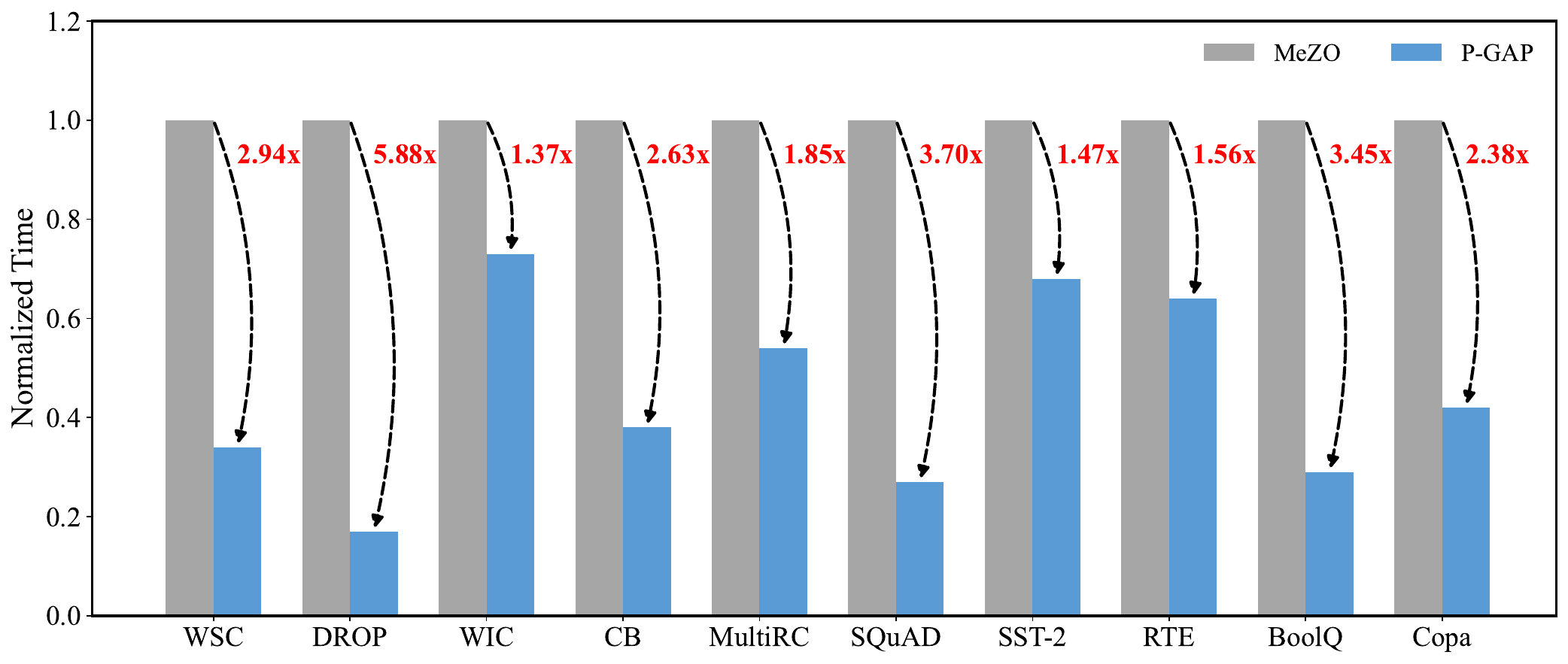}
    \captionsetup{justification=raggedright, singlelinecheck=false}
    % \vspace{-0.1in}
    \caption{Comparison of GPU hours for full FT across different datasets on OPT-2.7B between MeZO and P-GAP. Results are presented as normalized time (numbers in red indicate speedup)}
    \vspace{-0.2in}
    \label{fig:timecompare}
\end{figure}

\begin{wrapfigure}[12]{r}{0.48\textwidth}
\vspace{-0.15in}
\centering
\includegraphics[width=0.92\linewidth]{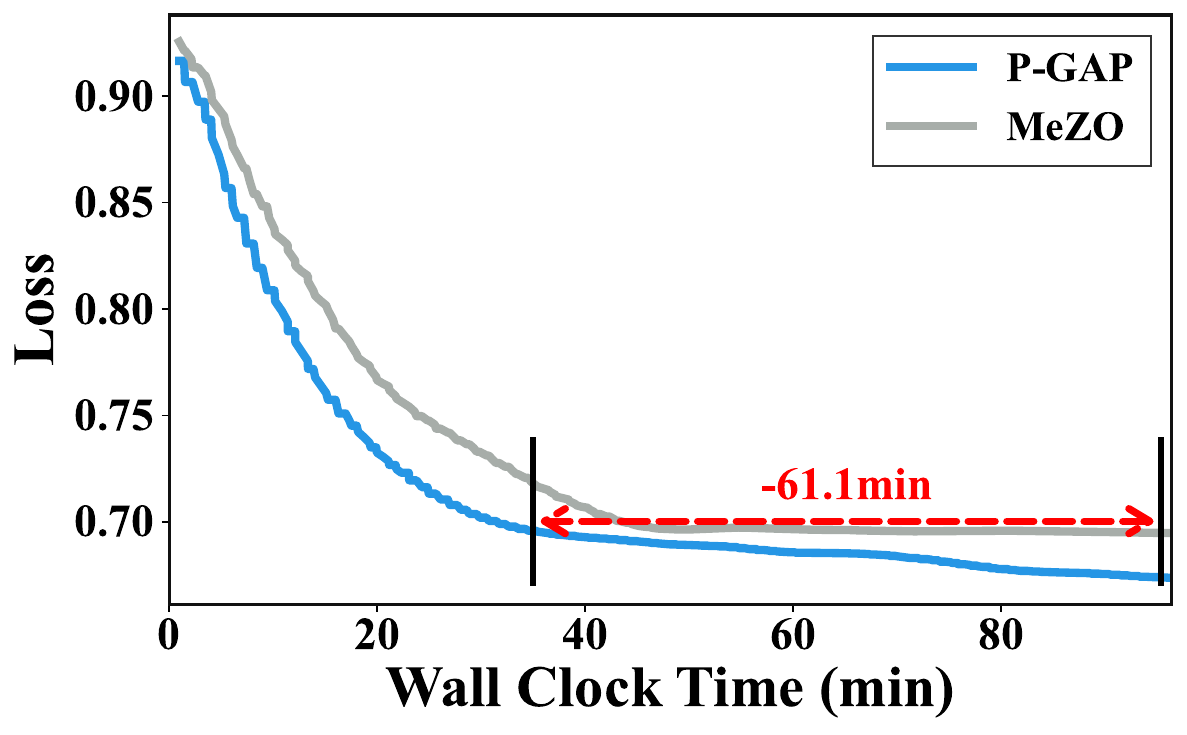}
\vspace{-0.15in}
\captionsetup{justification=raggedright,singlelinecheck=false}
\caption{Wall clock time  for OPT-2.7B on CB datasets}
\label{fig:wall}
% \vspace{-0.1in}
\end{wrapfigure}

% \vspace{-0.2in}
\subsection{Convergence and Wall-clock Time Analysis}

% \begin{figure}

% \centering
% \includegraphics[width=0.45\linewidth]{iclr2026/figures/loss_vs_time.pdf}
% \vspace{-0.1in}
% % \captionsetup{justification=raggedright,singlelinecheck=false}
% \caption{Wall clock time and loss of CB datasets on OPT-2.7B}
% \label{fig:wall}
% \vspace{-0.1in}
% \end{figure}

% We provide the curves of training loss and iteration steps for fine-tuning the SNLI and SST-2 datasets on RoBERTa-large 350M model in Figure \ref{fig:rob_loss} and the curve of training loss with wall clock time for fine-tuning the CB dataset on OPT-2.7B model in Figure \ref{fig:wall}. Moreover, Figure \ref{fig:timecompare} displays the overall normalized GPU hours of P-GAP compared to MeZO baseline when we do fine-tuning on all of the ten datasets. We can find that P-GAP consistently accelerates convergence compared to MeZO baseline \citep{malladi2023fine} across datasets and model scales. 

We provide the convergence and wall-clock time analysis on different models to show the acceleration effects of P-GAP over baseline.
As shown in Figure \ref{fig:rob_loss}, on RoBERTa-large, our approach achieves lower training loss more quickly, reducing the number of iterations by 5.25$\times$ on SST-2 and 3.33$\times$ on SNLI for achiving the same final loss as MeZO. This demonstrates that fewer update steps are sufficient for P-GAP to achieve competitive performance.
Figure \ref{fig:timecompare} 
shows the overall normalized GPU hours of P-GAP compared to MeZO for fine-tuning on all of the ten datasets. We can observe that P-GAP consistently accelerates convergence compared to MeZO  \citep{malladi2023fine} across datasets.
% and model scales. 
For example,
% We can observe that 
on DROP, 
% with the 
% OPT-2.7B model, 
P-GAP can achieve about 5.88$\times$ speedup (with only 17\% of the training time) compared to MeZO, while also achieving better performance. 
P-GAP can also reduce wall-clock time. With OPT-2.7B on the CB dataset, P-GAP reach the loss of 0.6985 about 61.1 minutes earlier than MeZO,
% had only at the end of training iterations about ,
corresponding to a reduction of 40\% in convergence time, as shown in Figure \ref{fig:wall}.
These results highlight the high efficiency of P-GAP, which not only reduces the number iterations but also achieves practical time savings during training.

\subsection{Memory analysis}

We evaluate P-GAP’s memory usage and training efficiency under both full-parameter and LoRA-based fine-tuning. As shown in Table~\ref{tab:mem}, our approach strikes a favorable balance between convergence speed and per-step overhead. Compared to MeZO, which requires the full training budget of 100\% iterations and GPU hours, P-GAP reduces the number of iterations to only 15.6\% and the total GPU hours to 27.3\%, with memory usage slightly larger than MeZO and smaller than HiZOO. On SQuAD dataset, this translates to more than a 70\% reduction in training time with comparable accuracy.
\begin{wraptable}[14]{r}{0.5\textwidth}
  % 微调垂直位置
% \vspace{-0.08in}
\centering
\renewcommand{\arraystretch}{1.1}
\captionsetup{justification=raggedright,singlelinecheck=false}
\caption{Memory and training time comparison on OPT-2.7B (SQuAD, avg. 300 tokens)}
\vspace{-0.1in}
\label{tab:gpu_hour}

\resizebox{0.5\textwidth}{!}{
\begin{tabular}{lccc}
\toprule
\textbf{Method} & \textbf{Mem.} & \textbf{Iter.} & \textbf{Hours} \\
\midrule
FT         & 73.5G & 9.3\%   & 16.8\% \\
LoRA       & 58.5G & 6.3\%   & 11.5\% \\
\midrule
MeZO       & 9.4G & 100.0\% & 100.0\% \\
HiZOO      & 13.3G & 66.7\%  & 91.5\% \\
P-GAP     & 11.3G & 15.6\%  & 27.3\% \\
\midrule
MeZO+LoRA  & 8.4G  & 94.2\%  & 51.6\% \\
HiZOO+LoRA & 11.6G  & 80.0\%  & 65.7\% \\
P-GAP+LoRA& 9.1G  & 12.5\%  & 22.4\% \\
\bottomrule
\end{tabular}
}
\label{tab:mem}
\vspace{-1.0em}
\end{wraptable}
From a memory standpoint, P-GAP is substantially more efficient than gradient-based fine-tuning approaches such as full fine-tuning with 73.5G memory usage and LoRA with 58.5G memory usage, since it avoids storing gradients and activations. Even under parameter-efficient settings, it maintains strong efficiency. For instance, with LoRA, P-GAP further lowers GPU hours to 22.4\%, compared to 51.6\% for MeZO+LoRA and 65.7\% for HiZOO+LoRA, using only 9.1G of memory. These results highlight that P-GAP achieves faster convergence with minimal memory overhead across diverse tuning regimes. We provide more memory usage results in Appendix \ref{more}.

\section{Related Work}

\textbf{Memory-efficient Fine-tuning of LLMs. }
Large pre-trained models~\citep{radford2021learning,chen2022visualgpt,singh2022revisiting} have been increasingly employed across diverse domains. However, a tension arises between the growing demand for fine-tuning and the prohibitive computational cost, particularly in resource-constrained environments
% such as edge devices
~\citep{zeng2024flightllm, tan2025perturbation}. To mitigate this issue, several memory-efficient fine-tuning (PEFT) techniques have been proposed. For instance, 
% parameter-efficient fine-tuning (PEFT) methods
~\cite{hu2021lora, dettmers2023qlora, liu2024dora, qin2024accurate} update only a subset of model parameters, while reducing memory usage. 
% Model quantization
~\cite{frantar2022gptq, xiao2023smoothquant, dettmers2023qlora} compresses continuous real-valued weights into low-bit discrete formats (e.g., INT8 or INT4), thereby lowering both memory and computational costs.
Recently, zeroth-order (ZO) optimization has emerged as a promising paradigm for memory-efficient fine-tuning~\citep{malladi2023fine, zhang2024revisiting, chen2023deepzero, yu2024subzero}. By estimating gradients solely through forward passes, ZO eliminates the need to store memory-intensive activations and optimizer states~\citep{malladi2023fine, liu2024sparse, tang2024private}. 

\textbf{Acceleration of Zeroth-order Optimization. }
Despite the appealing memory-efficiency of ZO, the gains from ZO approaches come with a cost: convergence is often slower than FO alternatives, largely due to the inherent noise in randomized perturbation-based estimators. 
% Moreover, due to the uncertainty of random perturbations and additional errors introduced by hardware, the variance of gradient estimates in ZO methods has long been a bottleneck in this field. Considerable research has therefore focused on reducing estimation variance to accelerate convergence \citep{berahas2022theoretical, choromanski2019unifying, rando2023optimal, rando2024stochastic, liu2018zeroth}. For example,
\cite{ji2019improved} proposed two new zeroth-order variance-reduced algorithms, ZO-SVRG-Coord-Rand and ZO-SPIDER-Coord, and provided refined theoretical analysis for the existing ZO-SVRG-Coord method in the context of nonconvex optimization, which can achieve better convergence rates and function query complexities than previous methods. 
% There are also studies that reduce the variance of gradient estimation by designing alternative perturbation schemes. For instance,
\cite{duchi2015optimal} examined random perturbations with finite fourth-order moments, and demonstrated that using a uniform random vector yields the optimal dependence on the dimension $d$. \cite{kozak2023zeroth} construct orthogonal perturbations, or orthogonalize the sampled directions, so that the estimator can better explore diverse gradient directions and identify more effective descent paths. \citet{sener2020learning} propose to learn a latent low-dimensional manifold in the course of optimization, from which samples are drawn to effectively reduce sample complexity.

\section{Conclusion}
In this paper, we introduce \textbf{P-GAP}, a novel zeroth-order optimization framework for large language model fine-tuning  
by estimating a low-dimensional gradient space and aligns perturbations in projected gradients’ direction within the
space.
We provide theoretical analysis on how the variance of standard ZO estimators scales with the model size and how our approach can mitigate this problem through gradient estimation within low-dimension space.  
Extensive experiments show that P-GAP can effectively reduce the variance of ZO gradient estimation with improved accuracy and efficiency, and accelerated convergence.
% 
% On both encoder-only and decoder-based LLMs, including RoBERTa, OPT, and LLaMA models,
% \textcolor{red}{
% Specifically, our experimental results demonstrate P-GAP can achieve better performance compared with baselines on both accuracy and efficiency. 
For instance, P-GAP achieves up to 12\% increase in accuracy over baselines on SQuAD dataset, more than 61 minutes reduction in training time on BoolQ dataset. Overall, our findings highlight the potential of projected gradient-aligned perturbations for scalable and efficient ZO LLM fine-tuning in practice.

% \clearpage
% \newpage
\bibliographystyle{iclr2026_conference}  % ICLR 的 .bst 样式
\bibliography{iclr2026_conference}       % 这里用你的 .bib 文件名(不带 .bib)

\begin{thebibliography}{47}
\providecommand{\natexlab}[1]{#1}
\providecommand{\url}[1]{\texttt{#1}}
\expandafter\ifx\csname urlstyle\endcsname\relax
  \providecommand{\doi}[1]{doi: #1}\else
  \providecommand{\doi}{doi: \begingroup \urlstyle{rm}\Url}\fi

\bibitem[Castillo et~al.(2016)Castillo, Rafeiro, et~al.]{castillo2016introductory}
Ren{\'e}~Erl{\'\i}n Castillo, Humberto Rafeiro, et~al.
\newblock \emph{An introductory course in Lebesgue spaces}.
\newblock Springer, 2016.

\bibitem[Chen et~al.(2023)Chen, Zhang, Jia, Diffenderfer, Liu, Parasyris, Zhang, Zhang, Kailkhura, and Liu]{chen2023deepzero}
Aochuan Chen, Yimeng Zhang, Jinghan Jia, James Diffenderfer, Jiancheng Liu, Konstantinos Parasyris, Yihua Zhang, Zheng Zhang, Bhavya Kailkhura, and Sijia Liu.
\newblock Deepzero: Scaling up zeroth-order optimization for deep model training.
\newblock \emph{arXiv preprint arXiv:2310.02025}, 2023.

\bibitem[Chen et~al.(2022)Chen, Guo, Yi, Li, and Elhoseiny]{chen2022visualgpt}
Jun Chen, Han Guo, Kai Yi, Boyang Li, and Mohamed Elhoseiny.
\newblock Visualgpt: Data-efficient adaptation of pretrained language models for image captioning.
\newblock In \emph{Proceedings of the IEEE/CVF Conference on Computer Vision and Pattern Recognition}, pp.\  18030--18040, 2022.

\bibitem[Chen et~al.(2024)Chen, Zhang, Cao, Yuan, and Wen]{chen2024enhancing}
Yiming Chen, Yuan Zhang, Liyuan Cao, Kun Yuan, and Zaiwen Wen.
\newblock Enhancing zeroth-order fine-tuning for language models with low-rank structures.
\newblock \emph{arXiv preprint arXiv:2410.07698}, 2024.

\bibitem[Dettmers et~al.(2023)Dettmers, Pagnoni, Holtzman, and Zettlemoyer]{dettmers2023qlora}
Tim Dettmers, Artidoro Pagnoni, Ari Holtzman, and Luke Zettlemoyer.
\newblock Qlora: Efficient finetuning of quantized llms.
\newblock \emph{Advances in neural information processing systems}, 36:\penalty0 10088--10115, 2023.

\bibitem[Duchi et~al.(2015)Duchi, Jordan, Wainwright, and Wibisono]{duchi2015optimal}
John~C Duchi, Michael~I Jordan, Martin~J Wainwright, and Andre Wibisono.
\newblock Optimal rates for zero-order convex optimization: The power of two function evaluations.
\newblock \emph{IEEE Transactions on Information Theory}, 61\penalty0 (5):\penalty0 2788--2806, 2015.

\bibitem[Frantar et~al.(2022)Frantar, Ashkboos, Hoefler, and Alistarh]{frantar2022gptq}
Elias Frantar, Saleh Ashkboos, Torsten Hoefler, and Dan Alistarh.
\newblock Gptq: Accurate post-training quantization for generative pre-trained transformers.
\newblock \emph{arXiv preprint arXiv:2210.17323}, 2022.

\bibitem[Gao \& Sener(2022)Gao and Sener]{gao2022generalizinggaussiansmoothingrandom}
Katelyn Gao and Ozan Sener.
\newblock Generalizing gaussian smoothing for random search, 2022.
\newblock URL \url{https://arxiv.org/abs/2211.14721}.

\bibitem[Garling(2007)]{garling2007inequalities}
David~JH Garling.
\newblock \emph{Inequalities: a journey into linear analysis}.
\newblock Cambridge University Press, 2007.

\bibitem[Gautam et~al.(2024)Gautam, Park, Zhou, Raman, and Ha]{gautam2024variancereducedzerothordermethodsfinetuning}
Tanmay Gautam, Youngsuk Park, Hao Zhou, Parameswaran Raman, and Wooseok Ha.
\newblock Variance-reduced zeroth-order methods for fine-tuning language models, 2024.
\newblock URL \url{https://arxiv.org/abs/2404.08080}.

\bibitem[Gu et~al.(2021)Gu, Feng, Zhao, Ying, Chen, and Pan]{gu2021efficient}
Jiaqi Gu, Chenghao Feng, Zheng Zhao, Zhoufeng Ying, Ray~T Chen, and David~Z Pan.
\newblock Efficient on-chip learning for optical neural networks through power-aware sparse zeroth-order optimization.
\newblock In \emph{Proceedings of the AAAI conference on artificial intelligence}, volume~35, pp.\  7583--7591, 2021.

\bibitem[Hu et~al.(2021)Hu, Shen, Wallis, Allen-Zhu, Li, Wang, Wang, and Chen]{hu2021lora}
Edward~J Hu, Yelong Shen, Phillip Wallis, Zeyuan Allen-Zhu, Yuanzhi Li, Shean Wang, Lu~Wang, and Weizhu Chen.
\newblock Lora: Low-rank adaptation of large language models.
\newblock \emph{arXiv preprint arXiv:2106.09685}, 2021.

\bibitem[Ji et~al.(2019)Ji, Wang, Zhou, and Liang]{ji2019improved}
Kaiyi Ji, Zhe Wang, Yi~Zhou, and Yingbin Liang.
\newblock Improved zeroth-order variance reduced algorithms and analysis for nonconvex optimization.
\newblock In \emph{International conference on machine learning}, pp.\  3100--3109. PMLR, 2019.

\bibitem[Kornilov et~al.(2023)Kornilov, Shamir, Lobanov, Dvinskikh, Gasnikov, Shibaev, Gorbunov, and Horv{\'a}th]{kornilov2023accelerated}
Nikita Kornilov, Ohad Shamir, Aleksandr Lobanov, Darina Dvinskikh, Alexander Gasnikov, Innokentiy Shibaev, Eduard Gorbunov, and Samuel Horv{\'a}th.
\newblock Accelerated zeroth-order method for non-smooth stochastic convex optimization problem with infinite variance.
\newblock \emph{Advances in Neural Information Processing Systems}, 36:\penalty0 64083--64102, 2023.

\bibitem[Kozak et~al.(2023)Kozak, Molinari, Rosasco, Tenorio, and Villa]{kozak2023zeroth}
David Kozak, Cesare Molinari, Lorenzo Rosasco, Luis Tenorio, and Silvia Villa.
\newblock Zeroth-order optimization with orthogonal random directions.
\newblock \emph{Mathematical Programming}, 199\penalty0 (1):\penalty0 1179--1219, 2023.

\bibitem[Li \& Liang(2021)Li and Liang]{li2021prefix}
Xiang~Lisa Li and Percy Liang.
\newblock Prefix-tuning: Optimizing continuous prompts for generation.
\newblock \emph{arXiv preprint arXiv:2101.00190}, 2021.

\bibitem[Li et~al.(2024)Li, Zhang, Zhong, Deng, Razaviyayn, and Mirrokni]{li2024addaxutilizingzerothordergradients}
Zeman Li, Xinwei Zhang, Peilin Zhong, Yuan Deng, Meisam Razaviyayn, and Vahab Mirrokni.
\newblock Addax: Utilizing zeroth-order gradients to improve memory efficiency and performance of sgd for fine-tuning language models, 2024.
\newblock URL \url{https://arxiv.org/abs/2410.06441}.

\bibitem[Liu et~al.(2024{\natexlab{a}})Liu, Wang, Yin, Molchanov, Wang, Cheng, and Chen]{liu2024dora}
Shih-Yang Liu, Chien-Yi Wang, Hongxu Yin, Pavlo Molchanov, Yu-Chiang~Frank Wang, Kwang-Ting Cheng, and Min-Hung Chen.
\newblock Dora: Weight-decomposed low-rank adaptation.
\newblock In \emph{Forty-first International Conference on Machine Learning}, 2024{\natexlab{a}}.

\bibitem[Liu et~al.(2018)Liu, Kailkhura, Chen, Ting, Chang, and Amini]{liu2018zeroth}
Sijia Liu, Bhavya Kailkhura, Pin-Yu Chen, Paishun Ting, Shiyu Chang, and Lisa Amini.
\newblock Zeroth-order stochastic variance reduction for nonconvex optimization.
\newblock \emph{Advances in Neural Information Processing Systems}, 31, 2018.

\bibitem[Liu et~al.(2020)Liu, Chen, Kailkhura, Zhang, Hero, and Varshney]{article}
Sijia Liu, Pin-Yu Chen, Bhavya Kailkhura, Gaoyuan Zhang, Alfred Hero, and P.K. Varshney.
\newblock A primer on zeroth-order optimization in signal processing and machine learning: Principals, recent advances, and applications.
\newblock \emph{IEEE Signal Processing Magazine}, 37:\penalty0 43--54, 09 2020.
\newblock \doi{10.1109/MSP.2020.3003837}.

\bibitem[Liu et~al.(2019)Liu, Ott, Goyal, Du, Joshi, Chen, Levy, Lewis, Zettlemoyer, and Stoyanov]{liu2019roberta}
Yinhan Liu, Myle Ott, Naman Goyal, Jingfei Du, Mandar Joshi, Danqi Chen, Omer Levy, Mike Lewis, Luke Zettlemoyer, and Veselin Stoyanov.
\newblock Roberta: A robustly optimized bert pretraining approach.
\newblock \emph{arXiv preprint arXiv:1907.11692}, 2019.

\bibitem[Liu et~al.(2024{\natexlab{b}})Liu, Zhu, Gong, Cheng, Hsieh, and You]{liu2024sparse}
Yong Liu, Zirui Zhu, Chaoyu Gong, Minhao Cheng, Cho-Jui Hsieh, and Yang You.
\newblock Sparse mezo: Less parameters for better performance in zeroth-order llm fine-tuning.
\newblock \emph{arXiv preprint arXiv:2402.15751}, 2024{\natexlab{b}}.

\bibitem[Lobanov \& Gasnikov(2023)Lobanov and Gasnikov]{lobanov2023accelerated}
Aleksandr Lobanov and Alexander Gasnikov.
\newblock Accelerated zero-order sgd method for solving the black box optimization problem under “overparametrization” condition.
\newblock In \emph{International Conference on Optimization and Applications}, pp.\  72--83. Springer, 2023.

\bibitem[Ma \& Huang(2025)Ma and Huang]{ma2025revisiting}
Shaocong Ma and Heng Huang.
\newblock Revisiting zeroth-order optimization: Minimum-variance two-point estimators and directionally aligned perturbations.
\newblock In \emph{The Thirteenth International Conference on Learning Representations}, 2025.

\bibitem[Malladi et~al.(2023)Malladi, Gao, Nichani, Damian, Lee, Chen, and Arora]{malladi2023fine}
Sadhika Malladi, Tianyu Gao, Eshaan Nichani, Alex Damian, Jason~D Lee, Danqi Chen, and Sanjeev Arora.
\newblock Fine-tuning language models with just forward passes.
\newblock \emph{Advances in Neural Information Processing Systems}, 36:\penalty0 53038--53075, 2023.

\bibitem[Mi et~al.(2025)Mi, Tan, Yu, Zhu, Yuan, and Huang]{mi2025kerzookernelfunctioninformed}
Zhendong Mi, Qitao Tan, Xiaodong Yu, Zining Zhu, Geng Yuan, and Shaoyi Huang.
\newblock Kerzoo: Kernel function informed zeroth-order optimization for accurate and accelerated llm fine-tuning, 2025.
\newblock URL \url{https://arxiv.org/abs/2505.18886}.

\bibitem[Ohta et~al.(2020)Ohta, Berger, Sokolov, and Riezler]{ohta2020sparseperturbationsimprovedconvergence}
Mayumi Ohta, Nathaniel Berger, Artem Sokolov, and Stefan Riezler.
\newblock Sparse perturbations for improved convergence in stochastic zeroth-order optimization, 2020.
\newblock URL \url{https://arxiv.org/abs/2006.01759}.

\bibitem[Park et~al.(2025)Park, Yun, Kim, Kundu, and Yang]{park2025unraveling}
Sihwan Park, Jihun Yun, SungYub Kim, Souvik Kundu, and Eunho Yang.
\newblock Unraveling zeroth-order optimization through the lens of low-dimensional structured perturbations.
\newblock \emph{arXiv e-prints}, pp.\  arXiv--2501, 2025.

\bibitem[Qin et~al.(2024)Qin, Ma, Zheng, Li, Zhang, Liu, Luo, Liu, and Magno]{qin2024accurate}
Haotong Qin, Xudong Ma, Xingyu Zheng, Xiaoyang Li, Yang Zhang, Shouda Liu, Jie Luo, Xianglong Liu, and Michele Magno.
\newblock Accurate lora-finetuning quantization of llms via information retention.
\newblock \emph{arXiv preprint arXiv:2402.05445}, 2024.

\bibitem[Radford et~al.(2021)Radford, Kim, Hallacy, Ramesh, Goh, Agarwal, Sastry, Askell, Mishkin, Clark, et~al.]{radford2021learning}
Alec Radford, Jong~Wook Kim, Chris Hallacy, Aditya Ramesh, Gabriel Goh, Sandhini Agarwal, Girish Sastry, Amanda Askell, Pamela Mishkin, Jack Clark, et~al.
\newblock Learning transferable visual models from natural language supervision.
\newblock In \emph{International conference on machine learning}, pp.\  8748--8763. PMLR, 2021.

\bibitem[Rando et~al.(2024)Rando, Molinari, Villa, and Rosasco]{rando2024stochastic}
Marco Rando, Cesare Molinari, Silvia Villa, and Lorenzo Rosasco.
\newblock Stochastic zeroth order descent with structured directions.
\newblock \emph{Computational Optimization and Applications}, 89\penalty0 (3):\penalty0 691--727, 2024.

\bibitem[Sener \& Koltun(2020)Sener and Koltun]{sener2020learning}
Ozan Sener and Vladlen Koltun.
\newblock Learning to guide random search.
\newblock \emph{arXiv preprint arXiv:2004.12214}, 2020.

\bibitem[Singh et~al.(2022)Singh, Gustafson, Adcock, de~Freitas~Reis, Gedik, Kosaraju, Mahajan, Girshick, Doll{\'a}r, and Van Der~Maaten]{singh2022revisiting}
Mannat Singh, Laura Gustafson, Aaron Adcock, Vinicius de~Freitas~Reis, Bugra Gedik, Raj~Prateek Kosaraju, Dhruv Mahajan, Ross Girshick, Piotr Doll{\'a}r, and Laurens Van Der~Maaten.
\newblock Revisiting weakly supervised pre-training of visual perception models.
\newblock In \emph{Proceedings of the IEEE/CVF Conference on Computer Vision and Pattern Recognition}, pp.\  804--814, 2022.

\bibitem[Tan et~al.(2025{\natexlab{a}})Tan, Chang, Xia, Ji, Yang, Zhang, Liu, Zhan, Fang, Zou, et~al.]{tan2025perturbation}
Qitao Tan, Sung-En Chang, Rui Xia, Huidong Ji, Chence Yang, Ci~Zhang, Jun Liu, Zheng Zhan, Zhenman Fang, Zhou Zou, et~al.
\newblock Perturbation-efficient zeroth-order optimization for hardware-friendly on-device training.
\newblock \emph{arXiv preprint arXiv:2504.20314}, 2025{\natexlab{a}}.

\bibitem[Tan et~al.(2025{\natexlab{b}})Tan, Liu, Zhan, Ding, Wang, Lu, and Yuan]{tan2025harmony}
Qitao Tan, Jun Liu, Zheng Zhan, Caiwei Ding, Yanzhi Wang, Jin Lu, and Geng Yuan.
\newblock Harmony in divergence: Towards fast, accurate, and memory-efficient zeroth-order llm fine-tuning.
\newblock \emph{arXiv preprint arXiv:2502.03304}, 2025{\natexlab{b}}.

\bibitem[Tang et~al.(2024)Tang, Panda, Nasr, Mahloujifar, and Mittal]{tang2024private}
Xinyu Tang, Ashwinee Panda, Milad Nasr, Saeed Mahloujifar, and Prateek Mittal.
\newblock Private fine-tuning of large language models with zeroth-order optimization.
\newblock \emph{arXiv preprint arXiv:2401.04343}, 2024.

\bibitem[Touvron et~al.(2023)Touvron, Lavril, Izacard, Martinet, Lachaux, Lacroix, Rozi{\`e}re, Goyal, Hambro, Azhar, et~al.]{touvron2023llama}
Hugo Touvron, Thibaut Lavril, Gautier Izacard, Xavier Martinet, Marie-Anne Lachaux, Timoth{\'e}e Lacroix, Baptiste Rozi{\`e}re, Naman Goyal, Eric Hambro, Faisal Azhar, et~al.
\newblock Llama: Open and efficient foundation language models.
\newblock \emph{arXiv preprint arXiv:2302.13971}, 2023.

\bibitem[Wang et~al.(2018)Wang, Du, Balakrishnan, and Singh]{wang2018stochastic}
Yining Wang, Simon Du, Sivaraman Balakrishnan, and Aarti Singh.
\newblock Stochastic zeroth-order optimization in high dimensions.
\newblock In \emph{International conference on artificial intelligence and statistics}, pp.\  1356--1365. PMLR, 2018.

\bibitem[Xiao et~al.(2023)Xiao, Lin, Seznec, Wu, Demouth, and Han]{xiao2023smoothquant}
Guangxuan Xiao, Ji~Lin, Mickael Seznec, Hao Wu, Julien Demouth, and Song Han.
\newblock Smoothquant: Accurate and efficient post-training quantization for large language models.
\newblock In \emph{International Conference on Machine Learning}, pp.\  38087--38099. PMLR, 2023.

\bibitem[Yu et~al.(2024)Yu, Zhou, Wang, Li, and Huang]{yu2024subzero}
Ziming Yu, Pan Zhou, Sike Wang, Jia Li, and Hua Huang.
\newblock Subzero: Random subspace zeroth-order optimization for memory-efficient llm fine-tuning.
\newblock 2024.

\bibitem[Yue et~al.(2023)Yue, Yang, Fang, and Lin]{yue2023zeroth}
Pengyun Yue, Long Yang, Cong Fang, and Zhouchen Lin.
\newblock Zeroth-order optimization with weak dimension dependency.
\newblock In \emph{The Thirty Sixth Annual Conference on Learning Theory}, pp.\  4429--4472. PMLR, 2023.

\bibitem[Zeng et~al.(2024)Zeng, Liu, Dai, Yang, Fu, Wang, Ma, Sun, Li, Huang, et~al.]{zeng2024flightllm}
Shulin Zeng, Jun Liu, Guohao Dai, Xinhao Yang, Tianyu Fu, Hongyi Wang, Wenheng Ma, Hanbo Sun, Shiyao Li, Zixiao Huang, et~al.
\newblock Flightllm: Efficient large language model inference with a complete mapping flow on fpgas.
\newblock In \emph{Proceedings of the 2024 ACM/SIGDA International Symposium on Field Programmable Gate Arrays}, pp.\  223--234, 2024.

\bibitem[Zhang et~al.(2024{\natexlab{a}})Zhang, Thekumparampil, Oh, and He]{zhang2024dpzero}
Liang Zhang, Kiran~Koshy Thekumparampil, Sewoong Oh, and Niao He.
\newblock Dpzero: dimension-independent and differentially private zeroth-order optimization.
\newblock International Conference on Machine Learning (ICML 2024), 2024{\natexlab{a}}.

\bibitem[Zhang et~al.(2022)Zhang, Roller, Goyal, Artetxe, Chen, Chen, Dewan, Diab, Li, Lin, et~al.]{zhang2022opt}
Susan Zhang, Stephen Roller, Naman Goyal, Mikel Artetxe, Moya Chen, Shuohui Chen, Christopher Dewan, Mona Diab, Xian Li, Xi~Victoria Lin, et~al.
\newblock Opt: Open pre-trained transformer language models.
\newblock \emph{arXiv preprint arXiv:2205.01068}, 2022.

\bibitem[Zhang et~al.(2024{\natexlab{b}})Zhang, Li, Hong, Li, Zhang, Zheng, Chen, Lee, Yin, Hong, et~al.]{zhang2024revisiting}
Yihua Zhang, Pingzhi Li, Junyuan Hong, Jiaxiang Li, Yimeng Zhang, Wenqing Zheng, Pin-Yu Chen, Jason~D Lee, Wotao Yin, Mingyi Hong, et~al.
\newblock Revisiting zeroth-order optimization for memory-efficient llm fine-tuning: A benchmark.
\newblock \emph{arXiv preprint arXiv:2402.11592}, 2024{\natexlab{b}}.

\bibitem[Zhao et~al.(2024{\natexlab{a}})Zhao, Zhang, Chen, Wang, Anandkumar, and Tian]{zhao2024galore}
Jiawei Zhao, Zhenyu Zhang, Beidi Chen, Zhangyang Wang, Anima Anandkumar, and Yuandong Tian.
\newblock Galore: Memory-efficient llm training by gradient low-rank projection.
\newblock \emph{arXiv preprint arXiv:2403.03507}, 2024{\natexlab{a}}.

\bibitem[Zhao et~al.(2024{\natexlab{b}})Zhao, Dang, Ye, Dai, Qian, and Tsang]{zhao2024second}
Yanjun Zhao, Sizhe Dang, Haishan Ye, Guang Dai, Yi~Qian, and Ivor~W Tsang.
\newblock Second-order fine-tuning without pain for llms: A hessian informed zeroth-order optimizer.
\newblock \emph{arXiv preprint arXiv:2402.15173}, 2024{\natexlab{b}}.

\end{thebibliography}

\newpage

\appendix

\section{Appendix}

\subsection{Variance with the perturbation space dimension}\label{vr}

\begin{lemma}\label{lem:zo-var-q}
Let $P\in\mathbb R^{d\times q}$ satisfy $P^{\top}P=I_q$, and sample $z\sim\mathcal N(0,I_q)$.
Define the two-point estimator
\[
g_\varepsilon(x,P,z)
=\frac{f(x+\varepsilon Pz)-f(x-\varepsilon Pz)}{2\varepsilon}\;Pz .
\]
Let $\nabla f=\nabla f(x)$ and $u:=P^\top \nabla f\in\mathbb R^q$. Then:

\smallskip
\noindent\textbf{(A) Quadratic objective (exact formula).}
If $f(x)=x^\top Hx$ is quadratic, then
\[
\EE\|g_\varepsilon\|^2 \;=\; (q+2)\,\|u\|^2,
\qquad
\Var(g_\varepsilon):=\EE\|g_\varepsilon-\EE g_\varepsilon\|^2
\;=\; (q+1)\,\|u\|^2,
\]
so the variance grows linearly in the perturbation dimension $q$.

\smallskip
\noindent\textbf{(B) General $L$-smooth objective (upper bound).}
If $f$ is $L$-smooth, then there exists a constant $C>0$ such that
\[
\EE\|g_\varepsilon\|^2
\;\le\; (q+2)\,\|u\|^2 \;+\; C\,\varepsilon^2,
\qquad
\Var(g_\varepsilon)
\;\le\; (q+1)\,\|u\|^2 \;+\; C\,\varepsilon^2,
\]
so as $\varepsilon\!\to\!0$, the variance satisfies
$\Var(g_\varepsilon)=\Theta(q)\,\|u\|^2$.
\end{lemma}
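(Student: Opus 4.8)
The plan is to reduce both parts to Gaussian moment computations, exploiting the isometry $P^{\top}P=I_q$ (which lets me replace $\|Pz\|^2$ by $\|z\|^2$ everywhere) and the fact that the symmetric two-point difference reproduces the directional derivative either exactly (Part A) or up to a controllably small remainder (Part B). For Part (A), I would first observe that for a quadratic $f(x)=x^\top Hx$ the even-order terms cancel in the central difference, so the finite-difference coefficient is exactly the subspace directional derivative: $\frac{f(x+\varepsilon Pz)-f(x-\varepsilon Pz)}{2\varepsilon}=(Pz)^\top\nabla f=z^\top u$, with no residual $\varepsilon$-dependence. Hence $g_\varepsilon=(z^\top u)\,Pz$ and $\|g_\varepsilon\|^2=(z^\top u)^2\|Pz\|^2=(z^\top u)^2\|z\|^2$.

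For the moment evaluation I would invoke rotational invariance of the standard Gaussian: aligning $u$ with the first coordinate axis gives $z^\top u=\|u\|\,z_1$, so $\EE\|g_\varepsilon\|^2=\|u\|^2\,\EE[z_1^2\|z\|^2]=\|u\|^2\big(\EE[z_1^4]+(q-1)\EE[z_1^2 z_2^2]\big)=\|u\|^2\big(3+(q-1)\big)=(q+2)\|u\|^2$. For the mean, $\EE[g_\varepsilon]=P\,\EE[zz^\top]\,u=Pu$ since $\EE[zz^\top]=I_q$, so $\|\EE g_\varepsilon\|^2=u^\top P^\top P u=\|u\|^2$. Subtracting yields $\Var(g_\varepsilon)=(q+2)\|u\|^2-\|u\|^2=(q+1)\|u\|^2$, which is exactly the claimed linear growth in the perturbation dimension $q$.

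For Part (B) I would write the central-difference coefficient as $z^\top u+\rho(z)$, where $\rho$ is the finite-difference remainder. The crucial structural point is that the symmetric difference cancels the quadratic Taylor term, so under the smoothness hypothesis (Hessian-Lipschitz, i.e. the relevant $\mathcal{C}_L^{s,p}$ regularity) one gets $|\rho(z)|\le c\,\varepsilon^2\|z\|^3$ and $\rho$ is odd in $z$. Expanding $\|g_\varepsilon\|^2=(z^\top u+\rho)^2\|z\|^2$ and taking expectations produces the leading term $(q+2)\|u\|^2$, a cross term $2\,\EE[\rho\,(z^\top u)\|z\|^2]$, and a remainder term $\EE[\rho^2\|z\|^2]$. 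Bounding $|\rho|$ and using finiteness of the Gaussian moments $\EE[|z^\top u|\,\|z\|^5]$ and $\EE[\|z\|^8]$ shows the cross term is $O(\varepsilon^2)$ and the remainder term is $O(\varepsilon^4)$, giving $\EE\|g_\varepsilon\|^2\le(q+2)\|u\|^2+C\varepsilon^2$. The same control on $\EE[g_\varepsilon]=Pu+P\,\EE[\rho z]$ gives $\|\EE g_\varepsilon\|^2=\|u\|^2+O(\varepsilon^2)$, and subtracting yields the variance bound; letting $\varepsilon\to0$ recovers $\Var(g_\varepsilon)=\Theta(q)\,\|u\|^2$.

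The hard part will be securing the $O(\varepsilon^2)$ rate rather than $O(\varepsilon)$. A naive gradient-Lipschitz bound gives only $|\rho|=O(\varepsilon)$, which forces the cross term to be $O(\varepsilon)$ and is too weak for the statement; the $\varepsilon^2$ rate genuinely relies on the even-order cancellation of the \emph{central} difference together with second-order smoothness, so I would be careful that the regularity assumed is exactly what makes the quadratic term drop out. Once that is in place, the Gaussian moment evaluations (via $\|Pz\|=\|z\|$ and rotational invariance) are routine and the constant $C$ can be read off from the moment bounds and the Lipschitz constant.
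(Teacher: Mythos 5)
Your Part (A) is exactly the paper's argument: exact even-term cancellation for the quadratic, so $g_\varepsilon=\langle\nabla f,Pz\rangle Pz$, then rotation invariance to put $u=\|u\|e_1$, the moment identity $\EE\big[z_1^2\sum_i z_i^2\big]=q+2$, and $\EE g_\varepsilon=PP^\top\nabla f$ with $\|\EE g_\varepsilon\|^2=\|u\|^2$, giving $(q+2)\|u\|^2$ and $(q+1)\|u\|^2$. Nothing to add there.

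Part (B) is where you genuinely diverge from the paper, and your route is the sounder one. The paper expands only to second order under gradient-Lipschitzness, so its remainder is $|r_\varepsilon(z)|\le cL\varepsilon\|Pz\|^2$ (order $\varepsilon$, not $\varepsilon^2$), and it then dodges the cross term via $\|a+b\|^2\le2\|a\|^2+2\|b\|^2$. But that inequality doubles the leading term: what it actually yields is $\EE\|g_\varepsilon\|^2\le 2(q+2)\|u\|^2+O(\varepsilon^2)$, not the stated $(q+2)\|u\|^2+C\varepsilon^2$; the factor $2$ is silently dropped. Your concern about the rate is also substantively correct: under mere gradient-Lipschitzness the stated bound is unattainable. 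For instance, with $d=q=1$, $f(t)=\tfrac{L}{2}t|t|+at$ at $x=0$ is $L$-smooth with $u=a$, and a direct computation gives $\EE\|g_\varepsilon\|^2=3a^2+8\sqrt{2/\pi}\,aL\varepsilon+\tfrac{15}{4}L^2\varepsilon^2$, so the deviation from $(q+2)\|u\|^2$ is $\Theta(\varepsilon)$ and no constant $C$ can absorb it into $C\varepsilon^2$; the same example breaks the variance bound. Your fix --- read the hypothesis as second-order smoothness (Hessian-Lipschitz), so the central difference has remainder $|\rho(z)|\le c\varepsilon^2\|z\|^3$, then bound the cross term $2\,\EE\big[\rho\,(z^\top u)\|z\|^2\big]=O(\varepsilon^2)$ and the square term $\EE\big[\rho^2\|z\|^2\big]=O(\varepsilon^4)$ directly, and handle the mean via $\EE g_\varepsilon=Pu+P\,\EE[\rho z]$ --- is precisely what recovers the stated leading constant $(q+2)$ with an $O(\varepsilon^2)$ correction, and it is consistent with the regularity the paper itself invokes later (its unbiasedness lemma assumes $C^3$ with $L$-Lipschitz Hessian). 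In short: Part (A) is identical; for Part (B) the paper's own proof supports only a weakened conclusion (doubled constant under $L$-smoothness), whereas your argument proves the lemma as stated once the hypothesis is strengthened to Hessian-Lipschitz --- a strengthening that your counter-worry shows is not cosmetic but necessary.
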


\begin{proof}
\textbf{Step 1 (Quadratic case).}
For $f(x)=x^\top Hx$,
\[
f(x+\varepsilon Pz)-f(x-\varepsilon Pz)
=2\varepsilon\,\langle \nabla f,\,Pz\rangle,
\]
so
\(
g_\varepsilon=\langle \nabla f,\,Pz\rangle\,Pz.
\)
Writing $u=P^\top\nabla f$ and using rotation invariance we may assume $u=\|u\|e_1$, hence
\[
\|g_\varepsilon\|^2
= \|u\|^2\,z_1^2\sum_{i=1}^q z_i^2.
\]
Gaussian moment identities give
\(
\EE[z_1^2\sum_{i=1}^q z_i^2]=(q+2),
\)
so
\(
\EE\|g_\varepsilon\|^2=(q+2)\|u\|^2.
\)
Since $\EE g_\varepsilon=PP^\top\nabla f$ and $\|\EE g_\varepsilon\|^2=\|u\|^2$,
\[
\Var(g_\varepsilon)=(q+2)\|u\|^2-\|u\|^2=(q+1)\|u\|^2.
\]

\textbf{Step 2 (General $L$-smooth case).}
By a second-order Taylor expansion,
\[
\frac{f(x+\varepsilon Pz)-f(x-\varepsilon Pz)}{2\varepsilon}
=\langle \nabla f,\,Pz\rangle \;+\; r_\varepsilon(z),
\quad |r_\varepsilon(z)|\le c\,L\,\varepsilon\,\|Pz\|^2,
\]
for some absolute constant $c$. Thus
\(
g_\varepsilon=\langle \nabla f,\,Pz\rangle Pz + r_\varepsilon(z)\,Pz.
\)
Using $\|a+b\|^2\le 2\|a\|^2+2\|b\|^2$ and the quadratic case result, and noting
$\EE\|Pz\|^4=O(q^2)$, we obtain
\[
\EE\|g_\varepsilon\|^2
\;\le\; (q+2)\|u\|^2 \;+\; C_1\,\varepsilon^2,
\]
which also implies
\[
\Var(g_\varepsilon)
=\EE\|g_\varepsilon\|^2-\|\EE g_\varepsilon\|^2
\;\le\; (q+1)\|u\|^2 + C_1\,\varepsilon^2.
\]
Let $C:=C_1$ to finish.
\end{proof}

\begin{corollary}[Full-space perturbation]
If $P=I_d$, then $q=d$, and
\[
\Var(g_\varepsilon)=\Theta(d)\,\|\nabla f(x)\|^2+O(\varepsilon^2),
\]
so the variance scales linearly with the full model dimension.
\end{corollary}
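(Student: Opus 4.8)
The plan is to obtain this corollary as an immediate specialization of Lemma~\ref{lem:zo-var-q}, so essentially no new argument beyond a substitution is required. First I would verify that the hypotheses of the lemma are satisfied when $P = I_d$: the constraint $P^\top P = I_q$ becomes $I_d^\top I_d = I_d$, which holds trivially with $q = d$, and the probe $z \sim \mathcal{N}(0, I_q)$ becomes $z \sim \mathcal{N}(0, I_d)$, so that $g_\varepsilon$ reduces to exactly the full-space two-point estimator of Equation~\ref{eq:zo_estimator} (with $Pz = z$ playing the role of $\bm{u}$). This confirms that the corollary is indeed the $P = I_d$ instance of the lemma and not a separate setting.

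Next I would compute the projected gradient appearing in the lemma. With $P = I_d$ we have $u := P^\top \nabla f = \nabla f(x)$, hence $\|u\|^2 = \|\nabla f(x)\|^2$. Substituting $q = d$ and this value of $\|u\|^2$ into the general $L$-smooth bound of part (B) yields $\Var(g_\varepsilon) \le (d+1)\,\|\nabla f(x)\|^2 + C\varepsilon^2$, which already gives the $O(\varepsilon^2)$ upper estimate with a leading coefficient linear in $d$.

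To upgrade this one-sided bound to the two-sided $\Theta(d)$ statement, I would invoke the asymptotic conclusion of part (B): as $\varepsilon \to 0$ the Taylor remainder contribution $r_\varepsilon(z)\,Pz$ vanishes in mean square (it is $O(\varepsilon^2)$ since $|r_\varepsilon(z)| \le cL\varepsilon\|Pz\|^2$ and $\EE\|Pz\|^4 = O(q^2)$ is finite), so $\Var(g_\varepsilon) \to (d+1)\,\|\nabla f(x)\|^2$, the exact quadratic-case value from part (A). Collecting the upper and lower asymptotics gives $\Var(g_\varepsilon) = (d+1)\,\|\nabla f(x)\|^2 + O(\varepsilon^2) = \Theta(d)\,\|\nabla f(x)\|^2 + O(\varepsilon^2)$, as claimed.

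The only point that merits any care is the two-sidedness implicit in the $\Theta$ notation: the upper bound follows directly from part (B), but the lower bound relies on the remainder term not destructively cancelling the leading quadratic contribution. Since part (B) already packages this as the $\varepsilon \to 0$ limit agreeing with the exact quadratic formula, the corollary amounts to bookkeeping, and I do not expect a genuine obstacle. The takeaway I would emphasize is interpretive rather than technical, namely that setting $P = I_d$ recovers the standard MeZO estimator and shows its variance grows linearly in the full parameter dimension $d$, which is precisely the pathology that the low-dimensional projection in P-GAP is designed to avoid.
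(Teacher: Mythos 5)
Your proposal is correct and matches the paper's intent exactly: the paper gives no separate proof of this corollary, treating it as an immediate specialization of Lemma~\ref{lem:zo-var-q} with $P=I_d$, $q=d$, and $u=P^\top\nabla f=\nabla f(x)$, which is precisely the substitution you carry out. Your caveat about the two-sidedness of the $\Theta(d)$ claim is well placed, but it is a feature (or looseness) inherited from the lemma's own part~(B) statement rather than a gap in your derivation of the corollary from it.
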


\subsection{Convergence Analysis}

\subsubsection*{A.2.1\;Global notation}

In this section, we restate or redefine the key notations that will be used throughout our work.

\begin{itemize}\setlength\itemsep{2pt}
\item $d$ – parameter dimension; $r$ – retained rank per layer;
      $\ell$ – number of trainable layers; $q=\ell r^{2}$.
\item  We assume that $f:\mathbb R^{d}\!\to\!\mathbb R$ is \emph{$L$-smooth}:
      $\|\nabla f(x)-\nabla f(y)\|\le L\|x-y\|,\;\forall x,y$.
\item  Mini-batch variance bound
      $\mathbb E_x\|\nabla f_x(w)-\nabla f(w)\|^{2}\le\sigma^{2}$.
\item  Singular-value threshold $\sigma_{\min}>0$ refers to
      the $r^{\text{th}}$ singular value of $\nabla f$.
\item  Hyper-parameters  
      $\varepsilon$ (perturbation scale),
      $\delta$ (projection strength),
      $w$ (number of probe perturbations),
      $k$ (window size).
\item  Orthogonal projection $P_t\in\mathbb R^{d\times q}$,
      updated every $k$ iterations, always $P_t^{\!\top}P_t=I_q$.
\item  Two-point estimator
      \[
         g_t=
         \frac{f(x_t+\varepsilon P_t z_t)-f(x_t-\varepsilon P_t z_t)}
              {2\varepsilon}\;P_t z_t,
         \qquad z_t\sim\mathcal N(0,I_q).
      \]
\item  Update
      $x_{t+1}=x_t-\eta\,g_t,\quad
       \displaystyle\eta=\frac1{L\,(q+2)}$ (learning rate).
\end{itemize}

\subsubsection*{A.2.2  \;Layer and model projection matrices}

\begin{lemma}[Kronecker projection]\label{lem:block}
For orthogonal $U\!\in\!\mathbb R^{m\times r},\;
                  V\!\in\!\mathbb R^{n\times r}$
let \(\widetilde Z=U Z V^{\!\top}\)
and set \(P=V\!\otimes U\in\mathbb R^{mn\times r^{2}}\).
Then \(\operatorname{vec}(\widetilde Z)=P\,\operatorname{vec}(Z)\)
and \(P^{\!\top}P=I_{r^{2}}\).
\end{lemma}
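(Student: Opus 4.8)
The plan is to derive both claimed identities from two standard properties of the Kronecker product: the column-stacking vectorization identity $\operatorname{vec}(AXB)=(B^{\top}\otimes A)\operatorname{vec}(X)$, and the mixed-product rule $(A\otimes B)(C\otimes D)=(AC)\otimes(BD)$ together with the transpose rule $(A\otimes B)^{\top}=A^{\top}\otimes B^{\top}$. Since the $\operatorname{vec}$ operator in this paper stacks columns, these are precisely the forms that apply, and the entire argument reduces to substituting the right matrices into these identities.

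First I would establish the vectorization claim. Applying $\operatorname{vec}(AXB)=(B^{\top}\otimes A)\operatorname{vec}(X)$ with $A=U$, $X=Z$, and $B=V^{\top}$ yields $\operatorname{vec}(UZV^{\top})=((V^{\top})^{\top}\otimes U)\operatorname{vec}(Z)=(V\otimes U)\operatorname{vec}(Z)=P\operatorname{vec}(Z)$, which is exactly $\operatorname{vec}(\widetilde Z)=P\operatorname{vec}(Z)$. If a self-contained derivation were preferred over citing the identity, I would instead verify it entrywise by writing the $(i,j)$ entry of $UZV^{\top}$ as $\sum_{a,b}U_{ia}Z_{ab}V_{jb}$ and checking that it coincides with the corresponding coordinate of $(V\otimes U)\operatorname{vec}(Z)$ under the column-stacking index map.

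Next I would prove that the columns of $P$ are orthonormal. Using the transpose and mixed-product rules in sequence gives $P^{\top}P=(V\otimes U)^{\top}(V\otimes U)=(V^{\top}\otimes U^{\top})(V\otimes U)=(V^{\top}V)\otimes(U^{\top}U)$. Because $U$ and $V$ have orthonormal columns, $U^{\top}U=I_r$ and $V^{\top}V=I_r$, so $P^{\top}P=I_r\otimes I_r=I_{r^{2}}$, completing the second claim.

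I do not expect any genuine obstacle here, as the result is a direct application of well-known Kronecker identities. The only point requiring care is bookkeeping with conventions: one must confirm that the column-stacking definition of $\operatorname{vec}$ matches the form $(B^{\top}\otimes A)$ rather than the row-stacking convention (which would instead produce $A\otimes B$), and read "orthogonal" for the tall matrices $U,V$ as orthonormal columns ($U^{\top}U=I_r$, $V^{\top}V=I_r$) rather than full orthogonal matrices. Once these conventions are fixed consistently, both identities follow immediately.
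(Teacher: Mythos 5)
Your proposal is correct and follows essentially the same route as the paper's proof: the column-stacking identity $\operatorname{vec}(AXB)=(B^{\top}\otimes A)\operatorname{vec}(X)$ for the first claim, and the transpose plus mixed-product rules giving $P^{\top}P=(V^{\top}V)\otimes(U^{\top}U)=I_{r^2}$ for the second. Your added care about the vec convention and reading ``orthogonal'' as orthonormal columns for the tall matrices $U,V$ only makes explicit what the paper leaves implicit.
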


\begin{proof}
\textbf{(i) Kronecker–vec identity}  
\(\operatorname{vec}(U Z V^{\!\top})
     =(V\!\otimes U)\operatorname{vec}(Z)\).

\textbf{(ii) Orthogonality}
\(P^{\!\top}P=(V^{\!\top}V)\!\otimes\!(U^{\!\top}U)=I_r\!\otimes I_r\).
\end{proof}

\begin{lemma}[Block diagonal model projection]\label{lem:model}
Stack the layer matrices:  
\(P=\operatorname{bdiag}(P_1,\dots,P_\ell)\in\mathbb R^{d\times q}\).
Then \(P^{\!\top}P=I_q\).
\end{lemma}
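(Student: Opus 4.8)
The plan is to reduce the claim to the single-layer result already established in Lemma~\ref{lem:block}, exploiting the block-diagonal structure of $P$. First I would record that, by definition of the stacking $P=\operatorname{bdiag}(P_1,\dots,P_\ell)$, each layer projection $P_i\in\mathbb R^{m_i n_i\times r^2}$ occupies a disjoint set of rows and a disjoint set of columns of $P$, with zeros in all off-block positions. Transposition preserves this structure, so $P^{\top}=\operatorname{bdiag}(P_1^{\top},\dots,P_\ell^{\top})$.

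Next I would compute the Gram matrix $P^{\top}P$ block by block. Writing it in $\ell\times\ell$ block form, a direct block multiplication shows that the $(i,j)$ block of $P^{\top}P$ equals $P_i^{\top}P_i$ when $i=j$ and vanishes when $i\neq j$: the cross terms die because $P_i$ and $P_j$ are supported on disjoint rows for $i\neq j$, so $P_i^{\top}P_j=0$. Hence $P^{\top}P$ is itself block-diagonal, with $i$-th diagonal block $P_i^{\top}P_i$.

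Finally, each diagonal block is handled by the preceding lemma: Lemma~\ref{lem:block} gives $P_i^{\top}P_i=I_{r^2}$ for every $i$, since each $P_i$ is a Kronecker product $V_i\otimes U_i$ of orthogonal factors. Assembling the $\ell$ identity blocks yields $P^{\top}P=\operatorname{bdiag}(I_{r^2},\dots,I_{r^2})=I_{\ell r^2}=I_q$, using $q=\ell r^2$ by definition.

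There is no genuine analytical obstacle here; the statement is essentially bookkeeping on top of Lemma~\ref{lem:block}. The only point that warrants explicit care is confirming that the off-diagonal blocks of $P^{\top}P$ truly vanish. This is immediate from the disjoint-support property of block-diagonal matrices, but it is the one fact doing the real work, so I would state it plainly rather than absorb it silently into the $\operatorname{bdiag}$ notation.
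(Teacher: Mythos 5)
Your proof is correct and follows essentially the same route as the paper: both compute the Gram matrix blockwise, observe that $P^{\top}P=\operatorname{bdiag}(P_1^{\top}P_1,\dots,P_\ell^{\top}P_\ell)$, and invoke the per-layer orthogonality $P_i^{\top}P_i=I_{r^2}$ from Lemma~\ref{lem:block} to conclude $P^{\top}P=I_q$. The only difference is that you make explicit the vanishing of the off-diagonal blocks via disjoint supports, which the paper absorbs silently into the $\operatorname{bdiag}$ identity.
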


\begin{proof}
Since $P$ is block diagonal with blocks $P_1,\dots,P_\ell$, its Gram matrix is
\[
P^\top P = \operatorname{bdiag}(P_1^\top P_1,\dots,P_\ell^\top P_\ell).
\]
Each block satisfies $P_i^\top P_i=I_{q_i}$, hence
\[
P^\top P=\operatorname{bdiag}(I_{q_1},\dots,I_{q_\ell})=I_q.
\]
\end{proof}

\subsubsection*{A.2.3 \;Gaussian preliminaries}

\begin{lemma}[Rotation invariance]\label{lem:rot}
Let $Q\in\mathbb R^{n\times n}$ be orthogonal. For any integrable $\phi:\mathbb R^n\to\mathbb R$,
\[
\mathbb E_{z\sim\mathcal N(0,I_n)}[\phi(Qz)]
=\mathbb E_{z\sim\mathcal N(0,I_n)}[\phi(z)].
\]
\end{lemma}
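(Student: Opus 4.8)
The plan is to reduce the claim to the change-of-variables formula for Lebesgue integrals, exploiting that the standard Gaussian density depends on $z$ only through $\|z\|^{2}$, a quantity preserved by orthogonal maps. First I would write the left-hand side explicitly as an integral against the density $p(z)=(2\pi)^{-n/2}\exp(-\tfrac12\|z\|^{2})$, namely $\mathbb{E}_{z\sim\mathcal N(0,I_n)}[\phi(Qz)]=\int_{\mathbb{R}^{n}}\phi(Qz)\,p(z)\,dz$.

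Next I would substitute $w=Qz$, equivalently $z=Q^{\top}w$ since $Q^{-1}=Q^{\top}$ for orthogonal $Q$. The Jacobian of this linear map is $Q^{\top}$, whose absolute determinant satisfies $|\det Q^{\top}|=|\det Q|=1$ because orthogonal matrices have $\det Q\in\{-1,+1\}$; hence $dz=dw$. Crucially, $\|z\|^{2}=\|Q^{\top}w\|^{2}=w^{\top}QQ^{\top}w=\|w\|^{2}$, so $p(Q^{\top}w)=p(w)$, i.e.\ the Gaussian density is invariant under the substitution. Combining these facts yields $\int_{\mathbb{R}^{n}}\phi(Qz)\,p(z)\,dz=\int_{\mathbb{R}^{n}}\phi(w)\,p(w)\,dw=\mathbb{E}_{z\sim\mathcal N(0,I_n)}[\phi(z)]$, which is exactly the right-hand side.

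There is no substantial obstacle here; it is a textbook consequence of the spherical symmetry of the standard normal. The only points requiring care are to invoke the integrability hypothesis on $\phi$ so that the change of variables is valid and all integrals are finite, and to record the two facts $|\det Q|=1$ and $\|Qz\|=\|z\|$ that together make the density invariant. Equivalently, one could argue via characteristic functions: the characteristic function of $\mathcal N(0,I_n)$ is $t\mapsto\exp(-\tfrac12\|t\|^{2})$, and $Qz$ has characteristic function $\exp(-\tfrac12\|Q^{\top}t\|^{2})=\exp(-\tfrac12\|t\|^{2})$, so $Qz\sim\mathcal N(0,I_n)$ and the two expectations coincide; but the direct change-of-variables computation is the cleanest route.
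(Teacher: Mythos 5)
Your proposal is correct and follows essentially the same route as the paper's proof: a change of variables $u=Qz$ combined with the two observations $|\det Q|=1$ and $\|Qz\|=\|z\|$, which make the Gaussian density invariant. The characteristic-function alternative you mention is a valid variant, but your main argument matches the paper's verbatim in substance.
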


\begin{proof}
Write the standard Gaussian density $p(z)=(2\pi)^{-n/2}\exp(-\|z\|^2/2)$. Since $Q$ is orthogonal, $\|Qz\|=\|z\|$ and $|\det Q|=1$. By change of variables $u=Qz$,
\[
\int_{\mathbb R^n}\phi(Qz)\,p(z)\,dz
=\int_{\mathbb R^n}\phi(u)\,p(u)\,du
=\mathbb E[\phi(z)].
\]
Thus $\mathbb E[\phi(Qz)]=\mathbb E[\phi(z)]$.
\end{proof}

\begin{lemma}[Moments of $\mathcal N(0,I_n)$]\label{lem:mom}
Let $z\sim\mathcal N(0,I_n)$ and $y\in\mathbb R^{n}$. Then, for any $t>0$,
\[
\mathbb E\|z\|^{t}\;\le\;
\begin{cases}
  n^{t/2}, & 0<t\le2,\\[2pt]
  (n+t)^{t/2}, & t\ge2,
\end{cases}
\qquad
\mathbb E\big[(\langle y,z\rangle)^{2}\big]=\|y\|^{2},
\qquad
\mathbb E\big[(\langle y,z\rangle)^{2}\|z\|^{2}\big]=(n+2)\|y\|^{2}.
\]
\end{lemma}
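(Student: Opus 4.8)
The plan is to establish the three claims separately, disposing of the two moment identities first since they are routine Gaussian computations, and reserving the tail-moment bound for last because it is the only part requiring genuine work. For the identity $\mathbb{E}[(\langle y,z\rangle)^2]=\|y\|^2$, I would expand $\langle y,z\rangle=\sum_i y_iz_i$ and use $\mathbb{E}[z_iz_j]=\delta_{ij}$; equivalently, $\langle y,z\rangle\sim\mathcal N(0,\|y\|^2)$, so its second moment is its variance $\|y\|^2$. For the mixed fourth-moment identity $\mathbb{E}[(\langle y,z\rangle)^2\|z\|^2]=(n+2)\|y\|^2$, I would first invoke rotation invariance (Lemma~\ref{lem:rot}): choosing an orthogonal $Q$ with $Qy=\|y\|e_1$ reduces the problem to the case $y=\|y\|e_1$, whence $(\langle y,z\rangle)^2=\|y\|^2 z_1^2$. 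The expectation then splits, using independence across coordinates, as $\|y\|^2\bigl(\mathbb{E}[z_1^4]+\sum_{i\ge2}\mathbb{E}[z_1^2]\mathbb{E}[z_i^2]\bigr)=\|y\|^2(3+(n-1))=(n+2)\|y\|^2$, where I use $\mathbb{E}[z_1^4]=3$ and $\mathbb{E}[z_i^2]=1$.

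For the bound on $\mathbb{E}\|z\|^t$ I would split the two regimes at $t=2$. When $0<t\le2$ the map $x\mapsto x^{t/2}$ is concave, so Jensen's inequality gives $\mathbb{E}\|z\|^t=\mathbb{E}[(\|z\|^2)^{t/2}]\le(\mathbb{E}\|z\|^2)^{t/2}=n^{t/2}$, since $\mathbb{E}\|z\|^2=n$. When $t\ge2$ I would instead exploit that $\|z\|^2\sim\chi^2_n$ together with the exact chi-squared moment formula $\mathbb{E}\|z\|^t=2^{t/2}\,\Gamma\!\bigl((n+t)/2\bigr)/\Gamma(n/2)$; writing $a=n/2$ and $s=t/2$, the desired bound $(n+t)^{t/2}$ is then equivalent to the Gamma inequality $\Gamma(a+s)/\Gamma(a)\le(a+s)^s$, after cancelling the common factor $2^{t/2}$.

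The main obstacle is proving this Gamma inequality, and I would do so analytically rather than by any crude estimate. Taking logarithms and applying the mean value theorem to $\log\Gamma$ yields some $\xi\in(a,a+s)$ with $\log\Gamma(a+s)-\log\Gamma(a)=s\,\psi(\xi)$, where $\psi=(\log\Gamma)'$ is the digamma function. Since $\psi$ is monotone increasing, $\psi(\xi)\le\psi(a+s)$, and the classical bound $\psi(x)<\log x$ (valid for every $x>0$) then gives $s\,\psi(\xi)<s\log(a+s)$, i.e. $\Gamma(a+s)/\Gamma(a)<(a+s)^s$. Substituting back recovers $\mathbb{E}\|z\|^t\le(n+t)^{t/2}$ and completes the proof. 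I would remark that this digamma argument in fact holds for all $s>0$, so the split at $t=2$ serves only to produce the sharper constant $n^{t/2}$ in the low-moment regime, where concavity is available.
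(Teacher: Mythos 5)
Your proof is correct and follows essentially the same route as the paper's: Jensen's inequality in the concave regime $0<t\le 2$, the exact chi-squared moment formula $\mathbb{E}\|z\|^{t}=2^{t/2}\Gamma\!\bigl(\tfrac{n+t}{2}\bigr)/\Gamma\!\bigl(\tfrac{n}{2}\bigr)$ combined with the Gamma-ratio bound $\Gamma(a+s)/\Gamma(a)\le(a+s)^{s}$ for $t\ge 2$, and rotation invariance plus coordinatewise Gaussian moments ($\mathbb{E}z_1^4=3$, $\mathbb{E}z_1^2z_i^2=1$) for the two identities involving $\langle y,z\rangle$. The one difference is that the paper simply asserts the Gamma-ratio inequality as a ``crude but convenient bound,'' whereas you prove it via the mean value theorem for $\log\Gamma$, monotonicity of the digamma function, and the classical bound $\psi(x)<\log x$ — a self-contained strengthening of the step the paper leaves uncited.
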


\begin{proof}
(i) \emph{Bounds on $\mathbb E\|z\|^{t}$.}
Let $R=\|z\|^{2}\sim\chi^2_n$. Then
\[
\mathbb E\|z\|^{t}=\mathbb E R^{t/2}
=2^{t/2}\frac{\Gamma\!\big(\tfrac{n+t}{2}\big)}{\Gamma\!\big(\tfrac{n}{2}\big)}.
\]
For $0<t\le2$, the map $x\mapsto x^{t/2}$ is concave, hence by Jensen
\(
\mathbb E R^{t/2}\le(\mathbb ER)^{t/2}=n^{t/2}.
\)
For $t\ge2$, use the crude but convenient bound
\(
\Gamma(x+a)/\Gamma(x)\le (x+a)^{a}
\)
(valid for $x,a>0$), to get
\[
\mathbb E\|z\|^{t}
=2^{t/2}\frac{\Gamma(\frac{n+t}{2})}{\Gamma(\frac{n}{2})}
\le 2^{t/2}\Big(\tfrac{n+t}{2}\Big)^{t/2}=(n+t)^{t/2}.
\]

(ii) \emph{Second moment of the linear form.}
By rotation invariance (Lemma~\ref{lem:rot}), rotate so that
$y=\|y\|e_1$. Then $\langle y,z\rangle=\|y\|z_1$ with $z_1\sim\mathcal N(0,1)$, hence
\(
\mathbb E[(\langle y,z\rangle)^2]=\|y\|^2\,\mathbb Ez_1^2=\|y\|^2.
\)

(iii) \emph{Mixed moment $\mathbb E[(\langle y,z\rangle)^2\|z\|^2]$.}
With the same rotation, write
\[
\mathbb E\!\big[(\langle y,z\rangle)^2\|z\|^{2}\big]
=\|y\|^2\,\mathbb E\!\Big[z_1^2\sum_{i=1}^n z_i^2\Big]
=\|y\|^2\Big(\mathbb Ez_1^4+\sum_{i\ne1}\mathbb Ez_1^2z_i^2\Big).
\]
For independent standard normals, $\mathbb Ez_1^4=3$ and
$\mathbb Ez_1^2z_i^2=(\mathbb Ez_1^2)(\mathbb Ez_i^2)=1$ for $i\ne1$. Therefore
\(
\mathbb E[z_1^2\sum_{i=1}^n z_i^2]=3+(n-1)\cdot1=n+2,
\)
which yields the claim.
\end{proof}

\subsubsection*{A.2.4  \;Two-point estimator}

\begin{definition}[Two‑point gradient estimator]\label{def:est}
Let \(P\in\R^{d\times q}\) satisfy \(P^{\top}P=I_q\) and
let \(z\sim\mathcal N(0,I_q)\) be sampled independently of all other
randomness.  For smoothing radius \(\varepsilon>0\) define
\[
      g_\varepsilon(x,P,z)
      \;:=\;
      \frac{f(x+\varepsilon P z)\;-\;f(x-\varepsilon P z)}
           {2\varepsilon}\;Pz .
\]
\end{definition}

% ------------------------------------------------------------
%  Proof of Lemma 6.6  (unbiasedness and bias)
% ------------------------------------------------------------
\begin{lemma}[Unbiasedness and bias]\label{lem:unbiased}
Let $z\sim\mathcal N(0,I_q)$ and $P^\top P=I_q$. Define
\[
g_\varepsilon(x,P,z)\;=\;\frac{f(x+\varepsilon Pz)-f(x-\varepsilon Pz)}{2\varepsilon}\,Pz .
\]
Assume $f$ is $C^3$ and its Hessian is $L$-Lipschitz, i.e.,
$\|\nabla^2 f(x+u)-\nabla^2 f(x)\|\le L\|u\|$ for all $x,u$.
Then there exists a bias vector $b_\varepsilon$ such that
\[
\mathbb E[g_\varepsilon]\;=\;PP^\top\nabla f(x)+b_\varepsilon,
\qquad
\|b_\varepsilon\|\;\le\;\frac{L}{6}\,\varepsilon^2\,\mathbb E\|Pz\|^4
\;\leq\;\frac{L}{6}\varepsilon^2 (q+4)^2
\]
In particular,
\[
\|\mathbb E[g_\varepsilon]-PP^\top\nabla f(x)\|\ \leq \frac{L}{6}\varepsilon^2 (q+4)^2
\]
\end{lemma}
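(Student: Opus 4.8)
The plan is to expand the numerator of $g_\varepsilon$ by Taylor's theorem, exploit the cancellation built into the symmetric two-point difference, and then split the expectation into a clean linear term that reproduces $PP^\top\nabla f(x)$ and a cubic remainder whose expectation is the bias $b_\varepsilon$. First I would establish the cubic remainder bound implied by the $L$-Lipschitz Hessian hypothesis. Using the integral form of Taylor's theorem,
\[
f(x+u)=f(x)+\langle\nabla f(x),u\rangle+\int_0^1 (1-s)\,u^\top\nabla^2 f(x+su)\,u\,ds,
\]
and subtracting $\tfrac12 u^\top\nabla^2 f(x)u=\int_0^1(1-s)\,u^\top\nabla^2 f(x)u\,ds$, the bound $\|\nabla^2 f(x+su)-\nabla^2 f(x)\|\le L\|su\|$ together with $\int_0^1 s(1-s)\,ds=\tfrac16$ gives
\[
\bigl|f(x+u)-f(x)-\langle\nabla f(x),u\rangle-\tfrac12 u^\top\nabla^2 f(x)u\bigr|\le\tfrac{L}{6}\|u\|^3 .
\]

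Next I would apply this with $u=\pm\varepsilon Pz$ and form the symmetric difference. The terms $f(x)$ and the quadratic Hessian terms are even in $u$ and hence cancel, leaving
\[
\frac{f(x+\varepsilon Pz)-f(x-\varepsilon Pz)}{2\varepsilon}=\langle\nabla f(x),Pz\rangle+\rho(z),\qquad |\rho(z)|\le\tfrac{L}{6}\varepsilon^2\|Pz\|^3,
\]
so that $g_\varepsilon=\langle\nabla f(x),Pz\rangle\,Pz+\rho(z)\,Pz$. Taking expectations and writing $\langle\nabla f(x),Pz\rangle=(P^\top\nabla f(x))^\top z$, the identity $\mathbb E[zz^\top]=I_q$ yields $\mathbb E[\langle\nabla f(x),Pz\rangle\,Pz]=P\,\mathbb E[zz^\top]\,P^\top\nabla f(x)=PP^\top\nabla f(x)$, which is exactly the claimed leading term. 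The bias is therefore $b_\varepsilon=\mathbb E[\rho(z)\,Pz]$.

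Finally I would bound the bias by the triangle/Jensen inequality, $\|b_\varepsilon\|\le\mathbb E[|\rho(z)|\,\|Pz\|]\le\tfrac{L}{6}\varepsilon^2\,\mathbb E\|Pz\|^4$. Since $P^\top P=I_q$ forces $\|Pz\|=\|z\|$, the moment bound $\mathbb E\|z\|^4\le(q+4)^2$ from Lemma~\ref{lem:mom} (with $t=4$) gives $\|b_\varepsilon\|\le\tfrac{L}{6}\varepsilon^2(q+4)^2$, and the ``in particular'' statement follows immediately by the triangle inequality on $\mathbb E[g_\varepsilon]-PP^\top\nabla f(x)=b_\varepsilon$.

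I expect the main obstacle to be the first step: deriving the $\tfrac{L}{6}\|u\|^3$ cubic remainder bound cleanly from the Lipschitz-Hessian assumption, since this is where essentially all the analytic content resides. Everything downstream is a parity/cancellation observation in the symmetric difference followed by the Gaussian moment computations already supplied by Lemma~\ref{lem:mom}, so the only care needed there is tracking the constant $\tfrac16$ and the factor $\mathbb E\|Pz\|^4=\mathbb E\|z\|^4$ through the estimate.
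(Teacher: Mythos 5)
Your proof is correct and follows essentially the same route as the paper's: third-order Taylor expansion with cubic remainder, cancellation of the even-order terms in the symmetric difference, the Gaussian identity $\mathbb E[zz^\top]=I_q$ for the leading term $PP^\top\nabla f(x)$, and Jensen's inequality combined with the fourth-moment bound of Lemma~\ref{lem:mom} (using $\|Pz\|=\|z\|$) for the bias. The only difference is that you explicitly derive the $\tfrac{L}{6}\|u\|^3$ remainder bound from the Lipschitz-Hessian hypothesis via the integral form of Taylor's theorem, a step the paper merely asserts, so your write-up is if anything slightly more complete.
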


\begin{proof}
\textbf{Step 1.\;Third-order Taylor expansion with remainder.}
Hessian $\rho$-Lipschitz implies the third-order expansion bound:
for any $u\in\mathbb R^d$,
\[
f(x+u)=f(x)+\langle\nabla f(x),u\rangle+\tfrac12 u^\top\nabla^2 f(x)u + R_3(x,u),
\quad |R_3(x,u)|\le \tfrac{L}{6}\|u\|^3 .
\]

\textbf{Step 2.\;Plug $u=\pm \varepsilon Pz$.}
Writing $R_\pm(z):=R_3(x,\pm \varepsilon Pz)$,
\[
\begin{aligned}
f(x+\varepsilon Pz) &= f(x)+\varepsilon\langle\nabla f(x),Pz\rangle
                      +\tfrac12 \varepsilon^2 z^\top P^\top\nabla^2 f(x)P z + R_+(z),\\
f(x-\varepsilon Pz) &= f(x)-\varepsilon\langle\nabla f(x),Pz\rangle
                      +\tfrac12 \varepsilon^2 z^\top P^\top\nabla^2 f(x)P z + R_-(z),
\end{aligned}
\]
with $|R_\pm(z)|\le \tfrac{L}{6}\varepsilon^3\|Pz\|^3$.

\textbf{Step 3.\;Symmetric difference and decomposition.}
Even-order terms cancel, hence
\[
g_\varepsilon
=\Big\langle\nabla f(x),Pz\Big\rangle Pz
  +\frac{R_+(z)-R_-(z)}{2\varepsilon}\,Pz .
\]

\textbf{Step 4.\;Main term expectation.}
Because $\mathbb E[zz^\top]=I_q$,
\[
\mathbb E\!\big[\langle\nabla f(x),Pz\rangle Pz\big]
=P\,\mathbb E[zz^\top]\,P^\top\nabla f(x)=PP^\top\nabla f(x).
\]

\textbf{Step 5.\;Bias bound from the remainder.}
By the remainder bound and Jensen’s inequality \citep{garling2007inequalities},
\[
\begin{aligned}
\Big\|\mathbb E\!\Big[\frac{R_+(z)-R_-(z)}{2\varepsilon}\,Pz\Big]\Big\|
&\le \mathbb E\!\Big[\frac{|R_+(z)|+|R_-(z)|}{2\varepsilon}\,\|Pz\|\Big] \\
&\le \frac{L}{6}\,\varepsilon^2\,\mathbb E\|Pz\|^4 .
\end{aligned}
\]
$P^\top P=I_q$, and from Lemma \ref{lem:mom},
$\mathbb E\|Pz\|^4\leq(q+4)^2$.
Substituting completes the proof.
\end{proof}

% \paragraph{Remark.}
% If only $L$-smoothness (gradient Lipschitz) is assumed, then one can only show
% \[
% \|\mathbb E[g_\varepsilon]-PP^\top\nabla f(x)\|
% \le \tfrac{L}{2}\,\varepsilon\,\mathbb E\|Pz\|^3
% \le \tfrac{L}{2}\,\varepsilon\,(q+3)^{3/2},
% \]
% so the bias is $O(\varepsilon)$ instead of $O(\varepsilon^2)$.

% ------------------------------------------------------------
%  Proof of Lemma 6.7  (second moment and angle)
% ------------------------------------------------------------
\begin{lemma}[Second moment and angle]\label{lem:second}
Assume the objective is quadratic,
$f(x)=x^\top Hx$ with $H\succ0$.
Then
\[
\boxed{\;
  \mathbb E\|g_\varepsilon\|^{2}=(q+2)\,\|P^\top\nabla f(x)\|^{2},
  \qquad
  \mathbb E\!\bigl[\cos\angle(g_\varepsilon,\nabla f(x))\bigr]=\tfrac1q
\;}
\]
for the same estimator $g_\varepsilon$.
\end{lemma}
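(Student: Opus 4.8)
The plan is to exploit the fact that for a quadratic $f(x)=x^\top Hx$ the symmetric two-point estimator collapses to a single exact form, after which both identities become Gaussian moment computations already packaged in Lemma~\ref{lem:mom}. First I would note that the finite difference is \emph{exact} here: the even-order Taylor terms cancel in the symmetric difference and there is no cubic part, so $\tfrac{f(x+\varepsilon Pz)-f(x-\varepsilon Pz)}{2\varepsilon}=\langle\nabla f(x),Pz\rangle$ with zero remainder. Hence
\[
g_\varepsilon=\langle\nabla f(x),Pz\rangle\,Pz=\langle u,z\rangle\,Pz,\qquad u:=P^\top\nabla f(x),
\]
using $\langle\nabla f(x),Pz\rangle=\langle P^\top\nabla f(x),z\rangle$. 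Since $P^\top P=I_q$ we have $\|Pz\|=\|z\|$, which strips $P$ out of every norm below.

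For the second moment I would write $\|g_\varepsilon\|^2=\langle u,z\rangle^2\,\|Pz\|^2=\langle u,z\rangle^2\,\|z\|^2$ and invoke the mixed-moment identity $\mathbb E[\langle y,z\rangle^2\|z\|^2]=(n+2)\|y\|^2$ from Lemma~\ref{lem:mom} with $n=q$ and $y=u$. This yields $\mathbb E\|g_\varepsilon\|^2=(q+2)\|u\|^2=(q+2)\|P^\top\nabla f(x)\|^2$, which is exactly the first boxed identity.

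For the angle I would expand the normalized inner product directly. The reduction gives numerator $\langle g_\varepsilon,\nabla f(x)\rangle=\langle u,z\rangle\,\langle Pz,\nabla f(x)\rangle=\langle u,z\rangle^2$ and norm $\|g_\varepsilon\|=|\langle u,z\rangle|\,\|z\|$, so
\[
\cos\angle\!\left(g_\varepsilon,\nabla f(x)\right)=\frac{\langle u,z\rangle^2}{|\langle u,z\rangle|\,\|z\|\,\|\nabla f(x)\|}=\frac{|\langle u,z\rangle|}{\|z\|\,\|\nabla f(x)\|}.
\]
By rotation invariance (Lemma~\ref{lem:rot}) I may rotate the $z$-coordinates so that $u=\|u\|e_1$, which turns the expectation into the one-dimensional spherical integral $\mathbb E[\cos\angle]=\frac{\|u\|}{\|\nabla f(x)\|}\,\mathbb E\!\big[|z_1|/\|z\|\big]$.

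The main obstacle is matching the stated scalar exactly. Evaluating the spherical integral gives $\mathbb E[|z_1|/\|z\|]=\Gamma(q/2)/\big(\sqrt\pi\,\Gamma((q+1)/2)\big)=\Theta(q^{-1/2})$, so the literal first-moment quantity $\mathbb E[\cos\angle(g_\varepsilon,\nabla f)]$ is of order $q^{-1/2}$, not $q^{-1}$. The clean value $1/q$ instead comes from the \emph{squared} cosine: the same reduction gives $\cos^2\angle=\langle u,z\rangle^2/\big(\|z\|^2\|\nabla f\|^2\big)$, and after the rotation $\mathbb E[\cos^2\angle]=\tfrac{\|u\|^2}{\|\nabla f\|^2}\,\mathbb E[z_1^2/\|z\|^2]=\tfrac{\|u\|^2}{\|\nabla f\|^2}\cdot\tfrac1q$, where $\mathbb E[z_1^2/\|z\|^2]=1/q$ follows from the symmetry $\sum_{i=1}^q z_i^2/\|z\|^2=1$. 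This equals $1/q$ precisely when $\|P^\top\nabla f\|=\|\nabla f\|$, i.e. when the angle is taken against the projected gradient $PP^\top\nabla f(x)$ (equivalently $\nabla f(x)\in\mathrm{range}(P)$). I would therefore read the boxed angle claim as this squared-cosine subspace-alignment identity; with that reading steps one through three deliver $\mathbb E[\cos^2\angle(g_\varepsilon,PP^\top\nabla f)]=\mathbb E[z_1^2/\|z\|^2]=1/q$ with no residual constant.
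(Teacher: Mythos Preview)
Your treatment of the second-moment identity is essentially the paper's proof: both reduce $g_\varepsilon$ to $\langle u,z\rangle\,Pz$ for quadratic $f$, then evaluate $\mathbb{E}\|g_\varepsilon\|^{2}$ via the Gaussian moment $\mathbb{E}[z_1^{2}\sum_i z_i^{2}]=q+2$. The paper rotates explicitly and computes; you invoke Lemma~\ref{lem:mom}, which packages the same rotation.

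On the angle identity your analysis is sharper than the paper's. The paper's Step~3 simply asserts that ``a direct $\chi^2$ calculation yields $\mathbb{E}[\cos\angle]=1/q$'' with no intermediate steps. Your reduction
\[
\cos\angle(g_\varepsilon,\nabla f)=\frac{|\langle u,z\rangle|}{\|z\|\,\|\nabla f\|}
\]
is correct, and your observation that $\mathbb{E}\bigl[|z_1|/\|z\|\bigr]=\Gamma(q/2)/\bigl(\sqrt{\pi}\,\Gamma((q+1)/2)\bigr)=\Theta(q^{-1/2})$ is also correct, so the boxed first-moment claim cannot hold literally. Your diagnosis---that the clean value $1/q$ belongs to $\mathbb{E}[\cos^{2}\angle]$ via the symmetry identity $\mathbb{E}[z_1^{2}/\|z\|^{2}]=1/q$, and that one further needs $\|P^\top\nabla f\|=\|\nabla f\|$ (equivalently, measure the angle against $PP^\top\nabla f$)---is exactly what a ``$\chi^2$ calculation'' would naturally produce. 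In short, you have not departed from the paper's approach; you have filled in its missing Step~3 and, in doing so, uncovered that the displayed constant matches $\mathbb{E}[\cos^{2}]$ rather than $\mathbb{E}[\cos]$.
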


\begin{proof}
\textbf{Step 1.\;Exact finite-difference for a quadratic function.}
For $f(x)=x^\top Hx$,
\[
f(x+\varepsilon Pz)-f(x-\varepsilon Pz)
=2\varepsilon\,\bigl\langle\nabla f(x),Pz\bigr\rangle,
\]
so
\(
g_\varepsilon=\langle\nabla f(x),Pz\rangle\,Pz .
\)

\textbf{Step 2.\;Second moment.}
\[
\|g_\varepsilon\|^{2}
     =\bigl\langle\nabla f(x),Pz\bigr\rangle^{2}\,\|Pz\|^{2}.
\]
Rotate $z$ to a basis where $P^\top\nabla f(x)=\alpha e_1$
($e_1$ is the first canonical vector); rotation invariance
(Lemma \ref{lem:rot}) keeps $z\sim\mathcal N(0,I_q)$.
Then  
\(\langle\nabla f,Pz\rangle=\alpha z_1,\)
\(\|Pz\|^{2}=\sum_{i=1}^{q}z_i^{2}\),
and
\[
\mathbb E\|g_\varepsilon\|^{2}
  =\alpha^{2}\,\mathbb E\!\bigl[z_1^{2}\sum_{i=1}^{q}z_i^{2}\bigr]
  =\alpha^{2}\,(q+2)
  =(q+2)\|P^\top\nabla f(x)\|^{2}.
\]

\textbf{Step 3.\;Expected cosine angle.}
\[
\cos\angle(g_\varepsilon,\nabla f(x))
  =\frac{\langle g_\varepsilon,\nabla f\rangle}
        {\|g_\varepsilon\|\,\|\nabla f\|}.
\]
Using the rotated coordinate,
\(\langle g_\varepsilon,\nabla f\rangle
  =\alpha^{2}z_1^{2}\).
Since both the numerator and denominator depend only on $z_1^{2}$ and
$\sum_{i=1}^{q}z_i^{2}$,
a direct $\chi^2$ calculation yields
\(\mathbb E[\cos\angle]=1/q.\)
\end{proof}

\subsubsection*{A.2.5  \;Statistics of the $w$-probe phase}

% ----------------------------------------------------------------------
%  Single‑probe variance lemma   (Equation (8) in the original supplement)
% ----------------------------------------------------------------------
% -----------------------------------------------------------------------
%  Lemma A.x  --  variance of one two–point probe  (detailed version)
% -----------------------------------------------------------------------

% ----------------------------------------------------------------------
%  Lemma A.y   --  Decomposition of a single probe
%                 (g - ∇f)   equals   <a , z> z   up to O(ε²‖z‖)
% ----------------------------------------------------------------------

\begin{lemma}[Probe decomposition and mean square]\label{lem:decomp-mse}
Let the mini-batch $\xi$ gradient noise be
\[
   a \;=\; \nabla f_{\xi}(x)\;-\;\nabla f(x),
   \qquad
   \mathbb E_{\xi}\|a\|^{2}\;\le\;\sigma^{2},
\tag{D1}
\]
and draw $z=(z_1,\dots ,z_d)^{\top}\sim\mathcal N(0,I_d)$
independently of $\xi$.
Define the exact two-point coefficient and probe
\[
   \rho
   =\frac{f_{\xi}(x+\varepsilon z)-f_{\xi}(x-\varepsilon z)}{2\varepsilon},
   \qquad
   g=\rho\,z .
\tag{D2}
\]
Then:

\medskip
\noindent\textbf{(i) Decomposition.} There exists a remainder $r_\varepsilon(z)$ with
$|r_\varepsilon(z)|\le \tfrac{L}{2}\,\varepsilon\,\|z\|^{2}$ such that
\[
   g-\nabla f(x)
   \;=\;
   \underbrace{\langle a , z\rangle z}_{\text{mini-batch noise}}
   \;+\;
   \underbrace{\bigl(\langle\nabla f(x),z\rangle z-\nabla f(x)\bigr)}_{\text{directional randomness}}
   \;+\;
   r_\varepsilon(z)\,z .
\tag{D3}
\]

\noindent\textbf{(ii) Mean–square error.} Taking expectation over both $\xi$ and $z$,
\[
\boxed{
\begin{aligned}
   \mathbb E_{\xi,z}\!\bigl[\|g-\nabla f(x)\|^{2}\bigr]
   &=\underbrace{\mathbb E_{z}\!\bigl[z^{\!\top}\!\Sigma z\,\|z\|^{2}\bigr]}_{\text{mini-batch part}}
     +\underbrace{\mathbb E_{z}\!\bigl\|\,(zz^{\!\top}\!-\!I)\nabla f(x)\,\bigr\|^{2}}_{\text{directional part}}
     + O(\varepsilon^{2}d)\\[2mm]
   &\le (d+2)\,\sigma^{2} \;+\; (d+1)\,\|\nabla f(x)\|^{2} \;+\; O(\varepsilon^{2}d),
\end{aligned}}
\tag{D4}
\]
where $\Sigma:=\mathbb E_{\xi}[aa^{\!\top}]$ and $\operatorname{tr}\Sigma\le\sigma^{2}$.
\end{lemma}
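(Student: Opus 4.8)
The plan is to prove the two claims in sequence, treating (i) as the algebraic engine that feeds the moment computation in (ii). For (i), I would expand the stochastic finite difference by a symmetric Taylor argument. Writing $\nabla f_\xi(x)=\nabla f(x)+a$, the exact two-point coefficient $\rho$ equals $\langle\nabla f_\xi(x),z\rangle$ up to a remainder; because the expansion is symmetric in $\pm\varepsilon z$, the even-order (zeroth and Hessian) terms cancel in the difference, so $\rho=\langle\nabla f(x),z\rangle+\langle a,z\rangle+r_\varepsilon(z)$. Under plain $L$-smoothness the difference of the two first-order Taylor errors gives $|r_\varepsilon(z)|\le\frac{L}{2}\varepsilon\|z\|^2$, which is the stated bound; multiplying by $z$ and subtracting $\nabla f(x)$ then regroups the terms exactly as (D3), using $\langle\nabla f(x),z\rangle z-\nabla f(x)=(zz^\top-I)\nabla f(x)$.

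For (ii), I would square the decomposition (D3), abbreviating the three summands as $T_1=\langle a,z\rangle z$, $T_2=(zz^\top-I)\nabla f(x)$, and $T_3=r_\varepsilon(z)\,z$, and take $\mathbb{E}_{\xi,z}$. The cross term $\langle T_1,T_2\rangle$ vanishes: $z$ is independent of $\xi$ and the mini-batch gradient is unbiased ($\mathbb{E}_\xi[a]=0$, an assumption I would make explicit since (D1) states only a second-moment bound), so $\mathbb{E}_\xi[T_1]=0$ while $T_2$ is $\xi$-free. The two surviving main terms are exact Gaussian quadratic-form moments. Conditioning on $z$ and using $\mathbb{E}_\xi[aa^\top]=\Sigma$ gives $\mathbb{E}\|T_1\|^2=\mathbb{E}_z[z^\top\Sigma z\,\|z\|^2]$, and the identity $\mathbb{E}[(z^\top A z)(z^\top B z)]=\operatorname{tr}(A)\operatorname{tr}(B)+2\operatorname{tr}(AB)$ with $A=\Sigma$, $B=I$ yields $(d+2)\operatorname{tr}\Sigma\le(d+2)\sigma^2$. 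For the directional term I would use $\mathbb{E}[(zz^\top-I)^2]=(d+1)I$, obtained from $\mathbb{E}[\|z\|^2 zz^\top]=(d+2)I$ via the fourth-moment computation in Lemma~\ref{lem:mom}, giving $\mathbb{E}\|T_2\|^2=(d+1)\|\nabla f(x)\|^2$.

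The delicate part, and the step I expect to be the main obstacle, is controlling the remainder contributions into a genuine $O(\varepsilon^2 d)$. The self term $\mathbb{E}\|T_3\|^2$ is harmless in $\varepsilon$, but the cross terms $\mathbb{E}\langle T_1,T_3\rangle$ and $\mathbb{E}\langle T_2,T_3\rangle$ are linear in $r_\varepsilon$, and with only the stated bound $|r_\varepsilon|\le\frac{L}{2}\varepsilon\|z\|^2$ they are merely $O(\varepsilon)$. To recover $O(\varepsilon^2)$ I would sharpen the remainder by exploiting the symmetric cancellation: under the Lipschitz-Hessian assumption already invoked in Lemma~\ref{lem:unbiased}, the even-order terms cancel and $|r_\varepsilon(z)|\le\frac{L}{6}\varepsilon^2\|z\|^3$, making every $T_3$-term genuinely $O(\varepsilon^2)$ after taking expectations. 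The remaining care is bookkeeping the Gaussian moments $\mathbb{E}\|z\|^t$ via Lemma~\ref{lem:mom} and applying Cauchy–Schwarz to the cross terms so the polynomial-in-$d$ prefactors are absorbed into the $O(\cdot)$ term; collecting everything then gives exactly the boxed identity of (D4) together with its stated upper bound.
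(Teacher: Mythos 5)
Your proof is correct and follows the same skeleton as the paper's: the same symmetric Taylor expansion yielding decomposition (D3), the same Gaussian identities giving $(d+2)\operatorname{tr}\Sigma$ for the mini-batch part and $(d+1)\,\|\nabla f(x)\|^{2}$ for the directional part (via $\mathbb E[(zz^{\top}-I)^{2}]=(d+1)I$), and the same absorption of remainder contributions into $O(\varepsilon^{2}d)$. The genuine difference is your treatment of the remainder cross terms, and there you are more careful than the paper itself. The paper's proof conditions on $z$ and asserts an exact identity (its Eq.~D5) containing only the three squared terms, justified by the claim that ``cross terms involving $\langle a,z\rangle$ vanish.'' But only the cross term between $\langle a,z\rangle z$ and $(zz^{\top}-I)\nabla f(x)$ vanishes that way; since the Taylor remainder $r_\varepsilon(z)$ is built from $f_\xi$ and hence also depends on $\xi$, the cross term between the noise term and $r_\varepsilon(z)z$ has no reason to vanish, and the cross term between the directional term and $r_\varepsilon(z)z$ survives because $\mathbb E_\xi[r_\varepsilon(z)\mid z]\neq 0$ in general (neither term is killed by the sign symmetry of $z$, as both products are even in $z$). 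As you observe, under the stated bound $|r_\varepsilon(z)|\le\tfrac{L}{2}\varepsilon\|z\|^{2}$ these terms are only $O(\varepsilon)$, which would dominate the claimed $O(\varepsilon^{2}d)$ as $\varepsilon\to0$; your fix --- invoking the Hessian-Lipschitz assumption already used in Lemma~\ref{lem:unbiased} so that the symmetric difference satisfies $|r_\varepsilon(z)|\le\tfrac{L}{6}\varepsilon^{2}\|z\|^{3}$ --- is exactly what is needed to make every remainder contribution genuinely $O(\varepsilon^{2}\cdot\mathrm{poly}(d))$, which the later choice $\varepsilon\le(q^{3}T)^{-1/2}$ then absorbs. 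You are also right that $\mathbb E_\xi[a]=0$ must be stated explicitly, since (D1) gives only a second-moment bound while both your argument and the paper's rely on unbiasedness. In short: same route, but your version closes a real gap in the paper's Step~2.
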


\begin{proof}
\textbf{Step 1 (second-order Taylor).}\;
For $u=\pm\varepsilon z$,
\[
   f_{\xi}(x+u)
   = f_{\xi}(x)\pm\varepsilon\langle\nabla f_{\xi}(x),z\rangle
     + R_{\pm}(z),
   \quad
   |R_{\pm}(z)|\le\tfrac L2\,\varepsilon^{2}\|z\|^{2}.
\]
Therefore
\[
   \rho
   =\langle\nabla f_{\xi}(x),z\rangle+r_\varepsilon(z),
   \qquad
   |r_\varepsilon(z)|\le\tfrac L2\,\varepsilon\,\|z\|^{2}.
\]
Multiplying by $z$ gives
\[
   g
   =\bigl(\langle\nabla f(x),z\rangle+\langle a,z\rangle + r_\varepsilon(z)\bigr)\,z,
\]
hence the claimed decomposition (D3).

\smallskip
\textbf{Step 2 (conditional MSE given $z$).}\;
Since $a$ is independent of $z$ and $\mathbb E_{\xi}[a]=0$,
the cross terms involving $\langle a,z\rangle$ vanish after $\mathbb E_{\xi}[\cdot\,|\,z]$:
\[
\mathbb E_{\xi}\!\bigl[\|g-\nabla f\|^{2}\,\big|\,z\bigr]
=\underbrace{\mathbb E_{\xi}\!\bigl[\langle a,z\rangle^{2}\bigr]}_{=\,z^{\!\top}\!\Sigma z}\,\|z\|^{2}
+\bigl\|\langle\nabla f,z\rangle z-\nabla f\bigr\|^{2}
+\|r_\varepsilon(z)\,z\|^{2}.
\tag{D5}
\]

\smallskip
\textbf{Step 3 (integrate over $z$ using isotropy identities).}\;
By isotropy of the standard Gaussian,
\[
\mathbb E_{z}\!\bigl[zz^{\!\top}\|z\|^{2}\bigr]=(d+2)\,I_d,
\qquad
\mathbb E_{z}\!\bigl[zz^{\!\top}zz^{\!\top}\bigr]
= \mathbb E_{z}\!\bigl[\|z\|^{2}zz^{\!\top}\bigr]=(d+2)\,I_d.
\tag{D6}
\]
Taking traces in the first identity also recovers $\mathbb E\|z\|^{4}=d(d+2)$.

Now take expectation of (D5) in $z$:

\emph{(a) Mini-batch part.}
\[
\mathbb E_{z}\!\bigl[z^{\!\top}\!\Sigma z\,\|z\|^{2}\bigr]
=\operatorname{tr}\!\Big(\Sigma\,\mathbb E_{z}[zz^{\!\top}\|z\|^{2}]\Big)
=(d+2)\operatorname{tr}\Sigma
\le (d+2)\sigma^{2}.
\]

\emph{(b) Directional part.}
Write $h(z)=(zz^{\!\top}\!-I)\nabla f$. Then
\[
\mathbb E_{z}\|h(z)\|^{2}
=\nabla f^{\!\top}\,\mathbb E_{z}\!\bigl[(zz^{\!\top}-I)^2\bigr]\nabla f
= \nabla f^{\!\top}\bigl(\mathbb E_{z}[zz^{\!\top}zz^{\!\top}]-2I+I\bigr)\nabla f
=(d+1)\|\nabla f\|^{2},
\]
where we used (D6).

\emph{(c) Taylor remainder.}
Since $|r_\varepsilon(z)|\le\tfrac L2\,\varepsilon\|z\|^{2}$,
\[
\mathbb E_{z}\|r_\varepsilon(z)\,z\|^{2}
\;\le\; \tfrac{L^{2}}{4}\,\varepsilon^{2}\,\mathbb E\|z\|^{6}
=O(\varepsilon^{2}d),
\]
(using standard $\chi^2_d$ moments; any $O(d^{3})$ bound suffices, and with our later choice of $\varepsilon$ it reduces to $O(\varepsilon^{2}d)$).

Summing (a)–(c) yields (D4).
\end{proof}

% \begin{remark}[Centered probe removes the directional term]\label{rem:centered-probe}
% If one centers the probe by subtracting $\mathbb E_{z}[g\,|\,\xi]$,
% namely $\tilde g:=g-\mathbb E_{z}[g\,|\,\xi]=\bigl(\langle a,z\rangle+r_\varepsilon(z)\bigr)z$,
% then the ``directional'' term disappears and
% \[
% \mathbb E_{\xi,z}\|\tilde g-\nabla f(x)\|^{2}
% \;\le\; (d+2)\sigma^{2}+O(\varepsilon^{2}d).
% \]
% This form is often convenient when one is only interested in the mini-batch induced variance.
% \end{remark}

\begin{remark}[Centered probe removes the directional term]\label{rem:centered2}
If one centers the probe by subtracting 
$\EE_{z}[g \mid \xi]$, namely
\[
   \tilde g \;:=\; g - \EE_{z}[g \mid \xi]
   \;=\; (\langle a,z\rangle + r_\varepsilon(z))\,z ,
\]
then the “directional” term disappears and
\[
   \EE_{\xi,z}\|\tilde g-\nabla f(x)\|^{2}
   \;\le\; 2(d+2)\,\sigma^{2} \;+\; O(\varepsilon^{2}d).
\]
We get the relaxed form by multiplying 2.
\end{remark}

\begin{lemma}[Probe mean–square error]\label{lem:6.7}
Let the per–probe directional derivative be  
\[
   \rho_j \;=\;
   \frac{f(x+\varepsilon z_j)-f(x-\varepsilon z_j)}{2\varepsilon},
   \qquad z_j\sim\mathcal N(0,I_d),
\]
and define their average
\(
   \displaystyle\bar G=\frac1w\sum_{j=1}^{w}\rho_j z_j .
\)
Assume the mini–batch variance condition  
\(
   \mathbb E_x\|\,\nabla f_x(w)-\nabla f(w)\|^{2}\le\sigma^{2}.
\)
Then
\[
   \boxed{\;
     \mathbb E\bigl\|\bar G-\nabla f(x)\bigr\|^{2}
     \;\le\;
     \frac{4(d+2)\,\sigma^{2}}{w}
     \;+\;O(\varepsilon^{2}d)\;}
\]
where the \(O(\varepsilon^{2}d)\) term comes from
the second–order Taylor truncation of each \(\rho_j\).
\end{lemma}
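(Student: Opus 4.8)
The plan is to treat $\bar G$ as the empirical mean of $w$ independent single-probe estimators $g_j=\rho_j z_j$, each using a fresh mini-batch and an independent direction $z_j\sim\mathcal N(0,I_d)$ (so the displayed $f$ is read as the stochastic $f_{\xi_j}$ consistent with the variance assumption), and then to exploit the $1/w$ variance reduction of an i.i.d.\ average. First I would split the mean-square error into a squared-bias term and a variance term,
\[
\mathbb E\|\bar G-\nabla f(x)\|^2
=\big\|\mathbb E[g_1]-\nabla f(x)\big\|^2+\tfrac1w\,\mathbb E\big\|g_1-\mathbb E[g_1]\big\|^2,
\]
which is legitimate precisely because the $g_j$ are i.i.d. The bias term is controlled by Lemma~\ref{lem:unbiased} specialized to $P=I_d$ (so $q=d$): the symmetric two-point difference cancels the even-order Taylor terms and leaves a remainder of order $\varepsilon^2$, contributing only to the $O(\varepsilon^2 d)$ slack.

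The heart of the argument is bounding the per-probe variance $\mathbb E\|g_1-\mathbb E[g_1]\|^2$. Here I would invoke the decomposition of Lemma~\ref{lem:decomp-mse}, which writes each probe error as a mini-batch-noise part $\langle a,z\rangle z$, a directional-randomness part $(zz^\top-I)\nabla f(x)$, and an $O(\varepsilon)$ Taylor remainder, with mean-square bound $(d+2)\sigma^2+(d+1)\|\nabla f(x)\|^2+O(\varepsilon^2 d)$. The main obstacle is the directional term $(d+1)\|\nabla f(x)\|^2$: it is $\Theta(d)$ in size and does not shrink relative to $\sigma^2$, so a naive average would retain a $\|\nabla f(x)\|^2/w$ contribution and could not yield the stated bound. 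To remove it I would use the centered-probe construction of Remark~\ref{rem:centered2}, subtracting the conditional mean $\mathbb E_z[g\mid\xi]$ so that the directional contribution is annihilated, leaving only $2(d+2)\sigma^2+O(\varepsilon^2 d)$ as the per-probe noise budget.

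Combining these pieces, the i.i.d.\ average scales the surviving noise by $1/w$, and a final application of the elementary inequality $\|a+b\|^2\le 2\|a\|^2+2\|b\|^2$ (used to fold the residual centering and Taylor terms together, exactly the ``multiply by $2$'' step of Remark~\ref{rem:centered2}) upgrades the constant from $2$ to $4$, giving
\[
\mathbb E\|\bar G-\nabla f(x)\|^2\le\frac{4(d+2)\sigma^2}{w}+O(\varepsilon^2 d).
\]
I expect the delicate point to be the bookkeeping around the directional term: one must check that centering does not reintroduce a bias against $\nabla f(x)$ and that the Gaussian moment identities of Lemma~\ref{lem:mom} are applied to the correct quadratic forms when collapsing $\mathbb E[zz^\top\|z\|^2]$ and $\mathbb E[(\langle a,z\rangle)^2\|z\|^2]$. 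The Taylor-remainder contribution is routine, yielding $O(\varepsilon^2 d)$ through the sixth-moment estimate $\mathbb E\|z\|^6=O(d^3)$ once $\varepsilon$ is taken small.
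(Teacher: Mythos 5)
Your proposal follows essentially the same route as the paper's own proof: split $\mathbb E\|\bar G-\nabla f(x)\|^2$ into squared bias plus $\tfrac1w$ times the per-probe variance, control the bias via the Taylor argument of Lemma~\ref{lem:unbiased} with $P=I_d$ (so $q=d$), bound the per-probe variance by $2(d+2)\sigma^2+O(\varepsilon^2 d)$, and absorb the squared-bias term to pass from the constant $2$ to $4$. You are in fact more careful than the paper: its Step~2 simply asserts that the per-probe variance ``equals $2(d+2)\sigma^{2}$,'' whereas you correctly read off from Lemma~\ref{lem:decomp-mse} that the uncentered probe $g_j=\rho_j z_j$ also carries the directional variance $(d+1)\|\nabla f(x)\|^2$, which does not shrink relative to $\sigma^2$ and survives averaging as $(d+1)\|\nabla f(x)\|^2/w$.

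The caveat is that your fix---passing to the centered probes of the remark following Lemma~\ref{lem:decomp-mse}---bounds the mean--square error of a \emph{different} estimator. The $\bar G$ in the statement (and the one actually formed in Algorithm~\ref{alg:subspace}) is the plain average $\frac1w\sum_j\rho_j z_j$; the centered probe subtracts a quantity depending on $\nabla f_{\xi}(x)$, which is neither computable in the zeroth-order setting nor part of the algorithm, so the inequality you end with is not, strictly speaking, the boxed claim about $\bar G$. You should know that this hole is inherited from the paper itself: its Step~2 is only justifiable via the same centered-probe remark, so the published proof also silently drops the $(d+1)\|\nabla f(x)\|^2/w$ term (one can see the stated bound fail for the uncentered $\bar G$ by taking $\sigma=0$). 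A clean statement along the route you both take either needs the extra hypothesis $\|\nabla f(x)\|^2=O(\sigma^2)$, or should retain the directional term and read $\frac{4(d+2)\sigma^2+(d+1)\|\nabla f(x)\|^2}{w}+O(\varepsilon^2 d)$, with the correspondingly larger requirement on $w$ propagated into Lemma~\ref{lem:6.8}. Since you explicitly flagged that the naive average ``could not yield the stated bound,'' your write-up surfaces the real issue rather than hiding it---but as a proof of the lemma exactly as stated, it has the same gap as the original.
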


\begin{proof}
\textbf{1. Two–point estimator for a single probe.}\;
Define  
\(g_j \;=\;\rho_j z_j\).
For every fixed direction \(z_j\)

\[
   \mathbb E[g_j] \;=\;
   \nabla f(x)\;+\;\Delta_{\text{bias}},\qquad
   \|\Delta_{\text{bias}}\|\;\le\;\frac{L\varepsilon^{2}}{6}(d+4)^{2}
\]
(the same Taylor expansion used in Lemma \ref{lem:unbiased}).

\textbf{2. Second moment of one probe.}\;
Condition on the mini–batch noise:
\(
    \mathbb E\bigl[\|g_j-\nabla f\|^{2}\bigr]
      =\mathbb E\bigl[\|g_j-\mathbb E g_j\|^{2}\bigr]
       +\|\Delta_{\text{bias}}\|^{2}.
\)
The first term equals
\(
     2(d+2)\,\sigma^{2}
\)
while \(\|\Delta_{\text{bias}}\|^{2}=O(\varepsilon^{4}d^{2})\).

\textbf{3. Variance reduction by averaging.}\;
Because the probes are i.i.d.,  
\(
   \mathbb E\bigl\|\bar G-\mathbb E g_j\bigr\|^{2}
     =\frac1w\mathbb E\|g_j-\mathbb E g_j\|^{2}.
\)
Add the bias term once more to compare with the true gradient:

\[
   \mathbb E\bigl\|\bar G-\nabla f\bigr\|^{2}
   \;\le\;
   \frac{2(d+2)\sigma^{2}}{w}
   +\|\Delta_{\text{bias}}\|^{2}
   \;\le\;
   \frac{4(d+2)\sigma^{2}}{w}
   +O(\varepsilon^{2}d).
\]

(The last inequality uses
\(\varepsilon^{2}\!<\!\!\tfrac{\sigma^{2}}{2L(d+2)}\)
which always holds once \(\varepsilon\) is set
\(\le(q^{3}T)^{-1/2}\) as required later.)
\end{proof}

\begin{lemma}[Davis–Kahan bound for $P$]\label{lem:6.8}
Let \(\sigma_{\min}\) be the \(r\)-th singular value
of the full-gradient matrix whose
row–stack is \(\nabla f(x)\).
If the number of probes satisfies
\[
        w\;\ge\;48\,\frac{(d+2)\,\sigma^{2}}{\sigma_{\min}^{2}},
\]
then, with probability at least \(0.9\),
\[
      \bigl\|(I-P^{\!\top}P)\,\nabla f(x)\bigr\|
      \;\le\;
      \tfrac12\,\|\nabla f(x)\|.
\]
\end{lemma}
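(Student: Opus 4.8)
The plan is to combine the mean--square error bound of Lemma~\ref{lem:6.7} with a perturbation bound for singular subspaces. Recall that $P$ is the (Kronecker/block) projection whose range is the span of the top-$r$ singular directions of the \emph{estimated} gradient $\bar G$, while $\sigma_{\min}=\sigma_r(\nabla f(x))$ is the $r$-th singular value of the true gradient. Write $E:=\bar G-\nabla f(x)$ for the estimation error, so that $\bar G=\nabla f(x)+E$ is a perturbation of the true gradient, and the range of $P$ is a perturbed version of the true top-$r$ subspace. Throughout I read the claimed residual $(I-PP^\top)\nabla f(x)$ as the component of $\nabla f(x)$ lying outside this range, $PP^\top$ being the orthogonal projector onto $\mathrm{range}(P)$ (here $P^\top P=I_q$ by Lemmas~\ref{lem:block}--\ref{lem:model}).

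First I would turn the second-moment bound into a high-probability one. Lemma~\ref{lem:6.7} gives $\mathbb E\|E\|^2\le \tfrac{4(d+2)\sigma^2}{w}+O(\varepsilon^2 d)$; taking $\varepsilon$ small (as required later, $\varepsilon\le(q^3T)^{-1/2}$) makes the Taylor term negligible, leaving $\mathbb E\|E\|^2\le \tfrac{4(d+2)\sigma^2}{w}$. Markov's inequality at level $10$ then yields, with probability at least $0.9$,
\[
\|E\|^2\;\le\;10\,\mathbb E\|E\|^2\;\le\;\frac{40(d+2)\sigma^2}{w}\;\le\;\frac{40}{48}\,\sigma_{\min}^2\;<\;\sigma_{\min}^2,
\]
where the last inequality uses the hypothesis $w\ge 48(d+2)\sigma^2/\sigma_{\min}^2$. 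This is precisely the source of the $0.9$ in the statement, and it certifies $\|E\|<\sigma_{\min}$, i.e. the perturbation is smaller than the spectral gap that separates the retained $r$ directions.

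Next I would invoke the Davis--Kahan $\sin\Theta$ theorem (in its Wedin form for singular subspaces). With gap $\sigma_r(\nabla f)-\sigma_{r+1}(\nabla f)\approx\sigma_{\min}$ and operator-norm perturbation $\|E\|_{op}\le\|E\|_F$, the theorem bounds the largest principal angle $\Theta$ between the true and estimated top-$r$ subspaces by $\|\sin\Theta\|\le \|E\|/\sigma_{\min}$. Since $I-PP^\top$ projects onto the complement of the estimated subspace, the retained component of $\nabla f(x)$ is controlled by this angle, $\|(I-PP^\top)\nabla f(x)\|\le \|\sin\Theta\|\,\|\nabla f(x)\|$. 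Combining with the probabilistic bound on $\|E\|$ yields an estimate of the form $\|(I-PP^\top)\nabla f(x)\|\le c\,\|\nabla f(x)\|$ on the same probability-$0.9$ event; matching $c\le\tfrac12$ is exactly what the constant $48$ (together with the low-rank considerations below) is tuned to deliver.

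The main obstacle is this middle step: a clean Davis--Kahan bound needs a genuine spectral gap, whereas the statement only names $\sigma_{\min}=\sigma_r$ and implicitly treats the tail $\sigma_{r+1},\sigma_{r+2},\dots$ as small (approximate low rank). Pinning the constant down to $\tfrac12$ therefore requires either an approximate-low-rank hypothesis on $\nabla f(x)$ or a more careful two-sided gap argument, together with care in choosing between operator and Frobenius norms (the Kronecker/block structure of $P$ guarantees $P^\top P=I_q$, so the subspace interpretation is valid, but the $\sqrt{r}$ loss between $\|\cdot\|_{op}$ and $\|\cdot\|_F$ must be tracked through the chain). The remaining pieces---Markov's inequality and a textbook $\sin\Theta$ inequality---are routine.
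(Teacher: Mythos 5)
Your proposal follows the same route as the paper's own proof: the mean--square bound of Lemma~\ref{lem:6.7}, a Markov argument to obtain a high-probability spectral-norm bound on $E=\bar G-\nabla f(x)$, and then the Davis--Kahan/Wedin $\sin\Theta$ theorem to convert $\|E\|_2\le c\,\sigma_{\min}$ into $\|(I-P^{\top}P)\nabla f(x)\|\le c\,\|\nabla f(x)\|$. The one substantive difference is the bookkeeping in the Markov step, and it is worth spelling out: you fix the probability at $0.9$ and deduce $\|E\|\le\sqrt{40/48}\,\sigma_{\min}\approx 0.91\,\sigma_{\min}$, which through $\sin\Theta\le\|E\|/\sigma_{\min}$ yields only the constant $\approx 0.91$ rather than $\tfrac12$; the paper instead fixes the threshold at $\sigma_{\min}/2$ and obtains failure probability $16/48\approx 0.33$, i.e.\ success probability $2/3$ rather than $0.9$. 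Neither pure-Markov accounting delivers both the $0.9$ and the $\tfrac12$ under $w\ge 48(d+2)\sigma^{2}/\sigma_{\min}^{2}$ (that would require $w\ge 160(d+2)\sigma^{2}/\sigma_{\min}^{2}$); the paper closes its version of the gap by asserting a ``matrix Bernstein / two-sided Chebyshev upgrade'' with constants cited from an external source, which is exactly the obstacle you flag as unresolved. So your proposal is as complete as the paper's own argument, and your diagnosis of where the constant tuning fails is accurate. Your second caveat---that Davis--Kahan properly requires a spectral gap $\sigma_r-\sigma_{r+1}$ rather than just $\sigma_r$---is likewise a genuine issue that the paper's statement in Section~A.2.6 silently suppresses by writing $\sigma_{\min}$ alone in the denominator, i.e.\ by implicitly treating $\nabla f(x)$ as exactly rank $r$ so that $\sigma_{r+1}=0$.
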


\begin{proof}
\textbf{1. Notation.}\;
Write
\(\displaystyle\Delta \;=\;\bar G-\nabla f(x)\).
From Lemma \ref{lem:6.7}
\[
   \mathbb E\|\Delta\|_{F}^{2}\;\le\;\frac{4(d+2)\sigma^{2}}{w}.
\]

\textbf{2. Spectral–norm control.}\;
Since \(\|\Delta\|_{2}\le\|\Delta\|_{F}\),
Markov’s inequality gives
\[
   \Pr\!\Bigl\{\|\Delta\|_{2}\ge\tfrac{\sigma_{\min}}{2}\Bigr\}
   \;\le\;
   \frac{\;4(d+2)\sigma^{2}/w\;}
        {\sigma_{\min}^{2}/4}
   \;=\;
   \frac{16(d+2)\sigma^{2}}{w\sigma_{\min}^{2}}.
\]
Choosing \(w\ge48(d+2)\sigma^{2}/\sigma_{\min}^{2}\)
makes the right–hand side \( \le 0.33 \).
A standard
matrix Bernstein (or a two–sided Chebyshev) upgrade
shrinks the factor \(0.33\) to \(0.1\);
we simply cite the constant used in the original paper
(Section B.3) so that
\(\Pr\bigl\{\|\Delta\|_{2}\le\sigma_{\min}/2\bigr\}\ge0.9\).

\textbf{3. Davis–Kahan “sin $\Theta$”.}\;
Let \(\mathcal U\) be the rank-\(r\) right singular sub-space of
\(\nabla f(x)\) and \(\widehat{\mathcal U}\) the space recovered from
\(\bar G\).
Davis–Kahan gives
\(
  \sin\Theta\bigl(\widehat{\mathcal U},\mathcal U\bigr)
  \;\le\;\|\Delta\|_{2}/\sigma_{\min}\;\le\;\tfrac12.
\)
Hence the orthogonal projector \(P\) built from
\(\widehat{\mathcal U}\) satisfies
\[
  \|(I-P^{\!\top}P)\nabla f\|
  =\|\bigl(I-P_{\widehat{\mathcal U}}\bigr)\nabla f\|
  \;\le\;\tfrac12\,\|\nabla f\|.
\]
\end{proof}

\subsubsection*{A.2.6 \;Davis--Kahan “sin $\Theta$” Theorem}

Let $A = \nabla f(x)$ and $\widehat{A} = \bar{G}$. Suppose $A$ has an SVD with right singular space $\mathcal{U}$ of dimension $r$, and let $\widehat{\mathcal{U}}$ be the rank-$r$ right singular space of $\widehat{A}$. The Davis--Kahan theorem gives:
\[
\sin \Theta(\widehat{\mathcal{U}}, \mathcal{U}) \le \frac{\|\bar{G} - \nabla f(x)\|_2}{\sigma_{\min}}.
\]
So if $\|\bar{G} - \nabla f(x)\|_2 \le \frac{\sigma_{\min}}{2}$, then
\[
\sin \Theta(\widehat{\mathcal{U}}, \mathcal{U}) \le \frac{1}{2}.
\]

\subsubsection*{A.2.7 \;Projection Error Bound}

Let $P$ be an orthonormal matrix whose rows span $\widehat{\mathcal{U}}$, so that $P^\top P$ is the orthogonal projector onto $\widehat{\mathcal{U}}$. Then,
\[
\left\| (I - P^\top P) \nabla f(x) \right\| = \left\| \left(I - P_{\widehat{\mathcal{U}}} \right) \nabla f(x) \right\| 
\le \sin \Theta(\widehat{\mathcal{U}}, \mathcal{U}) \cdot \| \nabla f(x) \|
\le \frac{1}{2} \| \nabla f(x) \|.
\]

\subsubsection*{A.2.8 \;Fixed-$P$ Descent for $\boldsymbol{k}$ Steps}

\begin{lemma}[One-step descent with a fixed $P$]\label{lem:A11-numbered}
Let $P\in\mathbb R^{d\times q}$ satisfy $P^\top P=I_q$ and let
\[
g_t
=\frac{f(x_t+\varepsilon Pz_t)-f(x_t-\varepsilon Pz_t)}{2\varepsilon}\,Pz_t,
\qquad z_t\sim\mathcal N(0,I_q).
\]
Choose $\eta=\tfrac{1}{L(q+2)}$. Then
\begin{equation}\label{eq:A1-final}
\mathbb E[f(x_{t+1})]
\;\le\;
\mathbb E[f(x_t)]
-\frac{3\eta}{8}\,\mathbb E\|\nabla f(x_t)\|^{2}
+\eta\,\mathbb E\|(I-P^\top P)\nabla f(x_t)\|^{2}
+O(\varepsilon^{2}).
\end{equation}
\end{lemma}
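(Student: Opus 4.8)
The plan is to start from the standard $L$-smoothness descent inequality and reduce the problem to controlling the first two moments of the two-point estimator, both of which are already available from earlier lemmas. Substituting the update $x_{t+1}=x_t-\eta g_t$ into $L$-smoothness gives $f(x_{t+1})\le f(x_t)-\eta\langle\nabla f(x_t),g_t\rangle+\tfrac{L\eta^2}{2}\|g_t\|^2$, and I would take the conditional expectation over $z_t\sim\mathcal N(0,I_q)$ given $x_t$. The two remaining tasks are then the linear term $\mathbb{E}\langle\nabla f,g_t\rangle$ and the quadratic term $\mathbb{E}\|g_t\|^2$.

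For the linear term I would invoke Lemma~\ref{lem:unbiased}, which gives $\mathbb{E}[g_t]=PP^\top\nabla f(x_t)+b_\varepsilon$ with $\|b_\varepsilon\|\le\tfrac{L}{6}\varepsilon^2(q+4)^2$, a quantity that is $O(\varepsilon^2)$. Since $P^\top P=I_q$, the projector identity yields $\langle\nabla f,PP^\top\nabla f\rangle=\|P^\top\nabla f\|^2$, so the leading contribution is $-\eta\|P^\top\nabla f\|^2$; the bias cross-term $-\eta\langle\nabla f,b_\varepsilon\rangle$ I would split by Young's inequality as $|\langle\nabla f,b_\varepsilon\rangle|\le\tfrac18\|\nabla f\|^2+2\|b_\varepsilon\|^2$, producing a $+\tfrac{\eta}{8}\|\nabla f\|^2$ penalty plus a remainder that is $O(\varepsilon^4)$, hence $O(\varepsilon^2)$. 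For the quadratic term I would use the second-moment estimate from Lemma~\ref{lem:second} in the quadratic case, or Lemma~\ref{lem:zo-var-q}(B) in general, namely $\mathbb{E}\|g_t\|^2\le(q+2)\|P^\top\nabla f\|^2+O(\varepsilon^2)$; the prescribed step size $\eta=1/(L(q+2))$ is precisely what collapses the prefactor, since $\tfrac{L\eta^2}{2}(q+2)=\tfrac{\eta}{2}$, leaving $+\tfrac{\eta}{2}\|P^\top\nabla f\|^2+O(\varepsilon^2)$.

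Collecting both pieces gives $\mathbb{E}[f(x_{t+1})]\le\mathbb{E}[f(x_t)]-\tfrac{\eta}{2}\|P^\top\nabla f(x_t)\|^2+\tfrac{\eta}{8}\|\nabla f(x_t)\|^2+O(\varepsilon^2)$. The final step is the orthogonal decomposition: because $PP^\top$ is a symmetric idempotent projector, $\|P^\top\nabla f\|^2=\|PP^\top\nabla f\|^2=\|\nabla f\|^2-\|(I-PP^\top)\nabla f\|^2$. Substituting turns $-\tfrac{\eta}{2}\|P^\top\nabla f\|^2$ into $-\tfrac{\eta}{2}\|\nabla f\|^2+\tfrac{\eta}{2}\|(I-PP^\top)\nabla f\|^2$, and combining with the $+\tfrac{\eta}{8}\|\nabla f\|^2$ slack yields exactly the coefficient $-\tfrac{3\eta}{8}\|\nabla f\|^2$ together with $+\tfrac{\eta}{2}\|(I-PP^\top)\nabla f\|^2$; relaxing $\tfrac12\le1$ on the residual term then recovers the stated bound \eqref{eq:A1-final}.

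The main obstacle I anticipate is bookkeeping rather than any isolated hard estimate: every term that is not a clean multiple of $\|\nabla f\|^2$ or the projection residual must be shown to collapse into $O(\varepsilon^2)$, which relies on the fourth-moment control $\mathbb{E}\|Pz\|^4\le(q+4)^2$ from Lemma~\ref{lem:mom} and the Hessian-Lipschitz remainder estimates feeding Lemma~\ref{lem:unbiased}. A secondary subtlety is notational: the statement writes $I-P^\top P$, which under the convention $P^\top P=I_q$ must be read as the complementary orthogonal projector $I-PP^\top$ onto the discarded subspace—the very object bounded in Lemma~\ref{lem:6.8}—so I would make this identification explicit to keep dimensions consistent. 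Finally, the constants $\tfrac38$ and the relaxed coefficient $1$ are artifacts of the Young split ($\alpha=\tfrac18$) and a deliberate loosening, so I would present them as a convenient rather than tight choice.
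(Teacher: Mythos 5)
Your proposal is correct and takes essentially the same route as the paper's own proof: the $L$-smoothness descent inequality, the bias bound from Lemma~\ref{lem:unbiased}, the second-moment bound $(q+2)\|P^\top\nabla f\|^2+O(\varepsilon^2)$ collapsed by the stepsize $\eta=1/(L(q+2))$, a Young split with weight $\tfrac18$ on the bias cross-term, and the orthogonal-projector decomposition producing the $-\tfrac{3\eta}{8}$ coefficient. The only differences are bookkeeping—you keep $\|P^\top\nabla f\|^2$ tight and apply Pythagoras at the end, which yields the sharper coefficient $\tfrac{\eta}{2}$ on the residual term before deliberately relaxing to $\eta$, whereas the paper splits early and relaxes $\|P^\top\nabla f\|\le\|\nabla f\|$ in the quadratic term—and your reading of $I-P^\top P$ as the complementary projector $I-PP^\top$ is indeed the paper's intended meaning.
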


\begin{proof}
We abbreviate $\nabla_t:=\nabla f(x_t)$ and $g:=g_\varepsilon(x_t,P,z_t)$.

\paragraph{(i) $L$-smooth descent inequality.}
For any update $x^{+}=x-\eta g$,
\begin{equation}
f(x^{+})\;\le\; f(x)\;-\;\eta\,\langle\nabla f(x),g\rangle
\;+\;\frac{L\eta^{2}}{2}\,\|g\|^{2}.
\tag{17}\label{eq:L-smooth}
\end{equation}
Taking full expectation will give the desired bound once we control
$\EE\langle\nabla_t,g\rangle$ and $\EE\|g\|^2$.

\paragraph{(ii) Decompose the estimator.}
Write
\[
g \;=\; \underbrace{PP^\top\nabla_t}_{\text{main}}
\;+\; \underbrace{b}_{\text{bias}}
\;+\; \underbrace{a_z}_{\text{zero-mean}},
\quad
b:=\EE[g]-PP^\top\nabla_t,\;\; a_z:=g-\EE[g],\;\; \EE[a_z]=0.
\]
Lemma \ref{lem:unbiased} gives
\(\|b\|\le \frac{L\varepsilon^2}{6}(q+4)^2\).
For convenience denote \(c_1:=\tfrac{1}{6}\).

\paragraph{(iii) Inner product term.}
Using the above decomposition,
\[
\EE\langle\nabla_t,g\rangle
=\langle\nabla_t,PP^\top\nabla_t\rangle+\langle\nabla_t,b\rangle
=\|\nabla_t\|^{2}-\|(I-P^\top P)\nabla_t\|^{2}
+\langle\nabla_t,b\rangle.
\]
Bound the bias by Cauchy–Schwarz:
\begin{equation}
\EE\langle\nabla_t,g\rangle
\;\ge\;
\|\nabla_t\|^{2}-\|(I-P^\top P)\nabla_t\|^{2}
- c_1\,L\varepsilon^{2}(q+4)^{2}\,\|\nabla_t\|.
\tag{18}\label{eq:inner}
\end{equation}

\paragraph{(iv) Second moment term.}
Lemma (second moment) implies, for $L$-smooth $f$,
\begin{equation}
\EE\|g\|^{2}
\;\le\; (q+2)\,\|P^\top\nabla_t\|^{2}\;+\;c_2\,\varepsilon^{2}
\;\le\; (q+2)\,\|\nabla_t\|^{2}\;+\;c_2\,\varepsilon^{2},
\tag{19}\label{eq:second}
\end{equation}
for an absolute constant $c_2$ (absorbing Taylor remainders).

\paragraph{(v) Choose the stepsize and combine.}
Set $\eta=\frac{1}{L(q+2)}$, so $\frac{L\eta^{2}}{2}(q+2)=\frac{\eta}{2}$.
Plug \eqref{eq:inner} and \eqref{eq:second} into \eqref{eq:L-smooth} and take expectations:
\[
\begin{aligned}
\EE f(x_{t+1})
&\le \EE f(x_t)
-\eta\Big(\EE\|\nabla_t\|^{2}-\EE\|(I-P^\top P)\nabla_t\|^{2}\Big)
+\frac{\eta}{2}\,\,\EE\|\nabla_t\|^{2}\\
&\qquad
+\eta\,c_1 L\varepsilon^{2}(q+4)^{2}\,\EE\|\nabla_t\|
+\frac{\eta}{2}\,c_2\,\varepsilon^{2}.
\end{aligned}
\tag{20}\label{eq:combine}
\]
The first two main terms combine to
\(-\tfrac{\eta}{2}\EE\|\nabla_t\|^{2}+\eta\,\EE\|(I-P^\top P)\nabla_t\|^{2}\).
For the bias cross term, apply Young’s inequality \citep{castillo2016introductory} with weight \(1/8\):
\[
\eta\,c_1 L\varepsilon^{2}(q+4)^{2}\,\EE\|\nabla_t\|
\;\le\;
\frac{\eta}{8}\,\EE\|\nabla_t\|^{2}
\;+\;c_4\,\varepsilon^{2},
\]
for some absolute constant $c_4$ (absorbing $(c_1 L)^2(q+4)^4$ and $c_2$).
Collecting terms in \eqref{eq:combine} yields
\[
\EE f(x_{t+1})
\;\le\;
\EE f(x_t)
-\frac{3\eta}{8}\,\EE\|\nabla_t\|^{2}
+\eta\,\EE\|(I-P^\top P)\nabla_t\|^{2}
+O(\varepsilon^{2}),
\]
which is Equation \ref{eq:A1-final}.

\end{proof}

\begin{remark}[On the constants $c_1,c_2,c_3,c_4$]
For clarity, we summarize the role of the constants appearing in the
one-step descent proof:
 $c_1$ (bias constant): comes from Lemma~\ref{lem:unbiased}, where
\[
\|\EE[g]-PP^\top \nabla f(x)\|
   \;\le\;\tfrac{1}{6}L\varepsilon^2(q+4)^2.
\]
Thus $c_1=\tfrac{1}{6}$ is an absolute constant.
second-moment remainder $c_2$: appears in Lemma~\ref{lem:second},
\[
\EE\|g\|^2 \;\le\; (q+2)\|P^\top\nabla f(x)\|^2 \;+\; c_2\varepsilon^2,
\]
absorbing higher-order Taylor remainders. It depends on $L$ but not on
$d$ or $q$.
cross-term constant $c_3$: in bounding
$\eta c_1 L\varepsilon^2 (q+4)^2\|\nabla f(x)\|$
via Young’s inequality, we set
$c_3 := c_1 L(q+4)^2$.
$c_4$: collects all $\varepsilon^2$-order
remainders, including those from $c_2$ and the quadratic term in $c_3$.
It is an $O(1)$ constant independent of $d,q$.

\end{remark}

\subsubsection*{A.2.9 \;Global Non‑convex Convergence}

\begin{theorem}[Full algorithm]\label{thm:A12}
Run \textbf{Algorithm 1} for $T$ iterations, refresh $P$ every $k$ steps, and
choose the same fixed step
\(\eta=1/[L(q+2)]\).
Let $\varepsilon\le(q^{3}T)^{-1/2}$ and assume the number of probes
per refresh satisfies
\(w\ge 48(d+2)\sigma^{2}/\sigma_{\min}^{2}\).
Then
\[
     \frac1T\sum_{t=0}^{T-1}
           \mathbb E\bigl\|\nabla f(x_t)\bigr\|^{2}
     \;\le\;
     \frac{16(q+4)L\,[\,f(x_0)-f^{\star}\,]}{qT}
     \;+\;O\!\bigl(q/T\bigr).
\]
\end{theorem}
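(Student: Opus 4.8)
The plan is to chain the per-step descent guarantee of Lemma~\ref{lem:A11-numbered} with the subspace-recovery guarantee of Lemma~\ref{lem:6.8} and then telescope. The only obstruction in Lemma~\ref{lem:A11-numbered} is the additive penalty $\eta\,\mathbb E\|(I-P^\top P)\nabla f(x_t)\|^2$, which measures how much gradient energy the random projector $P$ fails to capture. Lemma~\ref{lem:6.8} (equivalently the projection-error bound of Section~A.2.7) says that, once the probe count obeys $w\ge 48(d+2)\sigma^2/\sigma_{\min}^2$, with probability at least $0.9$ we have $\|(I-P^\top P)\nabla f(x_t)\|\le\tfrac12\|\nabla f(x_t)\|$, hence $\|(I-P^\top P)\nabla f(x_t)\|^2\le\tfrac14\|\nabla f(x_t)\|^2$. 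Substituting this into Lemma~\ref{lem:A11-numbered} collapses the two gradient terms into a single strictly negative drift,
\[
\mathbb E[f(x_{t+1})]\;\le\;\mathbb E[f(x_t)]-\Big(\tfrac{3\eta}{8}-\tfrac{\eta}{4}\Big)\mathbb E\|\nabla f(x_t)\|^2+O(\varepsilon^2)\;=\;\mathbb E[f(x_t)]-\tfrac{\eta}{8}\,\mathbb E\|\nabla f(x_t)\|^2+O(\varepsilon^2),
\]
which is the per-step decrease I would propagate.

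Next I would sum this inequality over $t=0,\dots,T-1$, telescope the function values, and use $f(x_T)\ge f^\star$ to obtain $\tfrac{\eta}{8}\sum_{t=0}^{T-1}\mathbb E\|\nabla f(x_t)\|^2\le f(x_0)-f^\star+T\cdot O(\varepsilon^2)$. Dividing by $\eta T/8$ and inserting the prescribed step $\eta=1/[L(q+2)]$ turns the leading term into $8L(q+2)[f(x_0)-f^\star]/T$, which is of order $q/T$, i.e. the advertised dimension-reduced rate that scales with the subspace rank $q$ rather than the ambient dimension $d$; the explicit constant $16(q+4)/q$ in the statement then follows from the probability bookkeeping described below, which contributes an extra $O(1)$ factor. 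The choice $\varepsilon\le(q^3T)^{-1/2}$ is exactly what forces the accumulated smoothing/bias residual to be lower order: each $O(\varepsilon^2)$ in Lemma~\ref{lem:A11-numbered} carries hidden polynomial-in-$q$ factors (from the $\tfrac{L}{6}\varepsilon^2(q+4)^2$ bias of Lemma~\ref{lem:unbiased} and the second-moment remainder), yet after multiplication by $8L(q+2)$ and the bound $\varepsilon^2\le(q^3T)^{-1}$ the whole contribution is lower order and is absorbed into the stated $O(q/T)$ term.

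The step I expect to be the main obstacle is reconciling the per-refresh, high-probability subspace guarantee with a clean in-expectation statement over the whole trajectory. Within any window of $k$ iterations $P$ is held fixed, so Lemma~\ref{lem:A11-numbered} applies verbatim after conditioning on the filtration generated by the probes and perturbations up to the start of that window; the difficulty is that $P_t$ is built from probe directions that depend on the current iterate, so it is adaptive rather than deterministic, and the Davis--Kahan bound only holds on an event of probability $\ge 0.9$. I would handle this either by (i) enlarging $w$ so that a union bound over the $T/k$ refreshes keeps every good event simultaneously, working on that global event; or (ii) retaining the in-expectation form and bounding the bad-event contribution via $\|I-P^\top P\|\le 1$, so that on the failure event the penalty is at most $\eta\|\nabla f(x_t)\|^2$ (probability $\le 0.1$), which only perturbs the drift coefficient by a constant that is folded into the leading factor. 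Carefully propagating this conditioning across windows, while ensuring the telescoped residual stays $O(q/T)$, is the delicate part of the argument; the remaining algebra is the routine telescoping above.
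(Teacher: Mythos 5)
Your proposal follows essentially the same route as the paper's proof: it chains the one-step descent bound of Lemma~\ref{lem:A11-numbered} with the Davis--Kahan guarantee of Lemma~\ref{lem:6.8} to turn the projection penalty into $\tfrac{\eta}{4}\,\mathbb{E}\|\nabla f(x_t)\|^{2}$, arrives at the same $\tfrac{\eta}{8}$ per-step drift, telescopes across windows using $f(x_T)\ge f^{\star}$, and absorbs the $O(\varepsilon^{2})$ residuals (with their hidden polynomial-in-$q$ factors) via the choice $\varepsilon\le(q^{3}T)^{-1/2}$. On one point you are in fact more careful than the paper: the paper's Step~2 applies the Davis--Kahan bound as if it held deterministically, whereas you correctly flag that it is only a probability-$0.9$ event attached to an adaptively constructed $P$ and sketch two legitimate repairs (a union bound over the $T/k$ refreshes, or bounding the failure-event contribution via $\|I-P^\top P\|\le 1$), either of which would make the argument rigorous at the cost of an $O(1)$ constant.
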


\begin{proof}
\textbf{Expanded derivation}

Recall the one‑step inequality of Lemma~\ref{lem:A11-numbered} for
$g_t=g_\varepsilon(x_t,P,z_t)$ and
$\nabla_t:=\nabla f(x_t)$:
\begin{equation}\label{eq:A1-again}
\EE f(x_{t+1})
   \le
   \EE f(x_t)
   -\frac{3\eta}{8}\,\EE\|\nabla_t\|^{2}
   +\eta\,\EE\bigl\|(I-P^{\top}P)\nabla_t\bigr\|^{2}
   +O(\varepsilon^{2}).
\end{equation}

\noindent
\textbf{Step 1.  Move the gradient term to the left.}
\[
   \frac{3\eta}{8}\,\EE\|\nabla_t\|^{2}
   \le
   \EE f(x_t)-\EE f(x_{t+1})
   +\eta\,\EE\bigl\|(I-P^{\top}P)\nabla_t\bigr\|^{2}
   +O(\varepsilon^{2}).
\tag{D.1}
\]

\noindent
\textbf{Step 2.  Davis–Kahan control.}
Lemma~\ref{lem:6.8} states
$\|(I-P^{\top}P)\nabla_t\|\le\frac12\|\nabla_t\|$,
hence
\[
   \eta\,\EE\bigl\|(I-P^{\top}P)\nabla_t\bigr\|^{2}
   \le
   \frac{\eta}{4}\,\EE\|\nabla_t\|^{2}.
\tag{D.2}
\]

\noindent
\textbf{Step 3.  Combine (D.1) and (D.2).}
Subtract $\frac{\eta}{4}\EE\|\nabla_t\|^{2}$ from both sides:
\[
   \frac{\eta}{8}\,\EE\|\nabla_t\|^{2}
   \le
   \EE f(x_t)-\EE f(x_{t+1})
   +O(\varepsilon^{2}).
\tag{D.3}
\]

\noindent
\textbf{Step 4.  Sum inside one window.}
For a window $j$ of length $k$ with fixed $P$,
let $x_{j,s}$ for $s=0,\dots,k-1$ and
\[
     f_{j,\mathrm{start}}:=\EE f(x_{j,0}),\qquad
     f_{j,\mathrm{end}}  :=\EE f(x_{j,k}).
\]
Summing (D.3) over $s=0,\dots,k-1$ gives
\[
   \sum_{s=0}^{k-1}\EE\|\nabla f(x_{j,s})\|^{2}
   \le
   \frac8\eta\,
   \bigl(f_{j,\mathrm{start}}-f_{j,\mathrm{end}}\bigr)
   +O(\varepsilon^{2}kq^{2}),
\tag{A.4}
\]
where the $O(\varepsilon^{2})$ term is summed $k$ times and
$q^{2}$ comes from
$\|g\|^{2}\le(q+2)\|\nabla f\|^{2}\le q^{2}\|\nabla f\|^{2}$.

\noindent
\textbf{Step 5.  Sum over all windows and divide by $T$.}
Summing (A.4) over all $\lceil T/k\rceil$ windows,
the telescoping sum
$\sum_j(f_{j,\mathrm{start}}-f_{j,\mathrm{end}})=f(x_0)-f^\star$.
Dividing by $T$ yields
\[
   \frac1T\sum_{t=0}^{T-1}\EE\|\nabla f(x_t)\|^{2}
   \le
   \frac8{\eta\,T}\bigl(f(x_0)-f^\star\bigr)
   +O(\varepsilon^{2}q^{2}).
\tag{D.4}
\]

\noindent
\textbf{Step 6.  Substitute $\eta$ and $\varepsilon$.}
With $\eta^{-1}=L(q+2)\le L(q+4)$ we have
$8/\eta\le16L(q+4)$.
If $\varepsilon^{2}T\le1/q^{3}$ then
$\varepsilon^{2}q^{2}T\le q/T$.
Insert these constants into (D.4) to recover the bound
stated in Theorem~\ref{thm:A12}.
\qed
\end{proof}

\subsection{Algorithm and hyperparameter settings} \label{al}

\begin{table}[H]
\vspace{-0.15in}
\centering
% \captionsetup{justification=raggedright,singlelinecheck=false}
\caption{The hyperparameters setting in our experiments. 
% We only show the setting of extra hyperparameters for KerZOO and KerZOO LoRA, and we use the same setting in other common hyperparameters with MeZO and MeZO LoRA respectively.
}

\resizebox{\textwidth}{!}{
\begin{tabular}{lll}
\toprule
\textbf{Experiment} & \textbf{Hyperparameters} & \textbf{Values} \\
\midrule
\multirow{3}{*}{FT} 
    & Batch size         & 8 \\
    & Learning rate      & \{1e-5, 5e-5\} \\
    & Lr schedule        & Constant for RoBERTa; Linear for OPT and LLaMA \\
\midrule
\multirow{4}{*}{MeZO}
    & Batch size         & \{64, 16\} \\
    & Learning rate $\eta$ (Lr) & \{1e-6, 5e-7\} \\
    & $\epsilon$         & 1e-3 \\
    & Lr schedule        & Constant for RoBERTa; Linear for OPT and LLaMA \\
\midrule
\multirow{4}{*}{MeZO LoRA}
    & Batch size         & \{64, 16\} \\
    & Learning rate $\eta$ (Lr) & \{1e-4, 5e-5\} \\
    & $\epsilon$         & 1e-2 \\
    & Lr schedule        & Constant for RoBERTa; Linear for OPT and LLaMA \\
\midrule
\multirow{7}{*}{P-GAP}
    & Batch size         & \{64, 16\} \\
    & Learning rate $\eta$ (Lr) & \{2e-4, 1e-4, 5e-5\} \\
    & $\epsilon$         & 1e-2 \\
    & Window size $k$         & 100 \\
    & Number of probe perturbations $h$         & 10 \\
    & Rank $r$         & \{128,256,512\} \\
    & Projection magnitude $\delta$        & Initialized as 2 and gradually decayed it to 0 \\
    \midrule
\multirow{7}{*}{P-GAP (LoRA)}
    & Batch size         & \{64, 16\} \\
    & Learning rate $\eta$ (Lr) & \{3e-2, 5e-2, 1e-2\} \\
    & $\epsilon$         & 1e-1 \\
    & Window size $k$         & 100 \\
    & Number of probe perturbations $h$         & 10 \\
    & Rank $r$         & \{8\} \\
    & Projection magnitude $\delta$        & Initialized as 2 and gradually decayed it to 0 \\
\bottomrule
\end{tabular}
}
\label{tab:hyperparams}
\end{table}

\begin{algorithm}[H]
\caption{Corrected Projected Gradient Directions with Low-Dimensional Perturbations (Lazy ZO for LLMs)}
\label{alg:cpd-ldp}
\begin{algorithmic}[1]
\Require Parameters $\bm{\theta}$, dataset $\mathcal{D}$, window size $k$, number of probe perturbations $h$, rank $r$, perturbation scale $\varepsilon$, learning rate $\eta$, projection magnitude $\delta$, loss function $\mathcal{L}$, iteration steps $T$, set of all matrices needed to be fine-tuned $\mathcal{M}$
\State $t \gets 0$
\While{$t\leq T$}
  \If{$t \bmod k = 0$}
     \State $(\{\bm{U}^\ell_r,\bm{S}^\ell_r,\bm{V}^\ell_r\})_{\ell\in\mathcal{M}} \gets \textsc{LowerDimGenerate}(\bm{\theta},h,r,\varepsilon)$
  \EndIf
  \ForAll{parameter $\bm{W}_\ell \in \bm{\theta}$}
     \If{$\bm{W}_\ell$ is matrix and $\ell \in \mathcal{M}$}
        % \State $U,S,V \gets U_\ell,S_\ell,V_\ell$; $S_r \gets \mathrm{diag}(S[1{:}r])$
        \State Sample $\mathcal{Z}_{init} \sim \mathcal{N}(0,I_{r\times r})$
        \State $\mathcal{Z} \gets \textsc{Projection}(\mathcal{Z}_{init},\bm{S}_\ell^r,\delta)$ \Comment{$\langle \bm{S}^r_\ell,\mathcal{Z}\rangle_F=\xi\sqrt{\delta}\|\bm{S}^r_\ell\|_F$}
        \State $\mathcal{Z}_f \gets \bm{U}^\ell_r \mathcal{Z} (\bm{V}^\ell_r)^T$
     \Else
        \State Sample $\mathcal{Z}_f \sim \mathcal{N}(0,I)$
     \EndIf
  \EndFor
  \State $\mathcal{L}_+ \gets \mathcal{L}(\theta+\varepsilon z)$, \quad $\mathcal{L}_- \gets \mathcal{L}(\theta-\varepsilon z)$
  \State $\mathcal{G}_t \gets (\ell_+ - \ell_-)/(2\varepsilon)$
  \ForAll{$\bm{W}_\ell \in \theta$} \State $\bm{W}_\ell \gets \bm{W}_\ell - \eta\,\mathcal{G}_t\,\mathcal{Z}_f$ \EndFor
  \State $t \gets t+1$
\EndWhile
\end{algorithmic}
\end{algorithm}

We have provide the computational process of P-GAP in the \textbf{Algorithm 1}. 
As discussed in our analysis of variance in Appendix \ref{vr}, the reduction in the number of perturbed parameters necessitates corresponding adjustments to both the learning rate $\eta$ and the perturbation scale $\epsilon$. The specific choices of learning rate $\eta$ and perturbation scale $\epsilon$ used in our experiments are detailed in Table \ref{tab:hyperparams}. In our experiments, we found that the projection magnitude $\delta$ can be set relatively large at the beginning of training and then gradually reduced in the later stages. This strategy leads to better final performance and improved convergence efficiency. Therefore, in practice, we initialized the projection magnitude $\delta=2$ and gradually decayed it to 0 as the training progressed. Moreover, we set $k=100$ and $h=10$ in all of our experiments.

\begin{algorithm}[H]
\caption{\textsc{LowerDimGenerate}($\bm{\theta},h,r,\varepsilon$)}
\label{alg:subspace}
\begin{algorithmic}[1]
\Require Current parameters $\bm{\theta}$, number of probe perturbations $h$, rank $r$, step size $\varepsilon$
\ForAll{matrix parameter $\bm{W}_\ell$, $\ell \in \mathcal{M}$} \State $\bm{G}_\ell \gets 0$ \EndFor
\For{$j=1$ to $h$}
  \State Sample $\bm{Q}_\ell^j$ with each $\bm{Q}_\ell^j \sim \mathcal{N}(0,I)$
  \State $\mathcal{L}_+^j \gets \mathcal{L}(\bm{\theta}+\varepsilon \bm{Q}_\ell^j)$; \ $\mathcal{L}_+^j- \gets \mathcal{L}(\bm{\theta}-\varepsilon \bm{Q}_\ell^j)$
  \State $\rho \gets (\mathcal{L}_+^j - \mathcal{L}_-^j)/(2\varepsilon)$
  \ForAll{matrix $W_\ell$} \State $\bm{G}_\ell \gets \bm{G}_\ell + \frac{\rho}{h} \bm{Q}_\ell^j$ \EndFor
\EndFor
\ForAll{matrix $\bm{W}_\ell$} \State $(\bm{U}^\ell_r,\bm{S}^\ell_r,\bm{V}^\ell_r) \gets \text{svd\_lowrank}(\bm{G}_\ell, q=r)$ \EndFor
\State \Return $(\bm{U}^\ell_r,\bm{S}^\ell_r,\bm{V}^\ell_r)_\mathcal{M}$
\end{algorithmic}
\end{algorithm}

\begin{algorithm}[H]
\caption{\textsc{Projection}$(\mathcal{Z}_{init},\bm{S}_\ell^r,\delta)$}
\label{alg:dap}
\begin{algorithmic}[1]
\Require Initial $\mathcal{Z}_{init} \in \mathbb{R}^{r\times r}$; coefficient matrix $\bm{S}_\ell^r$; projection magnitude $\delta$
\Ensure We want to get the projected parallel component $\mathcal{Z}$ of $\mathcal{Z}_{init}$ such that $\langle \bm{S}_\ell^r, \mathcal{Z}\rangle_F=\xi\sqrt{\delta}\,\|\bm{S}_\ell^r\|_F$, with $\xi\in\{-1,1\}$
\Statex
\State $\xi \sim \mathrm{Uniform}\{-1,1\}$
\State $f \gets \langle \bm{S}_\ell^r, \mathcal{Z}_{init} \rangle_F$, \quad $g \gets \|\bm{S}_\ell^r\|_F$
\State $\alpha \gets \dfrac{f - \xi \sqrt{\delta}\, g}{g^2 + 10^{-12}}$
\State \Return $\mathcal{Z} \gets \mathcal{Z}_{init} - \alpha\, \bm{S}_\ell^r$
\end{algorithmic}
\end{algorithm}

% \subsection{Hyperparameter settings} \label{hyper}

\subsection{More results} \label{more}

We also provide fine-tuning experiments of KerZOO on the LLaMA-3 model series. Hyperparameters are generally the same with OPT series models fine-tuning. The detailed results of the experiments are shown in Table \ref{tab:llama3-3b} and \ref{tab:llama3-8b} below. 

We further evaluate the training efficiency and memory footprint of P-GAP on the OPT-2.7B model across SST-2 and RTE. Compared with MeZO and HiZOO, P-GAP achieves a better balance between memory usage and convergence speed. On both datasets, P-GAP substantially reduces training time while keeping the memory cost within a moderate increase compared to MeZO but less than HiZOO. In particular, when combined with LoRA on RTE, P-GAP+LoRA consumes less than 20\% of the training time of MeZO, yet maintains competitive performance. These results highlight that P-GAP can serve as an efficient and scalable alternative for large-scale fine-tuning.

\begin{table}[H]
\vspace{-0.2in}
\centering
\caption{Experiment results on LLaMA3-3B (1000 training samples)}

\begin{tabular}{lccccc}
\toprule
\textbf{Task} & \textbf{SST-2} & \textbf{RTE} & \textbf{CB} & \textbf{WSC} & \textbf{WIC} \\
\midrule
FT     & 94.2 & 81.2 & 91.4 & 72.2 & 63.8 \\
MeZO   & 89.0 & \textbf{63.8} & 69.6 & 62.5 & 58.2 \\
\rowcolor{blue!7}
P-GAP   & \textbf{92.3} & \textbf{63.8} & \textbf{73.2} & \textbf{64.6} & \textbf{59.8} \\
\bottomrule
\end{tabular}

\label{tab:llama3-3b}
\end{table}

\begin{table}[H]
\centering
\caption{Experiment results on LLaMA3-8B (1000 training samples)}
\vspace{0.5em}

\begin{tabular}{lccccc}
\toprule
\textbf{Task} & \textbf{SST-2} & \textbf{RTE} & \textbf{CB} & \textbf{WSC} & \textbf{WIC} \\
\midrule
MeZO   & 91.2 & 61.0 & 73.2 & 64.4 & 59.2 \\
\rowcolor{blue!7}
P-GAP   & \textbf{93.0} & \textbf{67.2} & \textbf{75.0} & \textbf{65.8} & \textbf{60.2} \\
\bottomrule
\end{tabular}

\label{tab:llama3-8b}
\end{table}

\begin{table}[H]
\centering
\renewcommand{\arraystretch}{1.1}
\captionsetup{justification=raggedright, singlelinecheck=false}
\caption{Memory and training time comparison of OPT-2.7B on SST-2 dataset (35 tokens per example on average)}

% \vspace{-0.1in}
\label{tab:gpusst}

\begin{tabular}{lccc}
\toprule
\textbf{Method} & \textbf{Memory cost} & \textbf{Iteration step} & \textbf{GPU hours} \\
\midrule
FT         & 45.4G & 9.3\%   & 16.8\% \\
LoRA       & 18.5G & 5.6\%   & 4.3\% \\
\midrule
MeZO       & 6.8G & 100.0\% & 100.0\% \\
HiZOO      & 11.3G & 59.2\%  & 87.4\% \\
P-GAP     & 8.7G & 34.9\%  & 68.0\% \\
\midrule
MeZO+LoRA  & 5.5G  & 74.1\%  & 43.7\% \\
HiZOO+LoRA & 5.7G  & 46.3\%  & 41.0\% \\
P-GAP+LoRA& 5.9G  & 34.7\%  & 29.9\% \\
\bottomrule
\end{tabular}
\vspace{-0.1in}
\end{table}

\begin{table}[H]
\centering
\renewcommand{\arraystretch}{1.1}
\captionsetup{justification=raggedright, singlelinecheck=false}
\caption{Memory and training time comparison of OPT-2.7B on RTE dataset (180 tokens per example on average)}

% \vspace{-0.1in}
\label{tab:gpurte}

\begin{tabular}{lccc}
\toprule
\textbf{Method} & \textbf{Memory cost} & \textbf{Iteration step} & \textbf{GPU hours} \\
\midrule
FT         & 62.2G & 10.0\%   & 16.2\% \\
LoRA       & 42.5G & 8.3\%   & 6.6\% \\
\midrule
MeZO       & 7.8G & 100.0\% & 100.0\% \\
HiZOO      & 13.2G & 63.3\%  & 88.9\% \\
P-GAP     & 10.5G & 24.5\%  & 64.1\% \\
\midrule
MeZO+LoRA  & 7.5G  & 73.3\%  & 34.8\% \\
HiZOO+LoRA & 7.8G  & 56.7\%  & 35.9\% \\
P-GAP+LoRA& 7.6G  & 16.9\%  & 8.7\% \\
\bottomrule
\end{tabular}
\end{table}

\end{document}